\def\1{\bm{1}}
\def\eps{{\epsilon}}
\def\vzero{{\bm{0}}}
\def\va{{\bm{a}}}
\def\vb{{\bm{b}}}
\def\ve{{\bm{e}}}
\def\vg{{\bm{g}}}
\def\vu{{\bm{u}}}
\def\vv{{\bm{v}}}
\def\vw{{\bm{w}}}
\def\vx{{\bm{x}}}
\def\vy{{\bm{y}}}
\def\vz{{\bm{z}}}
\def\mA{{\bm{A}}}
\def\mB{{\bm{B}}}
\def\mD{{\bm{D}}}
\def\mI{{\bm{I}}}
\def\mN{{\bm{N}}}
\def\mZ{{\bm{Z}}}
\DeclareMathAlphabet{\mathsfit}{\encodingdefault}{\sfdefault}{m}{sl}
\SetMathAlphabet{\mathsfit}{bold}{\encodingdefault}{\sfdefault}{bx}{n}
\def\gA{{\mathcal{A}}}
\def\gD{{\mathcal{D}}}
\def\gF{{\mathcal{F}}}
\def\gG{{\mathcal{G}}}
\def\gI{{\mathcal{I}}}
\def\gL{{\mathcal{L}}}
\def\gO{{\mathcal{O}}}
\def\gS{{\mathcal{S}}}
\def\sN{{\mathbb{N}}}
\def\sP{{\mathbb{P}}}
\def\sR{{\mathbb{R}}}
\newcommand{\E}{\mathbb{E}}
\DeclareMathOperator*{\argmin}{arg\,min}
\newtheorem{thm}{Theorem}[section]
\newtheorem{defn}[thm]{Definition}
\newtheorem{lemma}[thm]{Lemma}
\newtheorem{assume}{Assumption}
\newtheorem{remark}[thm]{Remark}
\newtheorem{corollary}[thm]{Corollary}
\newtheorem{prop}[thm]{Proposition}
\newenvironment{proof}{Proof:}{\hfill$\square$}
\definecolor{bgcolor}{rgb}{0.8,1,1}
\definecolor{bgcolor2}{rgb}{0.8,1,0.8}
\newcommand{\norm}[1]{\left\|#1\right\|}
\newcommand{\rom}[1]{(\romannumeral #1)}
\def\geo{\mathrm{Geom}}
\newcommand{\dotprod}[1]{\left\langle #1\right\rangle}
\newcommand{\indicatorr}{\mathbbm{1}}
\newcommand{\svrs}{$\mathrm{SVRS}^{\mathrm{1ep}}$}
\DeclareMathOperator{\prox}{prox}
\newcommand{\myred}[1]{{\color{red} #1}}
\def\hardr{{r}}
\def\hardf{f^{\mathrm{h}}}
\def\harddim{{m}}
\def\hardc{{c}}
\def\hardzeta{{\zeta}}
\def\subspace{{\gF}}
\def\ifo{{h}^\mathrm{I}}
\def\pifo{{h}^\mathrm{P}}
\def\spn{\mathrm{span}}
\title{Stochastic Distributed Optimization under Average Second-order Similarity: Algorithms and Analysis}
\author{Dachao Lin \thanks{Equal Contribution.} \thanks{Academy for Advanced Interdisciplinary Studies; Peking University; \texttt{lindachao@pku.edu.cn};}
\And
Yuze Han \footnotemark[1] \thanks{School of Mathematical Sciences; Peking University; \texttt{hanyuze97@pku.edu.cn};}
\And
Haishan Ye
\thanks{Corresponding Author; School of Management; Xi'an Jiaotong University; SGIT AI Lab, State Grid Corporation of China; \texttt{yehaishan@xjtu.edu.cn};}	
\And
Zhihua Zhang 
\thanks{School of Mathematical Sciences; Peking University; \texttt{zhzhang@math.pku.edu.cn}.}
}
\begin{document}
\maketitle

\begin{abstract}
We study finite-sum distributed optimization problems involving a master node and $n-1$ local nodes under the popular $\delta$-similarity and $\mu$-strong convexity conditions. 
We propose two new algorithms, SVRS and AccSVRS, motivated by previous works. 
The non-accelerated SVRS method combines the techniques of gradient sliding and variance reduction and achieves a better communication complexity of $\tilde{\gO}(n {+} \sqrt{n}\delta/\mu)$ compared to existing non-accelerated algorithms.
Applying the framework proposed in Katyusha X \cite{allen2018katyusha}, we also develop a directly accelerated version named AccSVRS with the $\tilde{\gO}(n {+} n^{3/4}\sqrt{\delta/\mu})$ communication complexity.
In contrast to existing results, our complexity bounds are entirely smoothness-free and exhibit superiority in ill-conditioned cases.
Furthermore, we establish a nearly matched lower bound to verify the tightness of our AccSVRS method.
\end{abstract}

\section{Introduction}
We have witnessed the development of distributed optimization in recent years. 
Distributed optimization aims to cooperatively solve a learning task over a predefined social network by exchanging information exclusively with immediate neighbors.
This class of problems has found extensive applications in various fields, including machine learning, healthcare, network information processing, telecommunications, manufacturing, natural language processing tasks, and multi-agent control \cite{sun2022distributed,kairouz2021advances,nguyen2022federated,liu2021federated,zhang2022multi,banabilah2022federated}. 
In this paper, we focus on 
the following classical finite-sum optimization problem in a centralized setting:
\begin{equation}\label{eq:obj}
    \min_{\vx \in \sR^d} f(\vx):= \frac{1}{n}\sum_{i=1}^{n} f_i(\vx),
\end{equation}
where each $f_i$ is differentiable and corresponds to a client or node, and the target objective is their average function $f$. 
Without loss of generality, we assume $f_1$ is the master node and the others are local nodes.
In each round, every local node can communicate with the master node
certain information, such as the local parameter $\vx$, local gradient $\nabla f_i(\vx)$, and some global information gathered at the master node.
Such a scheme can also be viewed as 
decentralized optimization over a star network \citep{tian2022acceleration}.

Following the wisdom of statistical similarity residing in the data at different nodes, many previous works study scenarios where
the individual functions exhibit relationships or, more specifically, certain homogeneity shared among the local $f_i$'s and $f$.
The most common one is under the $\delta$-second-order similarity assumption \cite{khaled2022faster,kovalev2022optimal}, that is,
\[ 
\norm{\nabla^2 f_i(\vx) - \nabla^2 f(\vx)} \leq \delta, \forall \vx \in \sR^d, i \in \{1,\dots,n\}. 
\]
Such an assumption also has different names in the literature, such as $\delta$-related assumption, bounded Hessian dissimilarity, or function similarity \cite{arjevani2015communication,karimireddy2020scaffold, shamir2014communication,sun2022distributed,zhang2015disco}.
The rigorous definitions are deferred to
Section \ref{sec:pre}.
Moreover, the second-order similarity assumption can hold with a relatively small $\delta$ compared to the smoothness coefficient of $f_i$'s in many practical settings, such as statistical learning.
More insights on this can be found in the discussion presented in \cite[Section 2]{sun2022distributed}. 
The similarity assumption indicates that the data across different clients share common information on the second-order derivative, 
potentially leading to a reduction in communication among clients.
Meanwhile, the cost of communication is often much higher than that of local computation in distributed optimization settings \cite{bekkerman2011scaling,lian2017can,kairouz2021advances}.
Hence, researchers are motivated to develop efficient algorithms characterized by low communication complexity, which is the primary objective of this paper as well.

Furthermore, we need to emphasize that 
prior research \cite{han2021lower,woodworth2016tight,zhou2019lower,defazio2016simple,li2021anita,allen2017katyusha} has shown tightly matched lower and upper bounds on computation complexity for the finite-sum objective in Eq.~\eqref{eq:obj}. 
These works focus on gradient complexity under (average) smoothness \cite{zhou2019lower} instead of communication complexity under similarity.
Indeed, we will also discuss and compare the gradient complexity as shown in \cite{kovalev2022optimal}, to explore the trade-off between communication and gradient complexity.

Although the development of distributed optimization with similarity has lasted for years, the optimal complexity under full participation 
was only recently achieved 
by \citet{kovalev2022optimal}. They employed gradient-sliding \cite{lan2016gradient} and obtained the optimal communication complexity $\tilde{\gO}(n\sqrt{\delta/\mu})$ for $\mu$-strongly convex $f$ and $\delta$-related $f_i$'s in Eq.~\eqref{eq:obj}.
However, the full participation model requires the calculation of the whole gradient $\nabla f(\cdot)$, 
which incurs a communication cost of $n{-}1$
in each round. 
In contrast, partial participation could reduce the communication burden and yield improved complexity.
Hence, \citet{khaled2022faster} introduced client sampling, a technique that selects one client for updating in each round.
They developed a non-accelerated algorithm SVRP, which achieves the communication complexity of
$\tilde{\gO}(n{+}\delta^2/\mu^2)$.
Additionally, they proposed a Catalyzed version of SVRP with the complexity
$\tilde{\gO}(n{+}n^{3/4}\sqrt{\delta/\mu})$, which is better than the rates obtained in the full participation setting.

We believe there are several potential avenues for improvement inspired by \cite{khaled2022faster}. 
1) \citet{khaled2022faster} introduced the requirement that each individual function is strongly convex (see \cite[Assumption 2]{khaled2022faster}). 
However, this constraint is absent in prior works.
Notably, in the context of full participation,
even non-convexity is deemed acceptable\footnote{Readers can check that the proof of \cite{kovalev2022optimal} only requires $f_1(\cdot)+\frac{1}{2\theta}\norm{\cdot}^2$ is strongly convex, which can be guaranteed by $\delta$-second-order similarity since $f$ is $\mu$-strongly convex and $\theta=1/(2\delta)$ therein.}.
A prominent example is the shift-and-invert approach to solving PCA \cite{sorensen2002numerical,garber2016faster}, where each component is smooth and non-convex, but the average function remains convex.
Thus we doubt the necessity of requiring strong convexity for individual components.
2) In hindsight, it seems that the directly accelerated SVRP could only achieve a bound of $\tilde{\gO}(n+\sqrt{n}\cdot\delta/\mu)$ based on the current analysis, which is far from being satisfactory compared to
its Catalyzed version. 
Consequently, there might be room for the development of a more effective algorithm for direct acceleration.
3) It is essential to note that the Catalyst framework introduces an additional log term in the overall complexity,
along with the challenge of parameter tuning.
This aspect is discussed in detail in \cite[Section 1.2]{allen2018katyusha}.
Therefore, we intend to address the aforementioned concerns, particularly on designing directly accelerated methods under the second-order similarity assumption.
\subsection{Main Contributions}
In this paper, we address the above concerns under the average similarity condition. 
Our contributions are presented
in detail below and we provide a comparison with previous works in Table \ref{table:res-com}:
\begin{itemize}
    \item First, we combine gradient sliding and client sampling techniques to develop an improved non-accelerated algorithm named SVRS (Algorithms \ref{algo:l-svrs}). SVRS achieves a communication complexity of $\tilde{\gO}(n+\sqrt{n}\cdot \delta/\mu)$, surpassing SVRP in ill-conditioned cases. 
    Notably, this rate does not need component strong convexity and applies to the function value gap instead of the parameter distance.
    \item Second, building on SVRS, we employ a classical interpolation framework motivated by Katyusha X \cite{allen2018katyusha} to introduce the directly accelerated SVRS (AccSVRS, Algorithm \ref{algo:acc-svrs}).
    AccSVRS achieves the same communication bound of $\tilde{\gO}(n+n^{3/4}\sqrt{\delta/\mu})$ as Catalyzed SVRP.
    Specifically, our bound is entirely smoothness-free and slightly outperforms Catalyzed SVRP, featuring a log improvement and not requiring component strong convexity.
    \item Third, by considering the proximal incremental first-order oracle in the centralized distributed framework, we establish a lower bound, which nearly matches the upper bound of AccSVRS in ill-conditioned cases.
\end{itemize}

\begin{table}[t]
\renewcommand{\arraystretch}{1.6}
\centering
\caption{Comparison of communication under similarity for the strongly convex objective.}
\small
\begin{threeparttable}
    \begin{tabular}{|c|c|c|c|}
        \hline
        &\textbf{Method/Reference} & \textbf{Communication complexity} & \textbf{Assumptions} \\
        \hline
        \multirow{2}{*}{\makecell{No \\ Sampling}} & AccExtragradient \cite{kovalev2022optimal} & $\gO \left( n\sqrt{\frac{\delta}{\mu}} \log \frac{1}{\varepsilon} \right)$  & SS only for $f_1$ \\
        \cline{2-4}
        &Lower bound \cite{arjevani2015communication} & $\Omega \left( n\sqrt{\frac{\delta}{\mu}} \log \frac{1}{\varepsilon} \right)$ & SS for $f_i$'s \\
        \hline
        \multirow{5}{*}{\makecell{Client \\ Sampling}} & SVRP \cite{khaled2022faster} & $\gO \left(\left(n+\frac{\delta^2}{\mu^2}\right) \log \frac{1}{\varepsilon} \right)$\tnote{{\color{blue}(1)}} & SC for $f_i$'s, AveSS \\
        \cline{2-4}
        & Catalyzed SVRP \cite{khaled2022faster} & $\gO \left(\left(n+n^{3/4}\sqrt{\frac{\delta}{\mu}}\right) \log \frac{1}{\varepsilon} \;\myred{ \log\frac{L}{\mu}}\right)$\tnote{{\color{blue}(2)}} \quad \qquad & SC for $f_i$'s, AveSS \\
        \cline{2-4}
        & SVRS (Thm \ref{thm:svrs-rate}) & $\gO \left(\left(n+\sqrt{n} \cdot\frac{\delta}{\mu}\right) \log \frac{1}{\varepsilon} \right)$ & AveSS \\
        \cline{2-4}
        & AccSVRS (Thm \ref{thm:accsvrs-rate}) & $\gO \left(\left(n+n^{3/4} \sqrt{\frac{\delta}{\mu}}\right) \log \frac{1}{\varepsilon} \right)$ & AveSS \\
        \cline{2-4}
        & Lower bound (Thm \ref{thm:lower}) & $\Omega \left(n+n^{3/4} \sqrt{\frac{\delta}{\mu}}\log \frac{1}{\varepsilon} \right)$\tnote{{\color{blue}(3)}} & AveSS \\
        \hline
    \end{tabular}   
    \begin{tablenotes}
    {\scriptsize
        \item [] \tnote{{\color{blue}(1)}} The rate only applies to $\E\norm{\vx_k-\vx_*}^2$, otherwise it would introduce $L$ in the log term; \tnote{{\color{blue}(2)}} The term $\log(L/\mu)$ comes from the Catalyst framework. See Appendix \ref{app:Cata-svrp} for the detail. \tnote{{\color{blue}(2, 3)}} Here we only list the rates of the common ill-conditioned case: $\mu = \gO(\delta/\sqrt{n})$. See Appendices for the remaining case.
        {\em Notation:} $\delta$=similarity parameter (both for SS and AveSS), $L$=smoothness constant of $f$, $\mu$=strong convexity constant of $f$(or $f_i$'s), $\varepsilon$=error of the solution for $\E f(\vx_k){-}f(\vx_*)$. Here $L\geq \delta \geq \mu \gg \epsilon > 0$.
        {\em Abbreviation:}	SC=strong convexity, SS=second-order similarity, AveSS=average SS. 
    }
    \end{tablenotes}
\end{threeparttable}\label{table:res-com}
\end{table}

\subsection{Related Work}
\paragraph{Gradient sliding/Oracle Complexity Separation.}
For optimization problems with a separated structure or multiple building blocks, such as Eq.~\eqref{eq:obj},  there are scenarios where computing the gradients/values of some parts (or the whole) is more expensive than the others (or a partial one).    
In response to this challenge, techniques such as the gradient-sliding method \cite{lan2016gradient} and the concept of oracle complexity separation \cite{ivanova2022oracle} have emerged.
These methods advocate for the infrequent use of more expensive oracles compared to their less resource-intensive counterparts.
This strategy has found applications in
zero-order \cite{beznosikov2020derivative,dvinskikh2020accelerated,ivanova2022oracle,stepanov2021one},
first-order
\cite{lan2016gradient,lan2022accelerated,lan2016conditional,kamzolov2020optimal} and 
high-order methods \cite{kamzolov2020optimal,grapiglia2022adaptive,ahookhosh2021high}, as well as in addressing
saddle point problems
\cite{alkousa2020accelerated,beznosikov2021distributed}. 
Our algorithms can be viewed as a variance-reduced version of gradient sliding tailored to leverage the similarity assumption.
\paragraph{Distributed optimization under similarity.} 
Distributed optimization has a long history with a plethora of existing 
works and surveys. 
To streamline our discussion,
we only list the most relevant references, particularly under the similarity and strong convexity assumptions.
In the full participation setting, 
which involves deterministic methods,
the first algorithm credits to DANE \cite{shamir2014communication}, though its analysis is limited to quadratic objectives. 
Subsequently, AIDE \cite{reddi2016aide}, DANE-LS and DANE-HB \cite{yuan2020convergence} improved the rates for quadratic objective;
Disco \cite{zhang2015disco} SPAG \cite{hendrikx2020statistically}, ACN \cite{agafonov2021accelerated} and DiRegINA \cite{daneshmand2021newton} improved the rates for self-concordant objectives.
As for general strongly convex objectives, \citet{sun2022distributed} introduced the SONATA algorithm, and
\citet{tian2022acceleration} proposed accelerated SONATA.
However, their complexity bounds include additional log factors. 
These factors have recently been removed by Accelerated Extragradient \cite{kovalev2022optimal}, whose complexity bound perfectly matches the lower bound in \cite{arjevani2015communication}.
We highly recommend the comparison of rates in \cite[Table 1]{kovalev2022optimal} for a comprehensive overview.
Once the discussion of deterministic methods is concluded,
\citet{khaled2022faster}  shifted their focus to  
stochastic methods using client sampling. 
They proposed SVRP and its Catalyzed version, both of which exhibited superior rates compared to deterministic methods.


\section{Preliminaries}\label{sec:pre}
\paragraph{Notation.} 
We denote vectors by lowercase bold letters (e.g., $ \vw, \vx$), and matrices by capital bold letters (e.g., $ \mA, \mB$).
We let $\|\cdot\|$ be the $\ell_2$-norm for vectors, or induced $\ell_2$-norm for a given matrix: $\norm{\mA} = \sup_{\vu\neq 0} \norm{\mA\vu}/\norm{\vu}$. 
We abbreviate $[n]=\{1, \ldots, n\}$ and $\mI_d\in \sR^{d \times d}$ is the identity matrix.
We use $\vzero$ for the all-zero vector/matrix, whose size will be specified by a subscript, if necessary, and otherwise is clear from the context.
We denote $\mathrm{Unif}(\gS)$ as the uniform distribution over set $\gS$. 
We say $T \sim \geo(p)$ for $p \in (0,1]$ if $\sP(T=k) = (1-p)^{k-1} p, \forall k \in \{1,2,\dots\}$, i.e., $T$ obeys a geometric distribution. We adopt $\E_k$ as the expectation for all randomness appeared in step $k$, and $\indicatorr_{A}$ as the indicator function on event $A$, i.e., $\indicatorr_{A}=1$ if event $A$ holds, and $0$ otherwise.
We use $\gO(\cdot), \Omega(\cdot), \Theta(\cdot)$ and $\tilde{\gO}(\cdot)$ notation to hide universal constants and log-factors.
We define the Bregman divergence induced by a differentiable (convex) function $h\colon \sR^d \to \sR$ as $D_h(\vx,\vy):= h(\vx)-h(\vy)-\dotprod{\nabla h(\vy),\vx-\vy}$.

\paragraph{Definitions.} 
We present the following common definitions used in this paper.
\begin{defn}\label{defn:sc}
    A differentiable function $g\colon\sR^d\to\sR$ is $\mu$-strongly convex (SC) if
    \begin{equation}\label{eq:sc}
        g(\vy) \geq g(\vx) + \dotprod{\nabla g(\vx), \vy-\vx} +\frac{\mu}{2} \norm{\vy-\vx}^2, \forall \vx,\vy \in \sR^d.
    \end{equation}
    Particularly, if $\mu=0$, we say that $g$ is convex.
\end{defn}
\begin{defn}\label{defn:sm}
    A differentiable function $g\colon\sR^d\to\sR$ is $L$-smooth if
    \begin{equation}\label{eq:sm}
        g(\vy) \leq g(\vx) + \dotprod{\nabla g(\vx), \vy-\vx} + \frac{L}{2} \norm{\vy-\vx}^2, \forall \vx,\vy \in \sR^d.
    \end{equation}
\end{defn}

There are many basic inequalities involving strong convexity and smoothness, see \cite[Appendix A.1]{d2021acceleration} for an introduction.
Next, we present the definition of second-order similarity in distributed optimization.
\begin{defn}\label{defn:ave-ss}
    The differentiable functions $f_i$'s satisfy $\delta$-average second-order similarity (AveSS) if the following inequality holds for $f_i$'s and $f = \frac{1}{n}\sum_{i=1}^n f_i$: 
    \begin{equation}\label{eq:avess}
        \textit{(AveSS)} \quad \frac{1}{n} \sum_{i=1}^n \norm{\left[\nabla [f_i-f](\vx)- \nabla [f_i-f](\vy)\right]}^2 \leq \delta^2 \norm{\vx-\vy}^2, \forall \vx, \vy \in \sR^d.
    \end{equation}
\end{defn}
\begin{defn}\label{defn:ss}
    The differentiable functions $f_i$'s satisfy $\delta$-component second-order similarity (SS) if the following inequality holds for $f_i$'s and $f = \frac{1}{n}\sum_{i=1}^n f_i$:
    \begin{equation}\label{eq:ss}
        \textit{(SS)} \quad \norm{\left[\nabla [f_i-f](\vx)- \nabla [f_i-f](\vy)\right]}^2 \leq \delta^2 \norm{\vx-\vy}^2, \forall \vx, \vy \in \sR^d, i \in [n].
    \end{equation}
\end{defn}

Definitions \ref{defn:ave-ss} and \ref{defn:ss} first appear in \cite{khaled2022faster}, which is an analogy to (average) smoothness in prior literature \cite{zhou2019lower}.
Particularly, $f_i$'s satisfy $\delta$-AveSS implies that $(f{-}f_i)$'s satisfy $\delta$-average smoothness, while $f_i$'s satisfy $\delta$-SS implies that $(f{-}f_i)$'s satisfy $\delta$-smoothness.
Additionally, many researchers \cite{karimireddy2020scaffold,arjevani2015communication,shamir2014communication,zhang2015disco,sun2022distributed,kovalev2022optimal} use the equivalent one defined by Hessian similarity (HS) if assuming that $f_i$'s are twice differentiable. Thus we also list them below and leave the derivation in Appendix \ref{app:dev-hs}.
\begin{equation}\label{eq:hes-ss}
    \text{(AveHS)} \ \norm{\frac{1}{n}\sum_{i=1}^n\left[\nabla^2 f_i(\vx)-\nabla^2 f(\vx)\right]^2}\leq \delta^2;   \text{(HS)} \ \norm{\nabla^2 f_i(\vx) - \nabla^2 f(\vx)} \leq \delta, \forall i \in [n].
\end{equation}
Since our algorithm is a first-order method, we adopt the gradient description of similarity (Definitions \ref{defn:ave-ss} and \ref{defn:ss}) without assuming twice differentiability for brevity.

As mentioned in \cite{arjevani2015communication,sun2022distributed}, if $f_i$'s satisfy $\delta$-AveSS (or SS), and $f$ is $\mu$-strongly convex and $L$-smooth, then generally $L \gg \delta \gg \mu > 0$ for large datasets in practice. 
Therefore, researchers aim to develop algorithms that
achieve communication complexity solely related to $\delta, \mu$ (or log terms of $L$).
This is also our objective.
To finish this section, 
we will clarify several straightforward yet essential propositions,
and the proofs are deferred to Appendix \ref{app:aux}.
\begin{prop}\label{prop:ss-sc}
    We have the following properties among SS, AveSS, and SC:
    1) $\delta$-SS  implies $\delta$-AveSS, but $\delta$-AveSS only implies $\sqrt{n}\delta$-SS. 
    2) If $f_i$'s satisfy $\delta$-SS and $f$ is $\mu$-strongly convex, then for all $i \in [n], f_i(\cdot)+\frac{\delta-\mu}{2}\norm{\cdot}^2$ is convex, i.e., $f_i$ is $(\delta-\mu)$-almost convex \cite{carmon2018accelerated}.
\end{prop}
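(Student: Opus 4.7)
The plan is to handle the two parts of Proposition~\ref{prop:ss-sc} separately; both are direct and the work is essentially bookkeeping rather than any deep step.

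For part 1, the implication $\delta$-SS $\Rightarrow$ $\delta$-AveSS is immediate: under SS each summand on the left of \eqref{eq:avess} is bounded by $\delta^{2}\|\vx-\vy\|^{2}$, and averaging preserves this bound. For the converse direction, starting from $\delta$-AveSS I would drop the $1/n$ factor and keep only a single index $i$ in the sum, which yields
\[
\|\nabla[f_i-f](\vx)-\nabla[f_i-f](\vy)\|^{2}\le n\delta^{2}\|\vx-\vy\|^{2},
\]
i.e.\ $\sqrt{n}\delta$-SS. To show the factor $\sqrt{n}$ is unimprovable in general, I would exhibit a tight example; for instance, take $f_1(\vx)=\tfrac{a}{2}\|\vx\|^{2}$ and $f_i\equiv 0$ for $i\ge 2$. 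A short calculation gives an AveSS constant of order $a/\sqrt{n}$ while the SS constant for $f_1-f$ is of order $a$, i.e.\ $\sqrt{n}$ times larger.

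For part 2, the key decomposition is $f_i=(f_i-f)+f$. The hypothesis $\delta$-SS asserts exactly that $(f_i-f)$ has $\delta$-Lipschitz gradient, and I would use the standard fact that any function $g$ with $L$-Lipschitz gradient satisfies that $g+\tfrac{L}{2}\|\cdot\|^{2}$ is convex (verified in one line: $\langle\nabla g(\vx)-\nabla g(\vy),\vx-\vy\rangle\ge -L\|\vx-\vy\|^{2}$ by Cauchy--Schwarz, so the perturbed gradient is monotone). Applied to $g=f_i-f$ and $L=\delta$, this yields convexity of $(f_i-f)+\tfrac{\delta}{2}\|\cdot\|^{2}$. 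Combined with $\mu$-strong convexity of $f$ (i.e.\ convexity of $f-\tfrac{\mu}{2}\|\cdot\|^{2}$), summing the two convex functions gives
\[
f_i+\tfrac{\delta-\mu}{2}\|\cdot\|^{2}=\Bigl[(f_i-f)+\tfrac{\delta}{2}\|\cdot\|^{2}\Bigr]+\Bigl[f-\tfrac{\mu}{2}\|\cdot\|^{2}\Bigr],
\]
which is convex, as desired.

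There is no genuine obstacle here; the only place requiring a moment of care is the converse in part 1, where one must remember to produce an explicit example (rather than just a looser inequality) in order to justify the word ``only''. The rest reduces to applying definitions and one elementary monotonicity argument.
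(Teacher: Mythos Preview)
Your proposal is correct and follows essentially the same route as the paper: the same single-term bound for AveSS $\Rightarrow$ $\sqrt{n}\delta$-SS, and the same decomposition $f_i+\tfrac{\delta-\mu}{2}\norm{\cdot}^2=\bigl[(f_i-f)+\tfrac{\delta}{2}\norm{\cdot}^2\bigr]+\bigl[f-\tfrac{\mu}{2}\norm{\cdot}^2\bigr]$ for part~2. The only difference is that you additionally supply a tightness example for the $\sqrt{n}$ factor; the paper does not do this and simply reads ``only implies $\sqrt{n}\delta$-SS'' as the implication itself, so your extra example is a bonus rather than a requirement.
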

\section{Algorithm and Theory}\label{sec:algo-theory}
In this section, we introduce our main algorithms,
which are developed
to solve the distributed optimization problem in Eq.~\eqref{eq:obj} under Assumption \ref{ass:1} below: 
\begin{assume}\label{ass:1}
    We assume that $f_i$'s satisfy $\delta$-AveSS, and $f$ is $\mu$-strongly convex with $\delta \geq \mu > 0$.
\end{assume}

Assumption \ref{ass:1} does not need each $f_i$ to be $\mu$-strongly convex. 
In fact, it is acceptable that $f_i$'s are non-convex, since by Proposition \ref{prop:ss-sc}, $f_i$'s are $(\sqrt{n}\delta{-}\mu)$-almost convex \cite{carmon2018accelerated}.
In the following, we first propose our new algorithm SVRS, which combines the techniques of gradient sliding and variance reduction, resulting in improved rates. 
Then we establish the directly accelerated method motivated by \cite{allen2018katyusha}. 

\subsection{No Acceleration Version: SVRS}\label{sec:svrs}

We first show the one-epoch Stochastic Variance-Reduced Sliding (\svrs) method in Algorithm~\ref{algo:l-svrs}. 
Before delving into the theoretical analysis, we present some key insights into our method.
These insights aim to enhance comprehension and facilitate connections with other algorithms.

\paragraph{Variance Reduction.} Our algorithm can be viewed as adding variance reduction from \cite{kovalev2022optimal}.
Besides the acceleration step, the main difference lies in the proximal step, where \citet{kovalev2022optimal} solved:
\[ \vx_{t+1} \approx \argmin_{\vx \in \sR^d} B_{\theta}^t(\vx):=\dotprod{\nabla f(\vx_t)-\nabla f_{1}(\vx_t), \vx-\vx_t} + \frac{1}{2\theta}\norm{\vx-\vx_t}^2+f_1(\vx). \]
To save the heavy communication burden of calculating $\nabla f(\vx_t)$, we apply client sampling by selecting a random 
$\nabla f_{i_t}(\vx_t)$ in the $t$-th step.
However, this substitution introduces significant noise.
To mitigate this, we incorporate a correction term
$\vg_t=\nabla f_{i_t}(\vw_0)-\nabla f(\vw_0)$ from previous wisdom 
\cite{johnson2013accelerating} to reduce the variance.

\paragraph{Gradient sliding.} Our algorithm can be viewed as adding gradient sliding from SVRP \cite{khaled2022faster}.
The main difference also lies in the proximal point problem, where \citet{khaled2022faster} solved:
\[ \vx_{t+1} \approx \argmin_{\vx \in \sR^d} C_{\theta}^t(\vx):=\dotprod{-\vg_t, \vx-\vx_t} + \frac{1}{2\theta}\norm{\vx-\vx_t}^2+f_{i_t}(\vx). \]
Here we adopt a fixed proximal function $f_1$ instead of $f_{i_t}$, which can be viewed as approximating
$ f_{i_t}(\vx) {\approx} f_{1}(\vx) {+} [f_{i_t} {-} f_1](\vx_t) {+} \dotprod{\nabla [f_{i_t} {-} f_1](\vx_t), \vx {-} \vx_t} {+} \frac{1}{2\theta'} \norm{\vx {-} \vx_t}^2$ with a properly chosen $\theta' > 0$.
Such a modification is motivated by \cite{kovalev2022optimal}, where they reformulated the objective as $f(\vx)=[f(\vx)-f_1(\vx)]+f_1(\vx)$.
Thus they could employ gradient sliding to skip heavy computations of $\nabla [f-f_1](\vx)$ by utilizing the easy computations of $\nabla f_1(\vx)$ more times.
Fixing the proximal function $f_1$ leads to the same metric space owned by $f_1$ in each step, which could benefit the analysis and alleviate the requirements on $f_i$'s compared to SVRP. 
Indeed, in our setting $f_1$ can be replaced by any other \textbf{fixed} client $f_b, b \in [n]$. In this case, the master node would be $f_b$ instead of $f_1$.


\paragraph{Bregman-SVRG.}
Our algorithm can be viewed as the classical Bregman-SVRG \cite{dragomir2021fast} with the reference function $f_1(\cdot)+\frac{1}{2\theta}\norm{\cdot}^2$ after introducing the Bregman divergence:
\begin{equation*}
\begin{aligned}
    \vx_{t+1} & \approx \argmin_{\vx \in \sR^d} A_{\theta}^t(\vx) \stackrel{\eqref{eq:alg_prox}}{=} \argmin_{\vx \in \sR^d} \dotprod{\nabla f_{i_t}(\vx_t)-\nabla [f_{i_t}-f](\vw_0), \vx-\vx_t} + D_{f_1(\cdot)+\frac{1}{2\theta}\norm{\cdot}^2}(\vx,\vx_t).
\end{aligned}
\end{equation*}
We need to emphasize that the proof of Bregman-SVRG requires additional structural assumptions \cite[Assumpotion 3]{dragomir2021fast}, which is not directly applicable in our setting.
Hence, the rigorous proof of Bregman-SVRG under our similarity assumption is still meaningful as far as we are concerned.  

\begin{algorithm}[t]
    \caption{\svrs$(f,\vw_0,\theta,p)$}
    \begin{algorithmic}[1]\label{algo:l-svrs}
        \STATE \textbf{Input:} $\vw_0 \in\sR^d$, $p \in (0, 1), \theta>0$
        \STATE Initialize $\vx_0 = \vw_0$, compute $\nabla f(\vw_0)$, and set $T \sim \geo\left(p\right)$
        \FOR{$ t = 0, 1, 2, \dots, T-1$}
        \STATE Sample $i_t \sim \text{Unif}([n])$ and compute $\vg_t = \nabla f_{i_t}(\vw_0)-\nabla f(\vw_0)$
        \STATE Approximately solve the local proximal point problem:
        \begin{align}\label{eq:alg_prox}          
        \vx_{t+1} \approx \argmin_{\vx \in \sR^d} A_{\theta}^t(\vx):=\dotprod{\nabla f_{i_t}(\vx_t)-\nabla f_{1}(\vx_t) - \vg_t , \vx-\vx_t} + \frac{1}{2\theta}\norm{\vx-\vx_t}^2+f_1(\vx) 
        \end{align}
        \ENDFOR
        \STATE \textbf{Output:} $\vx_{T}$
    \end{algorithmic}
\end{algorithm}

\subsubsection{Communication Complexity under Distributed Settings}\label{sec:com-com}
When applied to the distributed system, the communication complexity of \svrs can be described as follows:
At the beginning of each epoch, the master (corresponding to $f_1$) sends $\vw_0$ to all clients. Each client computes $\nabla f_i(\vw_0)$ from its local data and sends it back to the master. 
The master then builds $\nabla f(\vw_0)$ after collecting all $\nabla f_i(\vw_0)$'s.
The communication complexity is $2(n-1)$ in this case.
Next, the algorithm enters into the loop iterations. 
In each iteration, the master only sends current $\vx_t$ to the chosen client $i_t$. The $i_t$-th client computes $\nabla f_{i_t}(\vx_t)$ and sends it to the master (the first client). 
Then the master solves (inexactly) the local problem (Line 5 in Algorithm \ref{algo:l-svrs}) to get an inexact solution $\vx_{t+1}$.
The communication complexity is $2$ in this case.
Thus, the total communication complexity of \svrs \ is $2(n-1)+2T$. Note that $\E T = 1/p$ and generally $p=1/n$. 
We obtain that one epoch communication complexity is $4n-2$ in expectation.

We would like to emphasize that our setup differs from that in \cite{levy2023slowcal,mishchenko2022proxskip}, where the authors assume the nodes can perform calculations and transmit vectors in parallel.
We recognize the significance of both setups.
However, there are situations where communication is more expensive than computation.
For instance, in a business network or communication network the communication between any two nodes can result in charges and the risk of information leakage.
To mitigate these costs, we should reduce the frequency of communication.
Thus, we focus on the nonparallel setting.

\subsubsection{Convergence Analysis of SVRS}
Based on the one-epoch method \svrs, we could introduce our non-accelerated algorithm SVRS, 
which starts from $\vw_0 \in \sR^d$ and repeatedly
performs the update\footnote{See Algorithm~\ref{algo:svrs} in Appendix~\ref{app:proof_upper} for the details.} 
\[ \vw_{k+1} = \mathrm{SVRS}^{\mathrm{1ep}}(f, \vw_k, \theta, p), \ \forall k \geq 0. \] 
Now we derive the convergence rate of SVRS\footnote{Similar results for the popular loopless version \cite{kovalev2020don} can also be derived, see Appendix \ref{app:loopless-svrs} for the detail.}.
The main technique we apply is replacing the Euclidean distance with the Bergman divergence.
Denote the reference function
\begin{equation}\label{eq:func-h}
    h(\vx):=  f_1(\vx)+\frac{1}{2\theta}\norm{\vx}^2-f(\vx).
\end{equation}
By Assumption \ref{ass:1} and 1) in Proposition \ref{prop:ss-sc}, we see that $f_i$'s are $\sqrt{n}\delta$-SS. i.e., $[f_1-f](\cdot)$ is $(\sqrt{n}\delta)$-smooth.
Thus, $h(\cdot)$ is $(\frac{1}{\theta}{-}\sqrt{n}\delta)$-strongly convex and $(\frac{1}{\theta}{+}\sqrt{n}\delta)$-smooth if $\theta < \frac{1}{\sqrt{n}\delta}$, that is,
\begin{equation}\label{eq:div-con}
    0 \leq \frac{1-\sqrt{n}\theta\delta}{2\theta} \norm{\vx-\vy}^2 \stackrel{\eqref{eq:sc}}{\leq} D_{h}(\vx,\vy) \stackrel{\eqref{eq:sm}}{\leq} \frac{1+\sqrt{n}\theta\delta}{2\theta} \norm{\vx-\vy}^2.
\end{equation}
Hence, if $\sqrt{n}\theta\delta = \Theta(1)$, $h(\cdot)$ is nearly a rescaled Euclidean norm since its condition number related to $\norm{\cdot}$ is $\frac{1+\sqrt{n}\theta\delta}{1-\sqrt{n}\theta\delta} = \Theta(1)$.
Next, we employ the properties of the Bregman divergence $D_h(\cdot,\cdot)$ 
to build the one-epoch progress of \svrs as shown below: 
\begin{lemma}\label{lemma:one-step-loop}
    Suppose Assumption \ref{ass:1} holds. Let $\vw^+ =$ \svrs $(f,\vw_0,\theta,p)$ with $\theta = 1/(4\sqrt{n}\delta)$, and the approximated solution $\vx_{t+1}$ satisfies 
    \begin{equation}\label{eq:cond-app1}
        \norm{\nabla A_{\theta}^t(\vx_{t+1})}^2 \leq \frac{\mu}{20\theta} \norm{\vx_t-\argmin_{\vx \in\sR^d} A_{\theta}^{t}(\vx)}^2, \forall t \geq 0.
    \end{equation}
    Then for all $\vx \in \sR^d$ that is independent of the indices $i_1,i_2,\dots,i_T$ in \svrs$(f,\vw_0,\theta,p)$, we have
    \begin{equation}\label{eq:svrs-one}
        \E f(\vw^+) - f(\vx) \leq \E~ p \langle \vx - \vw_0, \nabla h(\vw^+) - \nabla h(\vw_0) \rangle - \big(p-\frac{2}{9n}\big) D_h(\vw_0, \vw^+) - \frac{2\mu\theta}{5}D_{h}(\vx, \vw^+).
    \end{equation}
\end{lemma}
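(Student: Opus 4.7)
The plan is to recognize Algorithm~\ref{algo:l-svrs} as a stochastic Bregman proximal scheme in the geometry of $h$, to run a one-step Bregman--SVRG analysis, and then to average over the random stopping time. The key algebraic identity, which follows from $f_1+\frac{1}{2\theta}\|\cdot\|^2 = h+f$, is
\begin{equation*}
A_\theta^t(\vx) = \dotprod{\tilde\vg_t - \nabla f(\vx_t),\ \vx-\vx_t} + f(\vx) + D_h(\vx,\vx_t) + C_t,
\end{equation*}
where $\tilde\vg_t := \nabla f_{i_t}(\vx_t) - \vg_t$ is an unbiased estimator of $\nabla f(\vx_t)$ and $C_t$ is independent of $\vx$. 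This exposes both the variance-reduced structure and the Bregman reference function. With $\theta=1/(4\sqrt{n}\delta)$, Eq.~\eqref{eq:div-con} gives $\tfrac{3}{8\theta}\|\cdot\|^2\leq D_h(\cdot,\cdot)\leq\tfrac{5}{8\theta}\|\cdot\|^2$, so in particular $D_f \geq \tfrac{4\mu\theta}{5}D_h$ by $\mu$-strong convexity of $f$.

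Next I would carry out the one-step analysis. Since $A_\theta^t$ differs from $f(\cdot)+D_h(\cdot,\vx_t)$ only by a linear term, its own Bregman divergence at the exact minimizer $\bar\vx_{t+1}$ equals $D_f(\vx,\bar\vx_{t+1})+D_h(\vx,\bar\vx_{t+1})$. Optimality and convexity then yield
\begin{equation*}
f(\bar\vx_{t+1})-f(\vx) + D_h(\bar\vx_{t+1},\vx_t) + D_f(\vx,\bar\vx_{t+1}) + D_h(\vx,\bar\vx_{t+1}) \leq \dotprod{\tilde\vg_t-\nabla f(\vx_t),\ \vx-\bar\vx_{t+1}} + D_h(\vx,\vx_t).
\end{equation*}
Splitting the inner product as $\dotprod{\tilde\vg_t-\nabla f(\vx_t),\ \vx-\vx_t}+\dotprod{\tilde\vg_t-\nabla f(\vx_t),\ \vx_t-\bar\vx_{t+1}}$, the first piece vanishes in $\E_{i_t}$ because $\vx$ is independent of $i_t$ and $\E_{i_t}\tilde\vg_t = \nabla f(\vx_t)$, while the second is controlled by Young's inequality with parameter $\alpha=8\theta/3$, producing $\tfrac{1}{2}D_h(\bar\vx_{t+1},\vx_t)$ together with a variance term. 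The $\delta$-AveSS assumption bounds the variance by $\E_{i_t}\|\tilde\vg_t-\nabla f(\vx_t)\|^2\leq \delta^2\|\vx_t-\vw_0\|^2 \leq \tfrac{8\theta\delta^2}{3}D_h(\vw_0,\vx_t)$, and with $\theta^2\delta^2=1/(16n)$ this particular $\alpha$ makes the constants line up to the clean per-step bound
\begin{equation*}
\E_{i_t}[f(\bar\vx_{t+1})-f(\vx)] + (1+\tfrac{4\mu\theta}{5})\E_{i_t}D_h(\vx,\bar\vx_{t+1}) \leq D_h(\vx,\vx_t)+\tfrac{2}{9n}D_h(\vw_0,\vx_t).
\end{equation*}

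To aggregate this into a one-epoch statement, I multiply by $(1-p)^t$, take total expectation, and sum over $t\geq 0$. For $T\sim\geo(p)$ independent of the indices, the identities $\sum_{t\geq 0}(1-p)^t\E a_{t+1} = \tfrac{1}{p}\E a_T$ and $\sum_{t\geq 0}(1-p)^t\E a_t = a_0 + \tfrac{1-p}{p}\E a_T$ convert the $\vx_{t+1}$-terms into $\tfrac{1}{p}\E[\cdot]_{\vw^+}$ and handle the $\vx_t$-terms using $\vx_0=\vw_0$. Multiplying through by $p$ and using $(1-p)\leq 1$ then yields $\E[f(\vw^+)-f(\vx)] + (p+\tfrac{2\mu\theta}{5})\E D_h(\vx,\vw^+) \leq pD_h(\vx,\vw_0)+\tfrac{2}{9n}\E D_h(\vw_0,\vw^+)$, which rearranges to the stated form via the three-point identity $\dotprod{\nabla h(\vw^+)-\nabla h(\vw_0),\,\vx-\vw_0}=D_h(\vx,\vw_0)-D_h(\vx,\vw^+)+D_h(\vw_0,\vw^+)$.

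The main obstacle is handling the inexact solution $\vx_{t+1}$. Replacing $\bar\vx_{t+1}$ by $\vx_{t+1}$ injects an extra $\dotprod{\nabla A_\theta^t(\vx_{t+1}),\,\vx_{t+1}-\vx}$ on the right. I would split $D_f(\vx,\vx_{t+1})$ in half: the first half absorbs the $\tfrac{\mu}{4}\|\vx-\vx_{t+1}\|^2$ term coming from Young's inequality with parameter $\mu/2$ on the error (since $\tfrac{1}{2}D_f\geq\tfrac{\mu}{4}\|\cdot\|^2$), and the second half contributes $\tfrac{1}{2}\cdot\tfrac{4\mu\theta}{5}D_h=\tfrac{2\mu\theta}{5}D_h(\vx,\vx_{t+1})$ to the LHS, which explains why the lemma states $\tfrac{2\mu\theta}{5}$ rather than $\tfrac{4\mu\theta}{5}$. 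The residual $\tfrac{1}{\mu}\|\nabla A_\theta^t(\vx_{t+1})\|^2$ is controlled by the given approximation condition and the $\tfrac{3}{4\theta}$-strong convexity of $A_\theta^t$, which gives $\|\vx_{t+1}-\bar\vx_{t+1}\|^2\leq\tfrac{4\mu\theta}{45}\|\vx_t-\bar\vx_{t+1}\|^2$; a triangle inequality then bounds $\|\vx_t-\bar\vx_{t+1}\|^2$ by a constant multiple of $\|\vx_t-\vx_{t+1}\|^2$, which is swallowed by the $\tfrac{1}{2}D_h(\bar\vx_{t+1},\vx_t)$ slack produced earlier. The constants $1/20$ and $2/9$ in the lemma are engineered precisely so that this absorption succeeds.
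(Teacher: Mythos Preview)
Your proposal is correct and follows essentially the same approach as the paper. The paper first proves a general one-step inequality (Lemma~D.1) starting from $\mu$-strong convexity of $f$ and substituting $\nabla f(\vx_{t+1})$ via Eq.~\eqref{eq:grad-A1}, then specializes to $\theta=1/(4\sqrt{n}\delta)$ and aggregates using Fact~2.3 of \cite{allen2018katyusha}; you instead expand $A_\theta^t$ as $f+D_h$ plus a linear term, use the Bregman identity $A_\theta^t(\vx)-A_\theta^t(\vx_{t+1})=D_f(\vx,\vx_{t+1})+D_h(\vx,\vx_{t+1})+\langle\nabla A_\theta^t(\vx_{t+1}),\vx-\vx_{t+1}\rangle$, and aggregate by summing the per-step bound against $(1-p)^t$. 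These are two equivalent organizations of the same argument: both use the same Young splittings (variance side with weight $\Theta(\theta)$, error side with weight $\mu/2$), the same AveSS variance bound, and end with the same constants $2\mu\theta/5$ and $2/(9n)$ before the three-point identity. One small slip: in the last paragraph the slack you absorb into is $\tfrac12 D_h(\vx_{t+1},\vx_t)$, not $\tfrac12 D_h(\bar\vx_{t+1},\vx_t)$, since you have already replaced $\bar\vx_{t+1}$ by $\vx_{t+1}$; your numerics still go through because $\tfrac{9}{125\theta}\le\tfrac{3}{16\theta}$.
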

\begin{remark}
    We note that some papers \cite{beznosikov2022compression, beznosikov2023similarity} assume the smoothness and convexity of component functions, and adopt local updates for solving the proximal step. However, we replace these assumptions with a proximal approximately solvable assumption
    \eqref{eq:cond-app1}, which could even cover some nonsmooth and non-convex but proximal trackable component functions. We regard our assumption as more essential since the local updates can be viewed as partially solving this proximal step.
\end{remark}
The proof of Lemma \ref{lemma:one-step-loop} is left in Appendix \ref{app:one-step-loop}. From Lemma \ref{lemma:one-step-loop}, we find a well-behaved proximal operator is sufficient to
ensure favorable progress.
Finally, we establish the 
convergence rate and 
communication complexity of the SVRS method, and the proof is deferred to Appendix \ref{app:svrs-rate}. 
\begin{thm}\label{thm:svrs-rate}
    Suppose Assumption \ref{ass:1} holds. If in \svrs (Algorithm \ref{algo:l-svrs}), the hyperparameters are set as $\theta = 1/(4 \sqrt{n} \delta), p=1/n$, and the approximate solution $\vx_{t+1}$ in each proximal step satisfies Eq.~\eqref{eq:cond-app1}. 
    Then for any error $\varepsilon>0$, when 
    \[ k \geq K_1 := \max\left\{2, \frac{5\delta}{\mu\sqrt{n}}\right\}\log\frac{3\left(1+\frac{\delta}{\mu\sqrt{n}}\right)[f(\vw_0)-f(\vx_*)]}{\varepsilon}, \]
    i.e., after $\tilde{\gO}(n+\sqrt{n} \delta / \mu)$ communications in expectation, we obtain that $\E f(\vw_k)-f(\vx_*) \leq \varepsilon$. 
\end{thm}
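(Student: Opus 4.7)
The plan is to convert Lemma~\ref{lemma:one-step-loop} into a linear contraction of a Lyapunov potential, then multiply the resulting epoch count by the per-epoch communication cost from Section~\ref{sec:com-com}. The key obstacle is that a single application of the lemma at $\vx = \vx_*$ is not enough: after rewriting the inner product via the three-point Bregman identity
\[
    \langle \nabla h(\vw_{k+1}) - \nabla h(\vw_k),\ \vx_* - \vw_k\rangle = D_h(\vx_*, \vw_k) - D_h(\vx_*, \vw_{k+1}) + D_h(\vw_k, \vw_{k+1}),
\]
one is left with a parasitic cross term $\tfrac{2}{9n}\E D_h(\vw_k, \vw_{k+1})$ on the right that cannot be tamed merely through the condition number of $h$ (only $L_h/\mu_h = 5/3$ under $\theta = 1/(4\sqrt{n}\delta)$) without destroying the contraction in the regime $\mu\sqrt{n} \ll \delta$.

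The remedy is to invoke Lemma~\ref{lemma:one-step-loop} a \emph{second} time with $\vx = \vw_k$. Since $\vw_k$ is independent of the random indices in epoch $k$, the lemma applies; the inner product vanishes, and I obtain
\[
    \bigl(\tfrac{7}{9n} + \tfrac{2\mu\theta}{5}\bigr)\, \E[D_h(\vw_k, \vw_{k+1})\mid \vw_k] \leq f(\vw_k) - \E[f(\vw_{k+1})\mid \vw_k],
\]
which is an exact upper bound on the offending term. Substituting this into the first inequality and collecting terms with $r := (2/(9n))/(7/(9n) + 2\mu\theta/5) \in (0,2/7)$, I arrive at
\[
    \E[f(\vw_{k+1}) - f(\vx_*)\mid \vw_k] + \tfrac{p + 2\mu\theta/5}{1+r}\, \E[D_h(\vx_*, \vw_{k+1})\mid \vw_k] \leq \tfrac{p}{1+r}\, D_h(\vx_*, \vw_k) + \tfrac{r}{1+r}\, [f(\vw_k) - f(\vx_*)].
\]
With the Lyapunov potential $\Phi_k := [f(\vw_k) - f(\vx_*)] + c\, D_h(\vx_*, \vw_k)$ and $c := (p+2\mu\theta/5)/(1+r) = \tfrac{7}{9n} + \tfrac{2\mu\theta}{5}$, this reads $\E[\Phi_{k+1}\mid \vw_k] \leq (1-\rho)\Phi_k$ with
\[
    \rho \;=\; \tfrac{2\mu\theta/5}{p + 2\mu\theta/5} \;=\; \Theta\bigl(\min\{1,\ \mu\sqrt{n}/\delta\}\bigr)
\]
after plugging in $p = 1/n$ and $\theta = 1/(4\sqrt{n}\delta)$, so that $1/\rho = 1 + 10\delta/(\mu\sqrt{n})$.

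Iterating gives $\E\Phi_k \leq (1-\rho)^k \Phi_0$ and $\E[f(\vw_k) - f(\vx_*)] \leq \E\Phi_k$ finishes the convergence. To size $\Phi_0$, the smoothness half of Eq.~\eqref{eq:div-con} combined with $\mu$-strong convexity of $f$ gives $D_h(\vx_*, \vw_0) \leq \tfrac{5\sqrt{n}\delta}{\mu}[f(\vw_0)-f(\vx_*)]$, hence $\Phi_0 = \gO(1 + \delta/(\mu\sqrt{n}))[f(\vw_0)-f(\vx_*)]$, matching the log argument in $K_1$. The epoch count $k \geq \rho^{-1}\log(\Phi_0/\varepsilon) = \gO(\max\{1,\ \delta/(\mu\sqrt{n})\}\log(1/\varepsilon))$ combined with the $4n-2$ expected communications per epoch produces the advertised $\tilde{\gO}(n + \sqrt{n}\delta/\mu)$ complexity. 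The main technical subtlety is the second application at $\vx = \vw_k$: without it, the cross Bregman term can only be crudely bounded by $\tfrac{10}{3}[D_h(\vx_*, \vw_k) + D_h(\vx_*, \vw_{k+1})]$, yielding pollution $\tfrac{20}{27n}$ that overwhelms the contractive coefficient $\tfrac{2\mu\theta}{5}$ whenever $\mu\sqrt{n}$ is much smaller than $\delta$.
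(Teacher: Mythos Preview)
Your approach is essentially the paper's: both apply Lemma~\ref{lemma:one-step-loop} once at $\vx=\vx_*$ and once at $\vx=\vw_k$, then combine to kill the cross term $D_h(\vw_k,\vw_{k+1})$ and obtain a contracting Lyapunov potential. The only cosmetic difference is that the paper \emph{adds} the two inequalities (yielding $\E[2f(\vw_{k+1})-f(\vw_k)-f(\vx_*)]\le pD_h(\vx_*,\vw_k)-(p+\tfrac{2\mu\theta}{5})D_h(\vx_*,\vw_{k+1})$ after dropping the now-negative $D_h(\vw_k,\vw_{k+1})$ term) rather than substituting one into the other; this gives the potential $\Phi_k=[f(\vw_k)-f(\vx_*)]+\tfrac12(p+\tfrac{2\mu\theta}{5})D_h(\vx_*,\vw_k)$ and the contraction factor $\max\{\tfrac12,(1+\tfrac{2\mu\theta}{5p})^{-1}\}$, which produces exactly the constants in $K_1$.

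One small caveat on your write-up: your displayed inequality only shows the $D_h$-coefficient contracts by $1-\rho$ with $\rho=\tfrac{2\mu\theta/5}{p+2\mu\theta/5}$; the function-value coefficient is $\tfrac{r}{1+r}$, and when $\mu\sqrt{n}/\delta$ is large (roughly $>35$) one has $\tfrac{r}{1+r}>1-\rho$. The true one-epoch factor is therefore $\max\{\tfrac{r}{1+r},\,1-\rho\}$, which is still $\le\max\{\tfrac{2}{9},1-\rho\}$ and hence gives the claimed $\tilde{\gO}(n+\sqrt n\,\delta/\mu)$ complexity—so the conclusion stands, but the intermediate sentence ``this reads $\E[\Phi_{k+1}\mid\vw_k]\le(1-\rho)\Phi_k$'' needs that max.
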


\begin{remark}
    Our results enjoy the following advantages over SVRP \cite{khaled2022faster}:
    The convergence of SVRP (\cite[Theorem 2]{khaled2022faster}) only applied to $\E \norm{\vw_k-\vx_*}^2$, which can also be derived by our results from strong convexity: $f(\vw_k)-f(\vx_*) \geq \frac{\mu}{2}\norm{\vw_k-\vx_*}^2$. 
    However, the reverse is not applicable since we do not assume the smoothness of $f$, or indeed the smoothness coefficient is very large.
    Moreover, for ill-conditioned problems (e.g., $\delta / \mu \gg \sqrt{n}$), our step size $1/(4\sqrt{n} \delta)$ is much larger than $\mu/(2\delta^2)$ used in SVRP, and the convergence rate is also faster than SVRP:
    $\tilde{\gO}\left(n+\sqrt{n}\delta/\mu\right)$ vs.~$\tilde{\gO}\left(n+\delta^2/\mu^2\right)$.
    Finally, we do not need the strong convexity assumption of component functions.
\end{remark}

\subsection{Acceleration Version: AccSVRS}\label{sec:accsvrs}

\begin{algorithm}[t]
    \caption{Accelerated SVRS (AccSVRS)}\label{algo:acc-svrs}
    \begin{algorithmic}[1]
        \STATE \textbf{Input:} $\vz_0 = \vy_0 \in\sR^d, p, \tau \in (0, 1), \alpha, \theta>0, K \in \{1,2,\dots\}$
        \FOR{$ k = 0,1,2,\dots, K-1$}
        \STATE $\vx_{k+1} = \tau \vz_k + (1 - \tau) \vy_k$
        \STATE $\vy_{k+1} =$ \svrs$(f, \vx_{k+1}, \theta, p)$
        \STATE $\bm{\gG}_{k+1} = p\left(\nabla [f_1-f_{j_k}](\vx_{k+1})-\nabla [f_1-f_{j_k}](\vy_{k+1})+\frac{1}{\theta}\left(\vx_{k+1}-\vy_{k+1}\right)\right), j_k \sim \text{Unif}([n])$
        \STATE $\vz_{k+1} = \argmin_{\vz\in\sR^d} \frac{1}{2\alpha}\norm{\vz - \vz_k}^2 + \dotprod{\bm{\gG}_{k+1}, \vz} + \frac{3\mu}{20}\norm{\vz-\vy_{k+1}}^2 = \frac{\vz_k+0.3\mu\alpha \vy_{k+1}-\alpha \bm{\gG}_{k+1}}{1+0.3\mu\alpha}$
        \ENDFOR
        \STATE \textbf{Output:} $\vy_{K}$
    \end{algorithmic}
\end{algorithm}

Now we apply the classical interpolation technique motivated by Katyusha X \cite{allen2018katyusha} to establish accelerated SVRS (AccSVRS, Algorithm \ref{algo:acc-svrs}). 
The main difference between AccSVRS and Katyusha X is due to the different choices of distance spaces.
Specifically, we adopt $D_h(\cdot, \cdot)$ instead of the Euclidean distance used in Katyusha X. Thus, the gradient mapping step (corresponding to Step 2 in \cite[(4.1)]{allen2018katyusha}) should be built on the reference function $h(\cdot)$ defined in Eq.~\eqref{eq:func-h}, i.e., $\nabla h(\vx_{k+1})-\nabla h(\vy_{k+1})$ instead of $(\vx_{k+1}-\vy_{k+1})/\theta$. 
Moreover, noting that $\nabla h(\cdot)$ could involve the heavy gradient computing part $\nabla f(\cdot)$, we further employ its stochastic version (Step 5 in Algorithm \ref{algo:acc-svrs}) to reduce the overall communication complexity.

Next, we delve into the convergence analysis. We first give the core lemma for AccSVRS, which is also motivated by the framework of Katyusha X \cite{allen2018katyusha}. The proof is deferred to Appendix \ref{app:one-loop}.
\begin{lemma}\label{lemma:one-loop}
    Suppose Assumption \ref{ass:1} holds, and $\theta = 1/(4\sqrt{n} \delta), p=1/n, \alpha \leq n\theta/(2\tau)$ in Algorithm~\ref{algo:acc-svrs}, where \svrs$(f, \vx_{k+1}, \theta, p)$ satisfies Eq.~\eqref{eq:cond-app1} in each iteration. 
    Then for all $\vx \in \sR^d$ that is independent of the random indices $i_1^{(k)},i_2^{(k)},\dots,i_T^{(k)}$ in \svrs$(f,\vx_{k+1}, \theta, p)$, we have that
    \begin{equation}\label{eq:acc-1}
        \E_k \frac{\alpha}{\tau} \left[f(\vy_{k+1}) - f(\vx)\right] \leq \E_k (1-\tau) \cdot \frac{\alpha}{\tau} \left[f(\vy_{k})-f(\vx)\right] + \frac{\norm{\vx - \vz_k}^2}{2} -\frac{1+0.3\mu\alpha}{2}\norm{\vx-\vz_{k+1}}^2.
    \end{equation}
\end{lemma}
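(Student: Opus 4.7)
The plan is to adapt the Katyusha X interpolation framework to the Bregman geometry induced by $h$, coupling the one-epoch SVRS progress (Lemma \ref{lemma:one-step-loop}) with the mirror-descent step on $\vz$. I would apply Lemma \ref{lemma:one-step-loop} to the call \svrs$(f,\vx_{k+1},\theta,p)$ with anchor $\vw_0=\vx_{k+1}$ and output $\vw^+=\vy_{k+1}$, twice---once at the test point $\vx$ and once at $\vy_k$, both of which are independent of the SVRS randomness. Taking the convex combination $\tau\cdot[\text{with }\vx]+(1-\tau)\cdot[\text{with }\vy_k]$, the identity $\vx_{k+1}=\tau\vz_k+(1-\tau)\vy_k$ collapses the two gradient inner products into the single term $\tau p\dotprod{\vx-\vz_k,\nabla h(\vy_{k+1})-\nabla h(\vx_{k+1})}$. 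Multiplying by $\alpha/\tau$ and discarding the nonnegative $\frac{(1-\tau)\alpha\cdot 2\mu\theta}{5\tau}\E D_h(\vy_k,\vy_{k+1})$ term, the target \eqref{eq:acc-1} reduces to showing
\[
\alpha p\,\E\dotprod{\vx-\vz_k,\nabla h(\vy_{k+1})-\nabla h(\vx_{k+1})} \leq \tfrac{7\alpha}{9n\tau}\E D_h(\vx_{k+1},\vy_{k+1})+\tfrac{2\mu\alpha\theta}{5}\E D_h(\vx,\vy_{k+1})+\tfrac{\norm{\vx-\vz_k}^2}{2}-\tfrac{1+0.3\mu\alpha}{2}\E\norm{\vx-\vz_{k+1}}^2.
\]

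Next I would convert the gradient inner product to the stochastic form. Writing $h_i:=f_1-f_i+\frac{1}{2\theta}\norm{\cdot}^2$ so that $h=\frac{1}{n}\sum_i h_i$ and $\bm{\gG}_{k+1}=p(\nabla h_{j_k}(\vx_{k+1})-\nabla h_{j_k}(\vy_{k+1}))$, the unbiasedness $\E_{j_k}\bm{\gG}_{k+1}=p(\nabla h(\vx_{k+1})-\nabla h(\vy_{k+1}))$ turns the left-hand side above into $\E\alpha\dotprod{\vz_k-\vx,\bm{\gG}_{k+1}}$. Since $\vz_{k+1}$ is the \emph{exact} minimiser of the quadratic $\phi(\vz):=\frac{1}{2\alpha}\norm{\vz-\vz_k}^2+\dotprod{\bm{\gG}_{k+1},\vz}+\frac{3\mu}{20}\norm{\vz-\vy_{k+1}}^2$, the quadratic equality $\phi(\vx)-\phi(\vz_{k+1})=\frac{1+0.3\mu\alpha}{2\alpha}\norm{\vx-\vz_{k+1}}^2$ holds. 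Expanding this, combining with the optimality relation $\alpha\bm{\gG}_{k+1}=(\vz_k-\vz_{k+1})-\frac{3\mu\alpha}{10}(\vz_{k+1}-\vy_{k+1})$, and applying the polar identity to $\dotprod{\vz_{k+1}-\vy_{k+1},\vz_{k+1}-\vz_k}$ produces the pointwise identity
\[
\alpha\dotprod{\bm{\gG}_{k+1},\vz_k-\vx}+\tfrac{1+0.3\mu\alpha}{2}\norm{\vx-\vz_{k+1}}^2-\tfrac{1}{2}\norm{\vx-\vz_k}^2 = \bigl(\tfrac{1}{2}+\tfrac{3\mu\alpha}{20}\bigr)\norm{\vz_{k+1}-\vz_k}^2+\tfrac{3\mu\alpha}{20}\norm{\vx-\vy_{k+1}}^2-\tfrac{3\mu\alpha}{20}\norm{\vz_k-\vy_{k+1}}^2,
\]
in which the cross $\norm{\vz_{k+1}-\vy_{k+1}}^2$ contributions cancel automatically.

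The main obstacle is absorbing the right-hand side of this identity into the Bregman margin. The term $\frac{3\mu\alpha}{20}\norm{\vx-\vy_{k+1}}^2$ is matched exactly by $\frac{2\mu\alpha\theta}{5}D_h(\vx,\vy_{k+1})\geq\frac{3\mu\alpha}{20}\norm{\vx-\vy_{k+1}}^2$ via the lower bound in \eqref{eq:div-con} at $\theta=1/(4\sqrt{n}\delta)$. For the remaining terms, substituting the closed form $\vz_{k+1}-\vz_k=(0.3\mu\alpha(\vy_{k+1}-\vz_k)-\alpha\bm{\gG}_{k+1})/(1+0.3\mu\alpha)$ and applying Young's inequality with the optimal parameter $c=1/(0.3\mu\alpha)$ makes the $\norm{\vy_{k+1}-\vz_k}^2$ pieces cancel exactly, leaving only a pure variance residual $\frac{\alpha^2}{2}\E\norm{\bm{\gG}_{k+1}}^2$. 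Using the decomposition $\frac{1}{n}\sum_i\norm{\nabla h_i(\vx)-\nabla h_i(\vy)}^2=\norm{\nabla h(\vx)-\nabla h(\vy)}^2+\frac{1}{n}\sum_i\norm{\nabla(f-f_i)(\vx)-\nabla(f-f_i)(\vy)}^2$ together with the $5\sqrt{n}\delta$-smoothness of $h$ and $\delta$-AveSS yields $\E\norm{\bm{\gG}_{k+1}}^2=\gO(\delta^2/n)\norm{\vx_{k+1}-\vy_{k+1}}^2$; combining with $D_h(\vx_{k+1},\vy_{k+1})\geq\frac{3\sqrt{n}\delta}{2}\norm{\vx_{k+1}-\vy_{k+1}}^2$ shows that the step-size constraint $\alpha\leq n\theta/(2\tau)$ is precisely what is needed to absorb this residual into $\frac{7\alpha}{9n\tau}D_h(\vx_{k+1},\vy_{k+1})$. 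Tracking all the numerical constants through these cancellations is the key technical burden; once it is completed, \eqref{eq:acc-1} follows.
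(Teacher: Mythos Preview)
Your proposal is correct and follows essentially the same route as the paper: apply Lemma~\ref{lemma:one-step-loop} at $\vx$ and at $\vy_k$, use the interpolation $\vx_{k+1}=\tau\vz_k+(1-\tau)\vy_k$ to collapse the inner products, pass to the stochastic $\bm{\gG}_{k+1}$ by unbiasedness, bound the mirror step, and then control $\E_{j_k}\norm{\bm{\gG}_{k+1}}^2$ via smoothness of $h$ plus AveSS so that the condition $2\tau\alpha p\le\theta$ absorbs the residual into $\frac{7p}{9\tau}D_h(\vx_{k+1},\vy_{k+1})$. The only real difference is that the paper invokes the ready-made inequality \eqref{eq:aux} (Allen--Zhu's Lemma 2.4) as a black box, whereas you re-derive it by writing the exact quadratic identity for $\phi$ and then applying Young's inequality with parameter $c=1/(0.3\mu\alpha)$; your computation recovers precisely $\alpha\dotprod{\bm{\gG}_{k+1},\vz_k-\vx}-\frac{3\mu\alpha}{20}\norm{\vx-\vy_{k+1}}^2\le\frac{\alpha^2}{2}\norm{\bm{\gG}_{k+1}}^2+\frac{1}{2}\norm{\vx-\vz_k}^2-\frac{1+0.3\mu\alpha}{2}\norm{\vx-\vz_{k+1}}^2$, which is \eqref{eq:aux} specialised to $g(\vz)=\frac{3\mu}{20}\norm{\vz-\vy_{k+1}}^2$ after dropping the nonnegative $g(\vz_{k+1})$. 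So your argument is a more self-contained but equivalent variant.
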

Finally, we present the convergence rate and communication complexity of AccSVRS based on Lemma \ref{lemma:one-loop}, and the proof is left in Appendix \ref{app:accsvrs-rate}.
\begin{thm}\label{thm:accsvrs-rate}
    Suppose Assumption \ref{ass:1} holds. Consider AccSVRS with the following hyperparameters 
    \[ \theta=\frac{1}{4\sqrt{n}\delta}, p=\frac{1}{n}, \tau=\frac{1}{4}\min\left\{1, \frac{n^{1/4}}{2}\sqrt{\frac{\mu}{\delta}}\right\}, \alpha=\frac{\sqrt{n}}{8\delta\tau}, \]
    and Eq.~\eqref{eq:cond-app1} is satisfied in each iteration of \svrs$(f, \vx_{k+1}, \theta, p)$.	
    Then for any $\varepsilon > 0$, when 
    \[ k \geq K_2 := \max\left\{4, 8 n^{-1/4} \sqrt{\delta/\mu} \right\}\log\frac{2[f(\vy_0)-f(\vx_*)]}{\varepsilon}, \]
    i.e., after $\tilde{\gO}\left(n+n^{3/4}\sqrt{\delta/\mu}\right)$ communications in expectation, we obtain that 
    $ \E f(\vy_k)-f(\vx_*) \leq \varepsilon$.
\end{thm}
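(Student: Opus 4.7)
The plan is to apply Lemma~\ref{lemma:one-loop} with $\vx = \vx_*$ (the minimizer of $f$) and bundle the resulting one-step inequality into a Lyapunov-style linear contraction. First, I would verify that the stated parameters obey the hypothesis $\alpha \leq n\theta/(2\tau)$ of Lemma~\ref{lemma:one-loop}: plugging in $\theta = 1/(4\sqrt{n}\delta)$ gives $n\theta/(2\tau) = \sqrt{n}/(8\delta\tau) = \alpha$, so equality holds and the lemma applies. Setting $\vx = \vx_*$ in \eqref{eq:acc-1} then yields
\[
\tfrac{\alpha}{\tau}\E_k[f(\vy_{k+1}) - f(\vx_*)] + \tfrac{1+0.3\mu\alpha}{2}\E_k\|\vx_* - \vz_{k+1}\|^2 \leq (1-\tau)\tfrac{\alpha}{\tau}[f(\vy_k)-f(\vx_*)] + \tfrac{1}{2}\|\vx_* - \vz_k\|^2.
\]

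Next, I would introduce the potential $\Psi_k := \frac{\alpha}{\tau}[f(\vy_k) - f(\vx_*)] + \frac{1+0.3\mu\alpha}{2}\|\vx_* - \vz_k\|^2$. Writing the RHS quadratic term as $\frac{1}{1+0.3\mu\alpha}\cdot\frac{1+0.3\mu\alpha}{2}\|\vx_*-\vz_k\|^2$ converts the inequality into $\E_k\Psi_{k+1} \leq (1-\rho)\Psi_k$ with $\rho := \min\{\tau,\ 0.3\mu\alpha/(1+0.3\mu\alpha)\}$. The first non-trivial check is that $\rho \geq \tau$ in both regimes of $\tau$. In the well-conditioned case $\tau=1/4$ (which forces $\mu\sqrt{n}/\delta \geq 4$), we get $\alpha = \sqrt{n}/(2\delta)$ and $0.3\mu\alpha \geq 0.6$, so $0.3\mu\alpha/(1+0.3\mu\alpha) \geq 3/8 > \tau$. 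In the ill-conditioned case $\tau = n^{1/4}\sqrt{\mu/\delta}/8$, a direct computation gives $\alpha = n^{1/4}/\sqrt{\mu\delta}$ and $0.3\mu\alpha = 2.4\tau$, which exceeds $\tau/(1-\tau)$ as long as $\tau \leq 1/2$. Either way, $\E\Psi_K \leq (1-\tau)^K \Psi_0 \leq e^{-\tau K}\Psi_0$.

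To extract the function-value bound I would use $\E[f(\vy_K)-f(\vx_*)] \leq (\tau/\alpha)\E\Psi_K$, and bound $\Psi_0$ via $\vz_0=\vy_0$ and $\mu$-strong convexity ($\|\vy_0-\vx_*\|^2 \leq \frac{2}{\mu}[f(\vy_0)-f(\vx_*)]$), obtaining $(\tau/\alpha)\Psi_0 \leq \bigl[1+\frac{\tau}{\mu\alpha}+0.3\tau\bigr][f(\vy_0)-f(\vx_*)]$. A short check in each regime shows $\tau/(\mu\alpha) = 8\delta\tau^2/(\mu\sqrt{n}) \leq 1/8$, so the bracket is at most $2$. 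Requiring $2e^{-\tau K}[f(\vy_0)-f(\vx_*)] \leq \varepsilon$ gives $K \geq \frac{1}{\tau}\log\frac{2[f(\vy_0)-f(\vx_*)]}{\varepsilon}$, and $1/\tau = \max\{4,\, 8 n^{-1/4}\sqrt{\delta/\mu}\}$ matches $K_2$ exactly.

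Finally, for the communication count, each outer iteration invokes \svrs \ once (whose expected communication is $\Theta(n)$ by Section~\ref{sec:com-com} under $p=1/n$) and uses one additional sampled gradient $j_k$ for $\bm{\gG}_{k+1}$, so the per-iteration cost is $\Theta(n)$. Multiplying by $K_2$ yields $\tilde{\gO}\bigl(n+n^{3/4}\sqrt{\delta/\mu}\bigr)$. The main obstacle I anticipate is not any single step but rather the parameter bookkeeping across the two regimes of $\tau$: one must verify simultaneously that (i) $\rho \geq \tau$, (ii) $\Psi_0 \leq 2(\alpha/\tau)[f(\vy_0)-f(\vx_*)]$, and (iii) $\alpha \leq n\theta/(2\tau)$, and the chosen $\alpha = \sqrt{n}/(8\delta\tau)$ is delicately balanced so that all three fit with constant-order slack, which is what lets the final rate avoid any log factor on $L$ or $\delta/\mu$.
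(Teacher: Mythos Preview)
Your proposal is correct and follows essentially the same route as the paper's proof: apply Lemma~\ref{lemma:one-loop} at $\vx=\vx_*$, form a Lyapunov function combining $f(\vy_k)-f(\vx_*)$ with $\|\vx_*-\vz_k\|^2$, verify in each regime of $\tau$ that the contraction factor equals $1-\tau$, bound the initial potential by $2[f(\vy_0)-f(\vx_*)]$ via strong convexity, and multiply the resulting $K_2$ by the $\Theta(n)$ per-iteration communication cost. The only cosmetic difference is that the paper uses the rescaled potential $\Phi_k=(\tau/\alpha)\Psi_k$ and writes the contraction as $\max\{1-\tau,(1+0.3\mu\alpha)^{-1}\}$, which is identical to your $1-\min\{\tau,0.3\mu\alpha/(1+0.3\mu\alpha)\}$; your parameter checks in the two regimes match theirs.
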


\begin{remark}
    Although 
    roughly the same as the communication complexity obtained by Catalyzed SVRP in \cite[Theorem 3]{khaled2022faster}, 
    our results have the following advantages.
    
    \textbf{Fewer assumptions}. Except for the strong convexity of $f$ and AveSS of $f_i$'s, we do not need to assume component strong convexity appearing in \cite[Assumption 2]{khaled2022faster}.
    
    \textbf{Inexact proximal step.} \citet[Theorem 3]{khaled2022faster} require exact evaluations of the proximal operator, though they mention that this is only for the convenience of analysis.
    Our framework allows approximated solutions in each proximal step, and the approximation criterion 
    \eqref{eq:cond-app1} is error-independent, i.e., irrelevant to the final error $\varepsilon$. 
    Since the local proximal function is strongly convex, we could solve the problem in a few steps if additionally assuming the smoothness of $f_1$.
    
    \textbf{Smoothness-free bound.} As shown in \cite[Appendix G.1]{khaled2022faster} or Appendix \ref{app:Cata-svrp}, even if an exact proximal step is allowed,  a dependence on the smoothness coefficient would be introduced in the total communication iterations of Catalyzed SVRP, though only in a log scale. Our directly accelerated method has no dependence on the smoothness coefficient.
\end{remark}

\subsection{Gradient Complexity under Smooth Assumption}\label{sec:grad-com}
Due to the importance of total computation in the machine learning and optimization community, we consider a more common setup by \textbf{additionally assuming} that $f_1$ is $L$-smooth with $L \geq \delta \geq \mu > 0$,
which together with Assumption \ref{ass:1} 
facilitates the quantification of
Eq.~\eqref{eq:cond-app1}. 
Then we can compute the total gradient complexity for AccSVRS as shown below.
By Proposition \ref{prop:ss-sc} and our assumptions, $A_{\theta}^t(\vx)$ is $(\frac{1}{\theta}{-}\sqrt{n}\delta)$-strongly convex and $(\frac{1}{\theta}{+}L)$-smooth.
Using accelerated methods starting from $\vx_t$, we can guarantee that Eq.~\eqref{eq:cond-app1} holds after 
$ T_{\mathrm{app}} = \tilde{\gO}\left(\sqrt{\frac{1+\theta L}{1-\sqrt{n}\theta\delta}}\right) = \tilde{\gO}\left(1+n^{-1/4} \sqrt{L/\delta}\right)$
iterations with the choice of $\theta$ in Theorem \ref{thm:accsvrs-rate}.
Hence, the total gradient calls in expectation are
\[ \gO(n T_{\mathrm{app}} \cdot K_{2}) = \tilde{\gO} \left(n + n^{3/4}\left(\sqrt{\delta/\mu} + \sqrt{L/\delta}\right) + \sqrt{nL/\mu}\right). \]
Since $\delta \in [\mu, L]$, we recover the optimal gradient complexity $\tilde{\gO}(n+n^{3/4}\sqrt{L/\mu})$ for the average smooth setting \cite[Table 1]{zhou2019lower} if neglecting log factors.
Particularly, when $\delta=\Theta(\sqrt{\mu L})$, we even obtain the nearly optimal gradient complexity $\tilde{\gO}(n + \sqrt{nL/\mu})$ for the component smooth setting \cite{han2021lower,hannah2018breaking,woodworth2016tight}. 
We leave the details in Appendix \ref{app:grad-com}.
Although the gradient complexity is not the primary focus of our work, 
we have demonstrated that the gradient complexity bound of AccSVRS is nearly optimal for certain values of $\delta$ in specific cases.

\renewcommand{\arraystretch}{1}
\section{Lower Bound}\label{sec:lower}
In this section, we establish the lower bound of the communication complexity, which nearly matches the upper bound of AccSVRS.

\subsection{Definition of Algorithms}\label{sec:lower:defn}
In this subsection, we specify the class of algorithms to which our lower bound can apply.
We first introduce 
the Proximal Incremental First-order
Oracle (PIFO) \citep{woodworth2016tight, han2021lower}, which is defined as 
    $\pifo_{f_i} (\vx, \gamma)
    = [ f_i(\vx), \nabla f_i(\vx), \prox_{f_i}^\gamma (\vx) ]$ with $\gamma > 0$. 
Here the proximal operator is 
defined as 
    $\prox_{f_i}^\gamma (\vx) := \argmin_{\vu} \{ f_i (\vu) + \frac{1}{2\gamma} \norm{\vx - \vu}^2 \}.
    $
In addition to the local zero-order and first-order information of $f_i$ at $\vx$,
the PIFO $\pifo_{f_i} (\vx, \gamma)$ 
also provides some global information through the proximal operator\footnote{
If we let $\gamma \rightarrow \infty,$ $\prox_{f_i}^\gamma (\vx)$ converges to the exact minimizer of $f_i$, irrelevant to the choice of $\vx$.}.
Then we assume the algorithm has access to the PIFO and the definition of algorithms is presented as follows.

\begin{defn}
\label{defn:lower_alg_informal}
    Consider a randomized algorithm $\gA$ to solve problem~\eqref{eq:obj}.
    Suppose the number of communication rounds is $T$.
    At the initialization stage, the master node~$1$ communicates with all the others.
    In round~$t~(0 \le t \le T-1)$, the algorithm samples a node~$i_t \sim \mathrm{Unif}([n])$, and  node~$1$ communicates with node~$i_t$.
    Then the algorithm samples a Bernoulli random variable $a_t$ with constant expectation $c_0/n$.
    If $a_t = 1$, 
    node~$1$ communicates with all the others.
    Define the information set $\gI_{t+1}$ as the set of
    all the possible points
    $\gA$ can obtain after round~$t$.
    The algorithm updates $\gI_{t+1}$ based on the linear-span operation and PIFO, and finally outputs a certain point in $\gI_T$.
\end{defn}
At the initialization stage, the communication cost is $2(n-1)$.
In each communication round,
the Bernoulli random variable $a_t$ determines whether the master node communicates with all the others, i.e., whether to calculate the full gradient.
Since $\E a_t = c_0 / n$,
the expected communication cost of each round is of the order $\Theta(1)$.
Thus the total communication cost is of the order $\Theta(n + T)$ and we can use 
$T$ to measure the communication complexity.
Moreover, one can check Algorithm~\ref{algo:acc-svrs} satisfies Definition~\ref{defn:lower_alg_informal}.
The formal definition and detailed analysis are deferred to Appendix~\ref{app:lower:defn}.

\newcommand{\sddots}{\raisebox{2pt}{$\scalebox{.75}{$\ddots$}$}}
\subsection{The Construction and Results}\label{sec:lower:construct}
In this section, we construct a hard instance of problem \eqref{eq:obj} and then use it to establish the lower bound.
Due to space limitations, we only present several key properties. 
The complete framework of construction is deferred to Appendix~\ref{app:lower:construct}.

Inspired by \citep{han2021lower}, we consider the class of matrices
$\mB(m, \hardzeta ) = \left[
\begin{smallmatrix}
     1 & -1 &        &        & \\
       & \sddots & \sddots & \\
       &        & 1      & {-}1 \\
       &        &        & \hardzeta
\end{smallmatrix}\right] \in \sR^{m \times m}$.
This class of matrices is widely used to establish lower bounds for minimax optimization problems \citep{zhang2021complexity, ouyang2021lower, zhang2022lower},
and $ \mA(m, \hardzeta) := \mB(m, \hardzeta)^\top \mB(m, \hardzeta)$ is the well-known tridiagonal matrix in the analysis of lower bounds for convex optimization \citep{nesterov2018lectures, lan2018optimal, carmon2021lower}.
Denote the $l$-th row of $\mB(\harddim, \hardzeta)$ as $ \vb_l(\harddim, \hardzeta)^\top$.
We partition the row vectors of $\mB(\harddim, \hardzeta)$
according to the index sets 
$\gL_i = \{ l: 1 \le l \le \harddim,\, l \equiv i-1 \,(\mathrm{mod} \ (n-1) ) \}$ for $2 \le i \le n$
and $\gL_1 = \varnothing$\footnote{Such a way of partitioning is also inspired by \citep{han2021lower} and similar to that in \citep{kovalev2022optimalvi}.
However, our setting is different from theirs.
}.
These sets are mutually exclusive and their union is $[\harddim]$.
Then we consider the following problem
\begin{align}\label{eq:obj_hard_before_scale}
    \min_{\vx \in \sR^\harddim}  \hardr (\vx; \harddim, \hardzeta, \hardc) {=} \frac{1}{n} \sum_{i=1}^n \left[\hardr_i (\vx; \harddim, \hardzeta, \hardc) {:=}
    \begin{cases}
        \frac{\hardc}{2} \norm{\vx}^2 {-} n \dotprod{\ve_1, \vx}
        &\!\!\!\! \text{for } i = 1, \\
        \frac{\hardc}{2} \norm{\vx}^2 {+} \frac{n}{2} \sum\limits_{l \in \gL_i} \norm{\vb_{l}(\harddim, \hardzeta)^{\top} \vx}^2
        &\!\!\!\! \text{for } i \neq 1.
    \end{cases} \right]
\end{align}


Here $\ve_i \in \sR^\harddim$ denotes the unit vector with the $i$-th element equal to $1$ and others equal to $0$.
Then one can check 
$\hardr(\vx; \harddim, \hardzeta, \hardc)
= \frac{1}{2}\, \vx^\top \mA(\harddim, \hardzeta)\, \vx + \frac{c}{2} \norm{\vx}^2 - \dotprod{\ve_1, \vx}$.
Clearly, $\hardr$ is $\hardc$-strongly convex.
We can also determine the AveSS parameter as follows.
The proof is deferred to Appendix~\ref{app:lower:pf_avess}.
\begin{prop}\label{prop:hard_aveSS_except1}
    Suppose that $0 < \hardzeta \le \sqrt{2}$, $n \ge 3$ and $\harddim \ge 3$. Then 
    $\hardr_i$'s satisfy
    $\sqrt{ 8n + 4 }$-AveSS. 
\end{prop}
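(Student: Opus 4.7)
The plan is to reduce the AveSS condition to an operator-norm estimate by exploiting the quadratic structure of the construction. Since each $\hardr_i$ is quadratic, the gradient difference $\nabla[\hardr_i - \hardr](\vx) - \nabla[\hardr_i - \hardr](\vy)$ equals $\mH_i(\vx - \vy)$ where $\mH_i := \nabla^2 \hardr_i - \nabla^2 \hardr$. A direct computation gives $\nabla^2 \hardr = \hardc \mI + \mA$, $\nabla^2 \hardr_1 = \hardc \mI$, and $\nabla^2 \hardr_i = \hardc \mI + n \mA_i$ for $i \ge 2$, where $\mA_i := \sum_{l \in \gL_i} \vb_l \vb_l^\top$. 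Hence $\mH_1 = -\mA$ and $\mH_i = n \mA_i - \mA$ for $i \ge 2$, with the key identity $\sum_{i=2}^n \mA_i = \mA$ since $\{\gL_i\}_{i=2}^n$ partitions $[\harddim]$. Establishing $\delta$-AveSS then reduces to the operator-norm bound $\norm{\frac{1}{n}\sum_i \mH_i^2} \le \delta^2$.

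Next, I would expand the sum and use $\sum_{i=2}^n \mA_i = \mA$ so that the cross terms telescope:
\[
\frac{1}{n}\sum_{i=1}^n \mH_i^2 = \frac{1}{n}\mA^2 + \frac{1}{n}\sum_{i=2}^n \bigl(n^2 \mA_i^2 - n \mA_i \mA - n \mA \mA_i + \mA^2\bigr) = n\sum_{i=2}^n \mA_i^2 - \mA^2.
\]
The heart of the argument is the pairwise orthogonality of the rank-one summands constituting each $\mA_i$: any two distinct indices in $\gL_i$ differ by at least $n - 1 \ge 2$, so the supports of the corresponding $\vb_l = \ve_l - \ve_{l+1}$ (or $\vb_\harddim = \hardzeta \ve_\harddim$) are disjoint. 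Consequently $\mA_i^2 = \sum_{l \in \gL_i} \norm{\vb_l}^2 \vb_l \vb_l^\top$, and summing over $i$ yields $\sum_{i=2}^n \mA_i^2 = \sum_{l=1}^\harddim \norm{\vb_l}^2 \vb_l \vb_l^\top \preceq 2\mA$, since $\norm{\vb_l}^2 \le \max(2, \hardzeta^2) = 2$.

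Combining these displays gives $\frac{1}{n}\sum_i \mH_i^2 \preceq 2n \mA - \mA^2 = \mA(2n \mI - \mA)$, whose operator norm equals $\max_{\lambda \in [0, \norm{\mA}]} \lambda(2n - \lambda)$. Bounding $\norm{\mA} \le 4$ via Gershgorin (the tridiagonal $\mA$ has row absolute-value sums at most $4$ when $\hardzeta^2 \le 2$) yields $\delta^2 \le 8n$ plus a small additive slack, producing the stated $\sqrt{8n+4}$ after tracking constants. The main technical hurdle is verifying the pairwise orthogonality inside each $\gL_i$, particularly the boundary interaction between $\vb_{\harddim-1} = \ve_{\harddim-1} - \ve_\harddim$ and $\vb_\harddim = \hardzeta \ve_\harddim$; the hypotheses $n \ge 3$ and $\harddim \ge 3$ are precisely what guarantee these two indices never coincide in the same $\gL_i$. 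Without this orthogonality, $\sum_i \norm{\mA_i \vu}^2$ would acquire a dependence on $\harddim$, destroying the dimension-free $\Theta(\sqrt{n})$ similarity parameter on which the matching lower bound relies.
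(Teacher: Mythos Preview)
Your approach is correct and shares the same skeleton as the paper's: the telescoping identity $\tfrac{1}{n}\sum_i \mH_i^2 = n\sum_{i\ge2}\mA_i^2 - \mA^2$ and the within-block orthogonality $\vb_l^\top\vb_{l'}=0$ for distinct $l,l'\in\gL_i$ (which uses $n\ge3$) are exactly what the paper exploits. The only genuine difference is the final estimate. The paper works at the vector level: it expands $\|\mA_i\vu\|^2$ and $\|\mA\vu\|^2$ in terms of $(\vb_l^\top\vu)^2$, uses $|\vb_l^\top\vb_{l+1}|\le\sqrt{2}$ and Cauchy to reach $(2n+1)\sum_l(\vb_l^\top\vu)^2$, and then bounds $(\vb_l^\top\vu)^2\le2(u_l^2+u_{l+1}^2)$ coordinate-wise to land on $(8n+4)\|\vu\|^2$. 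You instead stay at the operator level, combining $\sum_i\mA_i^2\preceq2\mA$ with the Gershgorin bound $\|\mA\|\le4$.

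Your route is in fact slightly tighter, so the phrase ``plus a small additive slack'' is misleading: since $\lambda\mapsto\lambda(2n-\lambda)$ is increasing on $[0,n]$, for $n\ge4$ you get $\bigl\|\tfrac{1}{n}\sum_i\mH_i^2\bigr\|\le 4(2n-4)=8n-16$, and for $n=3$ the maximum over $[0,4]$ is $n^2=9$; both are strictly below $8n+4$. There is no slack to absorb---the stated constant follows immediately with room to spare. The paper's coordinate-wise bound is more elementary but looser; your spectral argument is cleaner and would let one quote a sharper AveSS parameter if desired.
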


Define the subspaces $\{ \subspace_k \}_{k=0}^\harddim$ as $\subspace_0 = \{ \vzero \}$ and $\subspace_k = \spn\{ \ve_1, \ve_{2}, \dots, \ve_{k}\} $ for $1 \le k \le \harddim$.
The next lemma is fundamental to our analysis.
The proof is deferred to Appendix~\ref{app:lower:pf_expand_infor}.
\begin{lemma}\label{lem:hard_expand_infor}
Suppose the algorithm $\gA$ satisfies Definition~\ref{defn:lower_alg_informal} and apply it to solve  problem~\eqref{eq:obj_hard_before_scale} with $n\ge 3$ and $m \ge 4$.
We have (i) $\gI_0 = \subspace_1$. (ii) Suppose $\gI_t \subseteq \subspace_k$ ($1 \le k \le \harddim - 3$). If $i_t$ satisfies $k \in \gL_{i_t}$ or $a_t = 1$, then
$\gI_{t+1} \subseteq \subspace_{k+3}$; otherwise, 
$\gI_{t+1} \subseteq \subspace_k$.
\end{lemma}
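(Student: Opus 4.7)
The plan is to establish both parts by a direct structural analysis, exploiting the fact that each row $\vb_l$ of $\mB(\harddim, \hardzeta)$ has support contained in $\{l, l+1\}$.

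Part (i) is a short computation. At the initialization stage every oracle is queried at the canonical initial point $\vzero \in \subspace_0$. For $i \ne 1$, $\hardr_i$ is a pure quadratic form (no linear term), so $\nabla \hardr_i(\vzero) = \vzero$ and $\prox_{\hardr_i}^{\gamma}(\vzero) = \vzero$. For $i = 1$, the linear piece $-n\dotprod{\ve_1,\vx}$ gives $\nabla \hardr_1(\vzero) = -n\ve_1$ and a proximal point proportional to $\ve_1$. Closing under the linear-span operation then yields $\gI_0 = \spn\{\ve_1\} = \subspace_1$.

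For part (ii), the heart of the argument is the following single-query claim: for any $\vx \in \subspace_k$, $\pifo_{\hardr_1}(\vx, \gamma) \subseteq \subspace_k$, and for $i \ne 1$, $\pifo_{\hardr_i}(\vx, \gamma) \subseteq \subspace_{k+1}$ whenever $k \in \gL_i$, and $\pifo_{\hardr_i}(\vx, \gamma) \subseteq \subspace_k$ otherwise. The reason is that
\begin{equation*}
    \nabla \hardr_i(\vx) = \hardc\,\vx + n \sum_{l \in \gL_i} (\vb_l^\top \vx)\, \vb_l, \qquad \prox_{\hardr_i}^{\gamma}(\vx) = \Bigl( (\hardc + \tfrac{1}{\gamma})\mI + \mM_i \Bigr)^{-1} \tfrac{\vx}{\gamma},
\end{equation*}
where $\mM_i := n \sum_{l \in \gL_i} \vb_l \vb_l^\top$. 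Since consecutive elements of $\gL_i$ differ by exactly $n - 1 \ge 2$, the $2 \times 2$ supports $\{l, l+1\}$ of the rank-one summands are pairwise disjoint, so both $\mM_i$ and $(\hardc + 1/\gamma)\mI + \mM_i$ (together with its inverse) are block-diagonal with respect to this decomposition. Consequently, both the matrix--vector product and the inverse act block-wise on $\vx$, and the only way a new coordinate can appear beyond $\subspace_k$ is through the block indexed by $l = k$, which exists precisely when $k \in \gL_i$ and contributes only position $k+1$.

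With the single-query claim in hand, the two branches of part (ii) reduce to bookkeeping. When $a_t = 0$ and $k \notin \gL_{i_t}$, every admissible PIFO call during round~$t$ (to $\hardr_1$ or $\hardr_{i_t}$) maps $\subspace_k$ into itself, and so does taking linear spans; hence $\gI_{t+1} \subseteq \subspace_k$. In the remaining case, each successive PIFO call inside round~$t$ can advance the span by at most one coordinate, and the formal per-round rules from Definition~\ref{defn:lower_alg_informal} (spelled out in Appendix~\ref{app:lower:defn}) cap the total advancement at three coordinates, giving $\gI_{t+1} \subseteq \subspace_{k+3}$. The main technical obstacle will be this chaining argument when $a_t = 1$: the algorithm may interleave PIFO queries of different $\hardr_i$'s with linear-span operations within the round, so one must induct carefully on the sequence of calls to show that each query extends the accessible span by at most one coordinate and that the round permits at most three such productive advancements.
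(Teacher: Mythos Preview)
Your proposal is correct and follows essentially the same approach as the paper: establish the single-query claim (this is exactly the content of Lemma~\ref{lem:hard_expand}, which the paper proves via Sherman--Morrison--Woodbury rather than your block-diagonal argument, but the two are equivalent here) and then track the information set through the steps of the formal Definition~\ref{defn:lower_alg}. The chaining concern you raise for the $a_t=1$ branch turns out to be benign: in the formal definition every non-master query within a round is made at a single fixed point, while the master's oracle $\pifo_{\hardr_1}$ never enlarges the span because $\gL_1=\varnothing$, so the three-coordinate cap follows by a direct walk-through of steps~\ref{item:lower_alg_sample_one}, \ref{item:lower_alg_span} and \ref{item:lower_alg_sample_all} with no delicate induction needed.
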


Lemma~\ref{lem:hard_expand_infor} guarantees that in each round, only when a specific component is sampled or the full gradient is calculated, can we expand the information set by at most three dimensions.
For problem~\eqref{eq:obj_hard_before_scale}, we could never obtain an approximate solution unless we expand the information set to the whole space (see Proposition~\ref{prop:hard_scale_property} in Appendix~\ref{app:lower:construct}), while Lemma~\ref{lem:hard_expand_infor} implies that the process of expanding is very slow.
Then we can establish the following lower bound.
\begin{thm}\label{thm:lower}
For any $n \ge 3$, $ \delta, \mu > 0$, 
algorithm $\gA$
satisfying Definition~\ref{defn:lower_alg_informal} and
sufficiently small $\eps > 0$,
there exists a rescaled version of problem~\eqref{eq:obj_hard_before_scale} such that (i) Assumption~\ref{ass:1} holds; 
(ii) In order to find an $\eps$-suboptimal solution $\hat{\vx}$ such that $\E r(\hat\vx) - \min_\vx r(\vx) < \eps$ by $\gA$,
the communication complexity in expectation is $\tilde{\Omega} ( n + n^{3/4} \sqrt{\delta / \mu} )$.
\end{thm}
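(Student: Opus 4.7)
The plan is to instantiate the rescaled version of problem~\eqref{eq:obj_hard_before_scale} and combine Lemma~\ref{lem:hard_expand_infor} with a classical subspace-truncation lower bound for the tridiagonal quadratic $\mA(\harddim,\hardzeta)$. The argument has three steps: (a) choose $\harddim,\hardzeta,\hardc$ and a global rescaling so that Assumption~\ref{ass:1} holds with the prescribed $\mu,\delta$; (b) show that any output $\hat\vx\in\subspace_k$ is $\Omega(\varepsilon)$-suboptimal unless $k$ is large enough; (c) use Lemma~\ref{lem:hard_expand_infor} to prove that $\gI_t$ escapes $\subspace_k$ only after $\Omega(nk)$ rounds in expectation.

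For (a), Proposition~\ref{prop:hard_aveSS_except1} gives raw AveSS constant $\Theta(\sqrt n)$ while $\hardr$ is $\hardc$-strongly convex. Multiplying $\hardr$ by a scalar $\lambda$ and possibly reparametrising $\vx$ pins both constants to $\mu$ and $\delta$ provided $\hardc/\sqrt{8n+4}=\mu/\delta$, forcing $\hardc=\Theta(\sqrt n\,\mu/\delta)$. Picking $\hardzeta$ small enough that $\lambda_{\min}(\mA(\harddim,\hardzeta))\ll\hardc$, the matrix $\mA(\harddim,\hardzeta)+\hardc\mI_\harddim$ has spectrum in $[\Theta(\hardc),\Theta(1)]$, so its effective condition number as a quadratic form is $\Theta(1/\hardc)=\Theta(\delta/(\sqrt n\,\mu))$ --- crucially smaller than the naive $\delta/\mu$.

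For (b), the minimizer $\vx_*=(\mA(\harddim,\hardzeta)+\hardc\mI_\harddim)^{-1}\ve_1$ has components decaying geometrically with ratio $q=1-\Theta(\sqrt{\hardc})$, so the best approximation of $\vx_*$ inside $\subspace_k$ is sub-optimal by at least $\Omega(q^{2k})[\hardr(\vzero)-\hardr(\vx_*)]$. Since $\hat\vx\in\gI_T\subseteq\subspace_k$, achieving $\E\hardr(\hat\vx)-\hardr(\vx_*)<\varepsilon$ forces $k=\Omega(\log(1/\varepsilon)/\sqrt{\hardc})=\tilde\Omega(n^{-1/4}\sqrt{\delta/\mu})$, and I take $\harddim$ slightly above this threshold so the truncation bound is actually binding. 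For (c), Lemma~\ref{lem:hard_expand_infor} states that $\dim\gI_t$ grows by at most three per round, and only when $\{k\in\gL_{i_t}\}\cup\{a_t=1\}$ occurs --- an event of probability $O(1/n)$, since each coordinate lies in exactly one of the $n-1$ cells $\gL_2,\dots,\gL_n$ and $i_t\sim\mathrm{Unif}([n])$. A coupon-collector-style calculation then yields $\E\min\{t:\gI_t\not\subseteq\subspace_k\}\ge\Omega(nk)$, and plugging in the $k$ from (b) plus the unavoidable initialization cost $2(n-1)$ gives the claimed $\tilde\Omega(n+n^{3/4}\sqrt{\delta/\mu})$.

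The main obstacle is the bookkeeping in (a): the rescaling must simultaneously deliver the prescribed $(\mu,\delta)$ \emph{and} a quadratic whose truncation rate scales like $\Theta(\delta/(\sqrt n\,\mu))$ rather than $\Theta(\delta/\mu)$. This $\sqrt n$ enhancement inside the square root is exactly what turns $\sqrt{\delta/\mu}$ into $n^{-1/4}\sqrt{\delta/\mu}$ at the subspace-dimension level, and the $\Theta(n)$ rounds-per-dimension cost then amplifies it back to $n^{3/4}\sqrt{\delta/\mu}$ communication rounds --- the source of the unusual $3/4$ exponent. A secondary technicality is converting the expected hitting-time bound of (c) into an expectation bound on $\hardr(\hat\vx)-\hardr(\vx_*)$; this can be handled either by Markov's inequality applied to $\{\gI_T\not\subseteq\subspace_k\}$ or by using monotonicity of the truncation error in $k$ together with tower expectation.
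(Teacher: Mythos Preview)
Your proposal is correct and follows essentially the same route as the paper: the paper instantiates the rescaling exactly as you outline (their $\hardc=4/(\rho^2-1)=\Theta(\sqrt n\,\mu/\delta)$ and $\hardzeta=\sqrt{2/(1+\rho)}$), computes the geometric truncation bound $\min_{\vx\in\subspace_k}\hardf(\vx)-\hardf(\vx_*)\ge\Delta q^{2k}$ with $q=(\rho-1)/(\rho+1)=1-\Theta(\sqrt{\hardc})$ in Proposition~\ref{prop:hard_scale_property}, and combines Lemma~\ref{lem:hard_expand_infor} with a tail bound for sums of independent geometric random variables (their Lemma~\ref{lem:geo}, giving $\sP(T_{M+1}>N)\ge 1/9$) in place of your Markov-on-the-success-count argument. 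Both probabilistic conversions work and lead to the same $\tilde\Omega(n+n^{3/4}\sqrt{\delta/\mu})$; your identification of the $\sqrt n$ gain inside the condition number as the source of the $3/4$ exponent is exactly the mechanism.
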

This lower bound nearly matches the upper bound in Theorem~\ref{thm:accsvrs-rate} up to log factors, implying Algorithm~\ref{algo:acc-svrs} is nearly optimal in terms of communication complexity.
The detailed statement and proof are deferred to Appendices~\ref{app:lower:construct} and \ref{app:lower:pf_thm_lower}.

\section{Experiments}\label{sec:exp}

\begin{figure}[t]
    \centering
    \hspace{-5pt}
    \begin{subfigure}[b]{0.33\textwidth}
        \includegraphics[width=\linewidth]{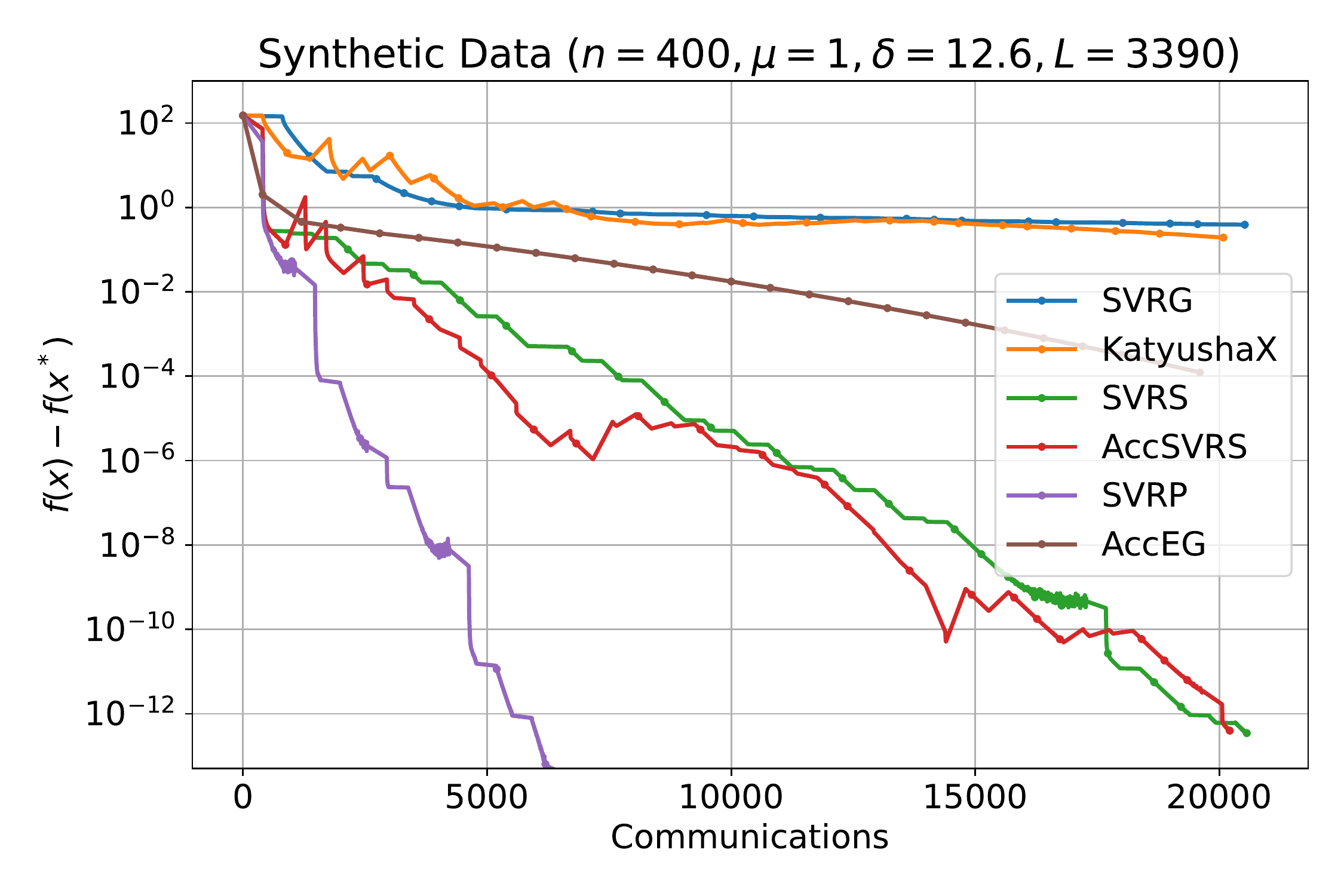}
    \end{subfigure}
    \hspace{-5pt}
    \begin{subfigure}[b]{0.33\textwidth}
        \includegraphics[width=\linewidth]{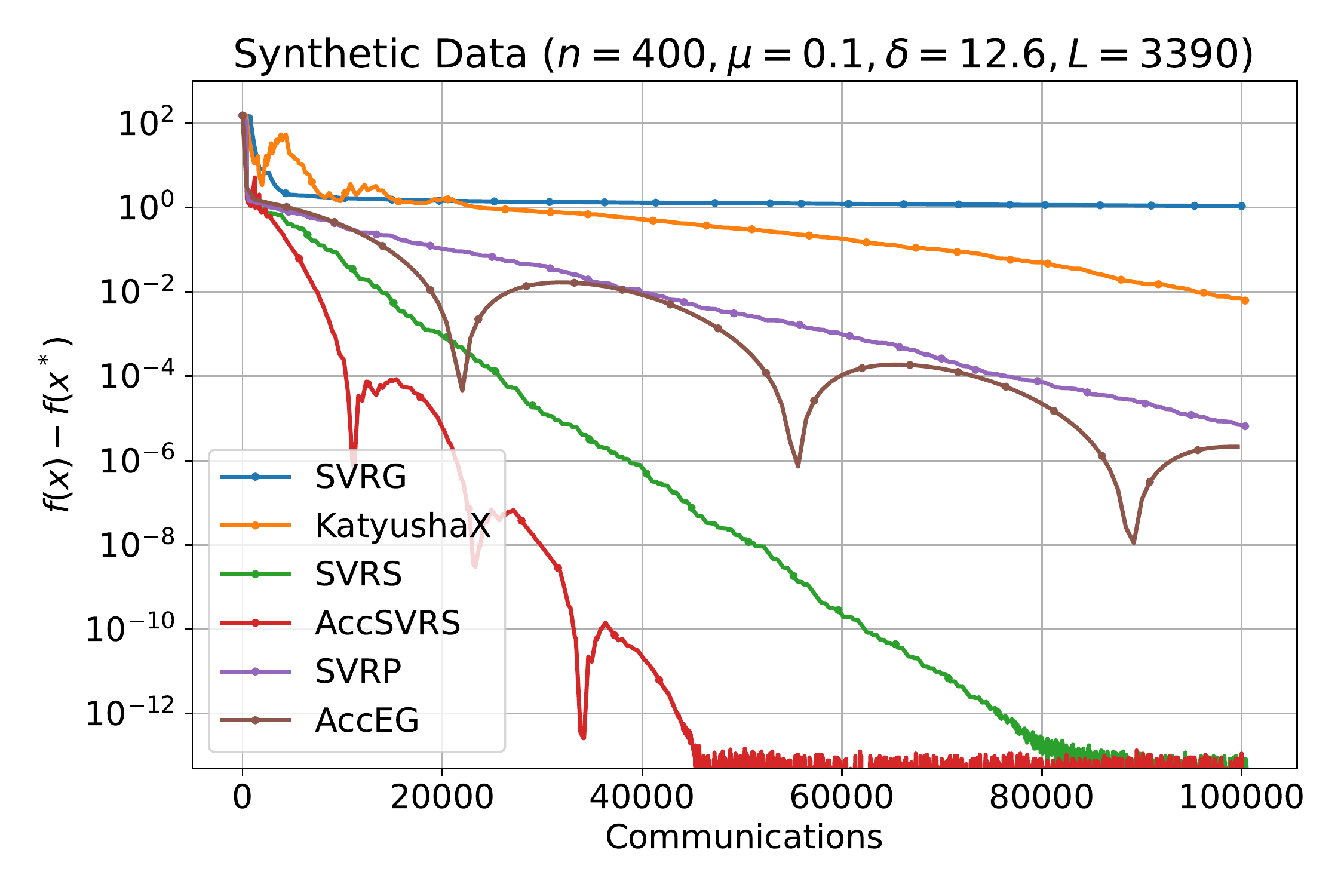}
    \end{subfigure}	
    \hspace{-5pt}
    \begin{subfigure}[b]{0.33\textwidth}
        \includegraphics[width=\linewidth]{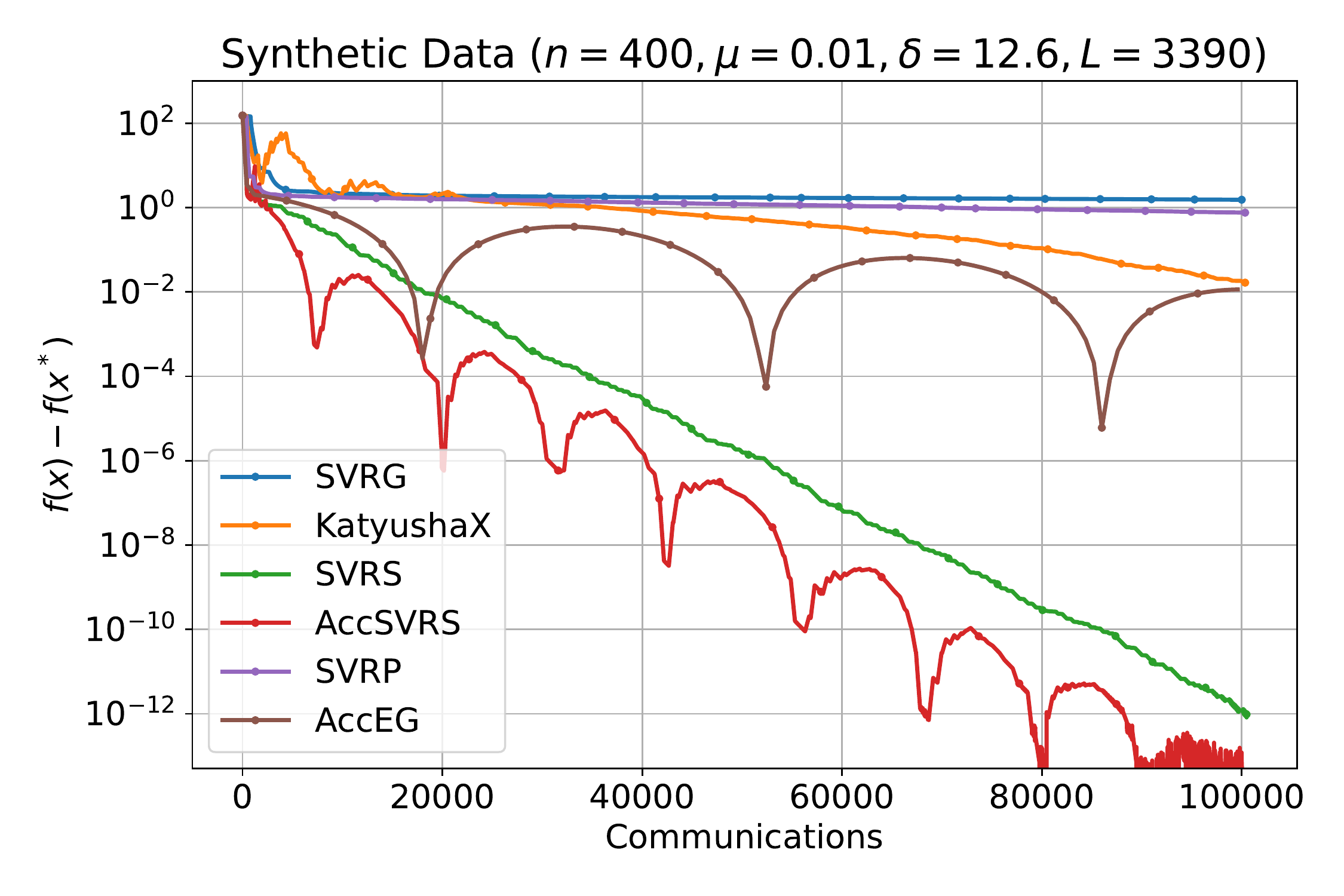}
    \end{subfigure}
    \hspace{-5pt}
    \caption{Numerical experiments on synthetic data. The corresponding coefficients are shown in the title of each graph. We plot the function gap on a log scale versus the number of communication steps, where one exchange of vectors counts as a communication step.}
    \label{fig:exp}
\end{figure}

To demonstrate the advantages of our algorithms, we conduct the same numerical experiments as those in \cite{kovalev2022optimal,khaled2022faster}. 
We focus on the linear ridge regression problem with $\ell_2$ regularization, where the average loss $f$ has the formulation:
$ f(\vx) = \frac{1}{n}\sum_{i=1}^n \big[f_i(\vx):= \frac{1}{m} \sum_{j=1}^{m}\left(\vz_{i,j}^\top\vx-y_{i,j}\right)^2+\frac{\mu}{2}\norm{\vx}^2\big]$.
Here $\vz_{i,j} \in \sR^d$ and $y_{i,j} \in \sR, \forall i \in [n], j \in [m]$ serve as the feature and label respectively, and $m$ can be viewed as data size in each local client. 
We consider a synthetic dataset generated by adding a small random noise matrix to the center matrix,
ensuring a small $\delta$.
To capture the differences in convergence rates between our methods and SVRP caused by different magnitudes of $\mu$, we vary $\mu=10^{-i}, i\in \{0,1,2\}$. 
We compare our methods (SVRS and AccSVRS) against SVRG, KatyushaX, SVRP (Catalyzed SVRP is somehow hard to tune so we omit it), and Accelerated Extragradient (AccEG) using their theoretical step sizes, except that we scale the interpolation parameter $\tau$ in KatyushaX and AccSVRS for producing practical performance (see Appendix \ref{app:exp} for detail). 
From Figure \ref{fig:exp}, we can observe that for a large $\mu$, SVRP outperforms
existing algorithms due to its high-order dependence on $\mu$.
However, when the problem becomes ill-conditioned with a small $\mu$, AccSVRS exhibits significant improvements compared to other algorithms.

\section{Conclusion}\label{sec:con}
In this paper, we have introduced two new algorithms, SVRS and its directly accelerated version AccSVRS, and established improved communication complexity bounds for distributed optimization under the similarity assumption.
Our rates are entirely smoothness-free and only require strong convexity of the objective, average similarity, and proximal friendliness of components.
Moreover, our methods also have nearly optimal gradient complexity (leaving out the log term) when applied to smooth components in specific cases.
It would be interesting to remove additional log terms to achieve both optimal
communication and local gradient calls as \cite{kovalev2022optimal}, as well as investigating the complexity under other similarity assumptions (such as SS instead of AveSS) in future research.

\begin{ack}
Lin, Han, and Zhang have been supported by the National Key Research and Development Project of China (No. 2022YFA1004002) and the National Natural Science Foundation of China (No. 12271011).
Ye has been supported by the National Natural Science Foundation of China (No. 12101491). 

\end{ack}
\bibliography{reference}
\bibliographystyle{plainnat}

\newpage

\newpage 
\appendix
\section{Auxiliary Results}\label{app:aux}
\begin{prop}[Three-point identity {\cite[Lemma3.1]{chen1993convergence}}]
    Given a differentiable function $h\colon \sR^d \to \sR$, we have the following equality:
    \begin{equation}\label{eq:div-eq}
        \dotprod{\vx-\vy,\nabla h(\vy)-\nabla h(\vz)} = D_h(\vx,\vz)-D_h(\vx,\vy)-D_h(\vy,\vz), \forall \vx,\vy,\vz \in \sR^d.
    \end{equation}
\end{prop}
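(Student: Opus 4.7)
The plan is to prove this classical three-point identity by direct algebraic expansion of both sides using the definition of the Bregman divergence, namely $D_h(\vu,\vv) = h(\vu) - h(\vv) - \dotprod{\nabla h(\vv), \vu - \vv}$. Since the identity involves only three fixed points $\vx, \vy, \vz$ and a single differentiable function $h$, no analytic or limiting argument is needed; everything reduces to bookkeeping of the six scalar terms produced by the three Bregman divergences on the right-hand side.

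Concretely, I would first write out $D_h(\vx,\vz)$, $D_h(\vx,\vy)$, and $D_h(\vy,\vz)$ explicitly from the definition, so that the right-hand side becomes a sum of three function-value terms and three inner-product terms (with appropriate signs). Next I would observe that the function-value contributions $h(\vx), h(\vy), h(\vz)$ all cancel in the combination $D_h(\vx,\vz) - D_h(\vx,\vy) - D_h(\vy,\vz)$, leaving only the three linear terms involving $\nabla h(\vy)$ and $\nabla h(\vz)$. Finally, I would group the two terms carrying $\nabla h(\vz)$ using linearity of the inner product to reveal a factor of $\vx - \vy$, at which point the surviving expression collapses to exactly $\dotprod{\vx - \vy, \nabla h(\vy) - \nabla h(\vz)}$, matching the left-hand side.

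There is essentially no obstacle here: the identity is a purely algebraic consequence of the definition of $D_h$ and holds for any differentiable $h$ (convexity is not even required for the identity itself, only for $D_h \ge 0$). The only care needed is tracking signs correctly in the three expansions and recognizing the cancellation pattern; once the $h$-values drop out, the remaining inner products combine in one line.
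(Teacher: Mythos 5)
Your proof is correct: expanding the three Bregman divergences from the definition, the $h(\vx),h(\vy),h(\vz)$ terms cancel and the remaining inner products regroup to $\dotprod{\vx-\vy,\nabla h(\vy)-\nabla h(\vz)}$, exactly as you describe. The paper itself gives no proof (it cites Chen--Teboulle, Lemma 3.1), and your direct algebraic expansion is the standard argument for this identity, so there is nothing to add.
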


\begin{prop}\label{app:ll-l}
    Denote $\forall i \in \sN, X_i = \begin{cases}
    1 & \textrm{with probability} \quad p \\
    0 & \textrm{with probability} \quad 1-p 
    \end{cases}$, and $X_1,X_2,\dots$ are independent and identically distributed random variables. Then $Y := \inf_{i} \{i:X_i=1\}\sim\geo(p)$.
\end{prop}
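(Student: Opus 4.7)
The plan is to verify the probability mass function of $Y$ directly from the definition of $Y$ and the independence of the $X_i$'s, and compare it to the form $\sP(T=k) = (1-p)^{k-1} p$ stipulated for the geometric distribution in the notation paragraph of Section~\ref{sec:pre}.

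First I would fix an arbitrary $k \in \{1, 2, \ldots\}$ and unpack the event $\{Y = k\}$. By the definition $Y = \inf_i \{i : X_i = 1\}$, the event $\{Y = k\}$ is precisely the event that $X_k = 1$ while $X_i = 0$ for every $1 \le i \le k-1$. Thus
\begin{equation*}
\{Y = k\} = \{X_1 = 0\} \cap \{X_2 = 0\} \cap \cdots \cap \{X_{k-1} = 0\} \cap \{X_k = 1\}.
\end{equation*}

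Next I would invoke the mutual independence of $X_1, X_2, \ldots$ to factor the probability, then plug in the Bernoulli marginals $\sP(X_i = 0) = 1-p$ and $\sP(X_i = 1) = p$:
\begin{equation*}
\sP(Y = k) = \prod_{i=1}^{k-1} \sP(X_i = 0) \cdot \sP(X_k = 1) = (1-p)^{k-1} p.
\end{equation*}
Since this matches the geometric p.m.f.\ as defined in the paper for every $k \in \{1, 2, \ldots\}$, the conclusion $Y \sim \geo(p)$ follows. A one-line sanity check that the total probability sums to one via the geometric series $\sum_{k \ge 1} (1-p)^{k-1} p = 1$ (which requires $p > 0$, guaranteed by $p \in (0,1]$) confirms $Y$ is almost surely finite, so the infimum is attained and the construction is well-posed.

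There is no real obstacle here; the only subtle point is ensuring $Y$ is well-defined, i.e., that $\{i : X_i = 1\}$ is almost surely non-empty. For $p = 1$ this is immediate, and for $p \in (0,1)$ it follows from the second Borel--Cantelli lemma or simply from $\sP(Y = \infty) = \lim_{k \to \infty}(1-p)^k = 0$. Hence I would include a brief line handling $\sP(Y < \infty) = 1$ before concluding.
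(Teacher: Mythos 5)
Your proposal is correct and follows exactly the same route as the paper's proof: factor $\sP(Y=k)$ by independence into $\prod_{i=1}^{k-1}\sP(X_i=0)\cdot\sP(X_k=1)=(1-p)^{k-1}p$ and match it to the geometric p.m.f. The only addition is your (welcome but minor) remark that $Y$ is almost surely finite, which the paper omits.
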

\begin{proof}
    We direct verify the probability distribution:
    \[ \sP(Y=k) = \prod_{i=1}^{k-1} \sP(X_i=0)\sP(X_{k}=1) = (1-p)^{k-1}p, \quad k \in \{1,2,\dots\}.\]
    Hence, we see that $Y \sim \geo(p)$.
\end{proof}

\begin{prop}[Proposition \ref{prop:ss-sc} in the main text]
    We have the following properties among SS, AveSS, and SC:
    1) The $\delta$-SS can deduce $\delta$-AveSS, but $\delta$-AveSS can only deduce $\sqrt{n}\delta$-SS. 
    2) If $f_i$'s satisfy $\delta$-SS and $f$ is $\mu$-strongly convex, then for all $i \in [n], f_i(\cdot)+\frac{\delta-\mu}{2}\norm{\cdot}^2$ is convex, i.e., $f_i$ is $(\delta-\mu)$-almost convex \cite{carmon2018accelerated}.
\end{prop}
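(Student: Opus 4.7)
For part~1, both directions reduce to elementary bookkeeping on sums of nonnegative reals. To show $\delta$-SS implies $\delta$-AveSS, I would square the SS bound $\norm{\nabla[f_i{-}f](\vx)-\nabla[f_i{-}f](\vy)} \leq \delta\norm{\vx-\vy}$ for each index $i$ and average over $i \in [n]$; since the right-hand side $\delta^2\norm{\vx-\vy}^2$ does not depend on $i$, it passes through the average unchanged and immediately yields \eqref{eq:avess}. For the converse direction, starting from the $\delta$-AveSS inequality I would multiply through by $n$ to get $\sum_{i=1}^n \norm{\nabla[f_i{-}f](\vx)-\nabla[f_i{-}f](\vy)}^2 \le n\delta^2 \norm{\vx-\vy}^2$, and then use that any single nonnegative summand is dominated by the full sum to conclude $\norm{\nabla[f_i{-}f](\vx)-\nabla[f_i{-}f](\vy)}^2 \le n\delta^2\norm{\vx-\vy}^2$ for every $i$. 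Taking square roots gives $\sqrt{n}\delta$-SS as claimed.

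For part~2, the key ingredient is the standard two-sided quadratic sandwich for functions with Lipschitz gradient: if $g\colon\sR^d\to\sR$ is differentiable and $\nabla g$ is $L$-Lipschitz, then both $g + \tfrac{L}{2}\norm{\cdot}^2$ and $-g + \tfrac{L}{2}\norm{\cdot}^2$ are convex. By the $\delta$-SS hypothesis applied to $g = f_i - f$, this immediately gives convexity of $(f_i - f) + \tfrac{\delta}{2}\norm{\cdot}^2$. I would then add this function to $f - \tfrac{\mu}{2}\norm{\cdot}^2$, which is convex by the $\mu$-strong convexity of $f$ (Definition~\ref{defn:sc}). The $\pm f$ terms cancel and the sum of two convex functions is convex, leaving $f_i + \tfrac{\delta-\mu}{2}\norm{\cdot}^2$ convex, which is exactly the $(\delta-\mu)$-almost convexity claim.

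I do not anticipate any real obstacle. The only point deserving a sentence of care is the invocation of the Lipschitz-gradient-implies-convex-after-adding-a-quadratic fact, since it is marginally stronger than Definition~\ref{defn:sm} alone; this is a textbook result (e.g., the equivalence between co-coercivity and smoothness for convex functions) and can be cited without reproof. The whole argument is a two-line manipulation in each part and does not use any structural feature of the distributed setting beyond the definitions themselves.
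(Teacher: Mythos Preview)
Your proposal is correct and matches the paper's proof essentially line for line: part~1 is handled exactly as you describe, and for part~2 the paper also writes $f_i + \tfrac{\delta-\mu}{2}\norm{\cdot}^2$ as the sum of the convex functions $\bigl(\tfrac{\delta}{2}\norm{\cdot}^2 - (f-f_i)\bigr)$ and $\bigl(f - \tfrac{\mu}{2}\norm{\cdot}^2\bigr)$, citing a textbook reference for the Lipschitz-gradient fact just as you anticipated.
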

\begin{proof}
    1) The first part ``$\delta$-SS $\Rightarrow \delta$-AveSS'' is trivial. The second part is because for all $i \in [n]$,
    \[ \norm{\left[\nabla [f_i-f](\vx)- \nabla [f_i-f](\vy)\right]}^2 \leq \sum_{j=1}^n \norm{\left[\nabla [f_j-f](\vx)- \nabla [f_j-f](\vy)\right]}^2 \stackrel{\eqref{eq:avess}}{\leq} n\delta^2\norm{\vx-\vy}^2. \]
    Thus Eq.~\eqref{eq:ss} holds with parameter $\sqrt{n}\delta$.
	
    2) Since $f_i$'s satisfy $\delta$-SS, we get $\forall i \in [n], f-f_i$ is $\delta$-smooth, thus $\frac{\delta}{2}\norm{\vx}^2-[f(\vx)-f_i(\vx)]$ is convex (e.g., \cite[Theorem A.1]{d2021acceleration}). Moreover, we also have $f(\vx)-\frac{\mu}{2}\norm{\vx}^2$ is a convex function since $f$ is $\mu$-strongly convex (e.g., \cite[Theorem A.2]{d2021acceleration}). Therefore, we obtain that
    \[ f_i(\vx)+\frac{\delta-\mu}{2}\norm{\vx}^2 = \left( \frac{\delta}{2}\norm{\vx}^2-[f(\vx)-f_i(\vx)]\right)+\left(f(\vx)-\frac{\mu}{2}\norm{\vx}^2\right) \]
    is also convex. The proof is finished.
\end{proof}

\begin{lemma}[{\citet[Fact 2.3]{allen2018katyusha}}]\label{lemma:fact}
    Given sequence $D_0, D_1, \dots$ of reals, if $N \sim \text{Geom}(p)$, then
    \begin{equation}\label{eq:fact-geom}
        \E_N[D_{N-1} - D_{N}] = p \E[ D_0 - D_{N}], \E_N[D_{N-1}] = (1-p)\E[D_N] + p D_0
    \end{equation}
\end{lemma}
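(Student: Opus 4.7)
The plan is to prove the second identity directly from the probability mass function of the geometric distribution, and then obtain the first identity as a trivial consequence by linearity of expectation.

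First I would write out the expectation of $D_{N-1}$ by unfolding the definition of $N \sim \geo(p)$:
\[ \E[D_{N-1}] = \sum_{k=1}^{\infty} (1-p)^{k-1} p \cdot D_{k-1}. \]
Then I would isolate the $k=1$ term, which contributes $p D_0$, and perform an index shift $j = k-1$ on the remaining sum. Specifically,
\[ \sum_{k=2}^{\infty} (1-p)^{k-1} p \cdot D_{k-1} = (1-p) \sum_{j=1}^{\infty} (1-p)^{j-1} p \cdot D_j = (1-p) \E[D_N]. \]
Combining these two pieces yields $\E[D_{N-1}] = p D_0 + (1-p)\E[D_N]$, which is the second identity.

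For the first identity, subtract $\E[D_N]$ from both sides of the second identity:
\[ \E[D_{N-1}] - \E[D_N] = p D_0 + (1-p)\E[D_N] - \E[D_N] = p D_0 - p \E[D_N] = p \E[D_0 - D_N], \]
which is exactly the first claim after moving $\E[D_N]$ back inside the expectation on the left-hand side.

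There is no real obstacle here — the entire proof is a one-line index shift on a geometric series. The only thing I would double-check is that the sequence $\{D_k\}$ is sufficiently well-behaved (e.g., the sums converge absolutely) so that the manipulations above are justified; in the applications in this paper the $D_k$'s will be deterministic (or bounded) Lyapunov-type quantities indexed by the inner-loop counter, so absolute convergence is immediate and no additional integrability hypothesis is needed beyond what is implicit in writing $\E[D_N]$.
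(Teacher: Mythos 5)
Your proof is correct and is exactly the standard index-shift argument behind \citet[Fact 2.3]{allen2018katyusha}; the paper itself states this lemma as a cited fact without reproving it, so there is nothing to diverge from. Your remark about absolute convergence is the right (and only) caveat, and it is indeed harmless in the paper's applications where the $D_k$ are potential-function values along a finite-in-expectation inner loop.
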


\begin{lemma}[{\citet[Lemma 2.4]{allen2018katyusha}}]
    If $g(\cdot)$ is proper convex and $\sigma$-strongly convex and 
    $\vz_{k+1} = \argmin_{\vz\in\sR^d} \frac{1}{2\alpha}\norm{\vz-\vz_k}^2 + \dotprod{\bm{\xi}, \vz} + g(\vz)$, then for every $\vx \in \sR^d$, we have
    \begin{equation}\label{eq:aux}
        \dotprod{\bm{\xi}, \vz_k-\vx} + g(\vz_{k+1})-g(\vx) \leq \frac{\alpha}{2}\norm{\bm{\xi}}^2+\frac{\norm{\vx-\vz_k}^2}{2\alpha}-\frac{(1+\sigma\alpha)\norm{\vx-\vz_{k+1}}^2}{2\alpha}.
    \end{equation}
\end{lemma}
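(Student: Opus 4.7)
The plan is to derive the inequality by combining the first-order optimality condition for $\vz_{k+1}$, the strong convexity of $g$, and a standard three-point identity. Let $F(\vz) := \frac{1}{2\alpha}\norm{\vz - \vz_k}^2 + \dotprod{\bm{\xi}, \vz} + g(\vz)$, which is $(1/\alpha + \sigma)$-strongly convex. Since $\vz_{k+1}$ is its unique minimizer, $\vzero \in \partial F(\vz_{k+1})$, so there exists $\vs \in \partial g(\vz_{k+1})$ with $\vs = -\bm{\xi} - \frac{1}{\alpha}(\vz_{k+1} - \vz_k)$.

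Next I would apply $\sigma$-strong convexity of $g$ at $\vz_{k+1}$ against $\vx$ via this subgradient:
\begin{equation*}
    g(\vx) \geq g(\vz_{k+1}) + \dotprod{\vs, \vx - \vz_{k+1}} + \frac{\sigma}{2}\norm{\vx - \vz_{k+1}}^2,
\end{equation*}
substitute the formula for $\vs$, and rearrange to bound $g(\vz_{k+1}) - g(\vx)$. Adding $\dotprod{\bm{\xi}, \vz_k - \vx}$ to both sides and regrouping the $\bm{\xi}$ terms yields
\begin{equation*}
    \dotprod{\bm{\xi}, \vz_k - \vx} + g(\vz_{k+1}) - g(\vx) \leq \dotprod{\bm{\xi}, \vz_k - \vz_{k+1}} + \tfrac{1}{\alpha}\dotprod{\vz_{k+1} - \vz_k, \vx - \vz_{k+1}} - \tfrac{\sigma}{2}\norm{\vx - \vz_{k+1}}^2.
\end{equation*}

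Finally I would estimate the two inner products. For the first, Young's inequality gives $\dotprod{\bm{\xi}, \vz_k - \vz_{k+1}} \leq \frac{\alpha}{2}\norm{\bm{\xi}}^2 + \frac{1}{2\alpha}\norm{\vz_k - \vz_{k+1}}^2$. For the second, the three-point identity $2\dotprod{\vb - \va, \vc - \vb} = \norm{\va - \vc}^2 - \norm{\va - \vb}^2 - \norm{\vb - \vc}^2$ (with $\va = \vz_k$, $\vb = \vz_{k+1}$, $\vc = \vx$) produces $\frac{1}{\alpha}\dotprod{\vz_{k+1} - \vz_k, \vx - \vz_{k+1}} = \frac{1}{2\alpha}(\norm{\vx - \vz_k}^2 - \norm{\vz_{k+1} - \vz_k}^2 - \norm{\vx - \vz_{k+1}}^2)$. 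Adding these, the $\norm{\vz_k - \vz_{k+1}}^2$ contributions cancel exactly, leaving the claimed right-hand side with coefficient $-(1 + \sigma\alpha)/(2\alpha)$ on $\norm{\vx - \vz_{k+1}}^2$.

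There is no real obstacle here; the lemma is a fundamental prox-inequality and every step is mechanical once the optimality condition is written down. The only mild care needed is the handling of $g$ as merely proper convex (possibly non-differentiable), so I would work with a subgradient $\vs \in \partial g(\vz_{k+1})$ rather than $\nabla g(\vz_{k+1})$, but this does not change the argument in any substantive way. The cancellation of the $\|\vz_k - \vz_{k+1}\|^2$ term between Young's inequality and the three-point identity is the only spot where an arithmetic slip could creep in, so I would double-check that the $1/(2\alpha)$ coefficients are matched before concluding.
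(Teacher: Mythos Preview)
Your proof is correct and is the standard argument for this prox-type inequality. Note, however, that the paper itself does not supply a proof of this lemma: it is simply quoted as an auxiliary result from \cite{allen2018katyusha} (Lemma~2.4), so there is no in-paper proof to compare against. Your derivation via the optimality subgradient, $\sigma$-strong convexity, Young's inequality, and the three-point identity is exactly the routine justification one would expect, and the cancellation of the $\norm{\vz_k-\vz_{k+1}}^2$ terms goes through as you indicated.
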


\begin{lemma}[{\citet[Lemma~2.10]{han2021lower}}]\label{lem:geo}
    Let $\{Y_i\}_{i=1}^m$ be independent random variables such that $Y_i \sim \geo(p_i)$ with $p_i > 0$.
    Then for $m \ge 2$, we have
    \begin{align*}
        \sP \left( \sum_{i=1}^m Y_i > \frac{m^2}{4(\sum_{i=1}^m p_i)} \right) \ge \frac{1}{9}.
    \end{align*}
\end{lemma}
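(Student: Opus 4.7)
The plan is to prove this concentration statement for a sum of independent geometric random variables via the second moment method (Paley--Zygmund inequality). Let $S := \sum_{i=1}^m Y_i$. Using the standard formulas $\E Y_i = 1/p_i$ and $\Var Y_i = (1-p_i)/p_i^2$ for $Y_i \sim \geo(p_i)$, the first step is to compute $\E S = \sum_{i=1}^m 1/p_i$ and to bound the second moment. By independence, $\Var S = \sum_i (1-p_i)/p_i^2 \le \sum_i 1/p_i^2$, and the elementary inequality $\sum_i a_i^2 \le (\sum_i a_i)^2$ for nonnegative $a_i$ applied with $a_i = 1/p_i$ yields $\Var S \le (\E S)^2$, hence
\begin{equation*}
    \E[S^2] \;=\; \Var S + (\E S)^2 \;\le\; 2(\E S)^2.
\end{equation*}

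The second step is to relate $\E S$ to the threshold $m^2/(4\sum_i p_i)$ appearing in the lemma. Applying Cauchy--Schwarz with the vectors $(\sqrt{p_i})$ and $(1/\sqrt{p_i})$ gives $m^2 = \bigl(\sum_i 1\bigr)^2 \le \bigl(\sum_i p_i\bigr)\bigl(\sum_i 1/p_i\bigr)$, so $\E S \ge m^2 / \sum_i p_i$. In particular, $\E S / 2 \ge m^2/(2\sum_i p_i) > m^2/(4\sum_i p_i)$, so it suffices to lower bound $\sP(S > \E S/2)$.

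The final step applies the Paley--Zygmund inequality to the nonnegative random variable $S$ with parameter $\theta = 1/2$:
\begin{equation*}
    \sP\left( S > \tfrac{1}{2}\E S \right) \;\ge\; (1-\tfrac{1}{2})^2 \cdot \frac{(\E S)^2}{\E[S^2]} \;\ge\; \frac{1}{4} \cdot \frac{1}{2} \;=\; \frac{1}{8} \;\ge\; \frac{1}{9},
\end{equation*}
and combining with the threshold comparison gives the claim. There is no genuine obstacle here; the only point requiring a bit of care is that one must use both pieces—Cauchy--Schwarz to relate $\E S$ to $m^2/\sum p_i$, and the variance bound $\Var S \le (\E S)^2$ to control the second moment—so that a single clean application of Paley--Zygmund works. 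The slack between the $1/8$ that the argument gives and the $1/9$ claimed in the statement (and, analogously, between the thresholds $\E S/2$ and $m^2/(4\sum p_i)$) simply reflects that the lemma is stated in a loose-but-convenient form; the assumption $m \ge 2$ is used implicitly to ensure $\E S > 0$ and that Cauchy--Schwarz is meaningful.
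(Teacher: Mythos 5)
Your argument is correct: the moment computations ($\E S=\sum_i 1/p_i$, $\Var S\le\sum_i 1/p_i^2\le(\E S)^2$), the Cauchy--Schwarz step relating $\E S$ to $m^2/\sum_i p_i$, and the Paley--Zygmund application with $\theta=1/2$ all check out, yielding the even slightly stronger constant $1/8$. The paper itself gives no proof of this lemma (it is imported verbatim from \citet[Lemma~2.10]{han2021lower}, whose proof is likewise a second-moment argument), so your self-contained derivation is consistent with the intended route; the only minor quibble is that $m\ge 2$ is not actually needed anywhere in your argument ($\E S>0$ and Cauchy--Schwarz hold already for $m=1$).
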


\section{Hessian Similarity}\label{app:dev-hs}
In this section, we show that AveHS (HS) defined in Eq.~\eqref{eq:hes-ss} is equivalent to AveSS (SS).
\begin{prop}
    For twice differentiability $f_i$'s and $f$, AveSS $\Leftrightarrow$ AveHS, SS $\Leftrightarrow$ HS.
\end{prop}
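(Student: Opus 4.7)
The plan is to reduce both equivalences to the classical calculus link between a Lipschitz gradient and a bounded Hessian, applied to the difference functions $g_i := f_i - f$. One direction in each case will use integration along a line segment (the fundamental theorem of calculus applied to $\nabla g_i$), and the other direction will use a limiting argument $t \to 0$ with $\vy = \vx + t\vu$ to extract Hessian information from the gradient condition.

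For the component case SS $\Leftrightarrow$ HS, I would treat each $i$ independently, since both sides are defined componentwise. For HS $\Rightarrow$ SS: write
$\nabla g_i(\vy) - \nabla g_i(\vx) = \int_0^1 \nabla^2 g_i(\vx + s(\vy-\vx))\, (\vy-\vx)\, ds$
and bound the norm by pulling $\|\cdot\|$ inside the integral and using the operator-norm bound $\|\nabla^2 g_i\| \leq \delta$. For SS $\Rightarrow$ HS: fix a unit vector $\vu$, set $\vy = \vx + t\vu$ and divide by $t$; the SS bound gives $\|[\nabla g_i(\vx+t\vu) - \nabla g_i(\vx)]/t\| \leq \delta$, and letting $t\to 0$ yields $\|\nabla^2 g_i(\vx)\vu\| \leq \delta$, hence $\|\nabla^2 g_i(\vx)\| \leq \delta$ by taking the supremum over unit $\vu$.

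For the averaged case AveSS $\Leftrightarrow$ AveHS, the key observation is the quadratic-form reformulation: since $\mM(\vx) := \frac{1}{n}\sum_{i=1}^n [\nabla^2 g_i(\vx)]^2$ is symmetric positive semidefinite, $\|\mM(\vx)\| \leq \delta^2$ holds if and only if $\vu^\top \mM(\vx) \vu \leq \delta^2 \|\vu\|^2$ for all $\vu$, and the latter rewrites as $\frac{1}{n}\sum_i \|\nabla^2 g_i(\vx)\vu\|^2 \leq \delta^2 \|\vu\|^2$. With this in hand, AveHS $\Rightarrow$ AveSS follows by applying the integral representation of $\nabla g_i(\vy) - \nabla g_i(\vx)$ together with Jensen's inequality (or Cauchy-Schwarz in the form $\|\int_0^1 \vv(s)\, ds\|^2 \leq \int_0^1 \|\vv(s)\|^2 ds$), then summing over $i$ and swapping the order of integration and summation. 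Conversely, AveSS $\Rightarrow$ AveHS uses the same $\vy = \vx + t\vu$ trick: dividing the AveSS inequality by $t^2$ and sending $t\to 0$, the finiteness of the sum lets us interchange the limit with $\frac{1}{n}\sum_i$, producing exactly $\frac{1}{n}\sum_i \|\nabla^2 g_i(\vx)\vu\|^2 \leq \delta^2 \|\vu\|^2$.

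The main subtle point, and therefore what I expect to be the hardest step to state cleanly, is the PSD reformulation of the AveHS condition; once that is in place both directions of the averaged equivalence flow in parallel to the componentwise case. The interchange of limit and finite sum, and the bound of an integral norm by Jensen, are routine under twice differentiability (continuity of $\nabla^2 g_i$ suffices). No other analytic machinery is required.
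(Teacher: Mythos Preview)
Your proposal is correct and follows the same overall architecture as the paper: the limiting argument $\vy=\vx+t\vu$, $t\to 0$, for the gradient-to-Hessian direction, and the integral representation of $\nabla g_i(\vy)-\nabla g_i(\vx)$ for the Hessian-to-gradient direction, with the PSD reformulation $\|\mM(\vx)\|\le\delta^2\iff \vu^\top\mM(\vx)\vu\le\delta^2\|\vu\|^2$ as the bridge in the averaged case.

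The one genuine difference is in AveHS $\Rightarrow$ AveSS. You apply Jensen termwise, i.e.\ $\bigl\|\int_0^1 \nabla^2 g_i(\vx_s)(\vy-\vx)\,ds\bigr\|^2\le\int_0^1\|\nabla^2 g_i(\vx_s)(\vy-\vx)\|^2\,ds$, then average over $i$ and use the pointwise bound $\frac{1}{n}\sum_i\|\nabla^2 g_i(\vx_s)\vu\|^2\le\delta^2\|\vu\|^2$. The paper instead expands the squared integral as a double integral $\int_0^1\int_0^1$, obtaining cross terms $\nabla^2 g_i(\vx_s)\nabla^2 g_i(\vx_t)$, and then invokes the symmetric-matrix inequality $\mA_s^2+\mA_t^2\succeq \mA_s\mA_t+\mA_t\mA_s$ (from $(\mA_s-\mA_t)^2\succeq 0$) to reduce to squares before applying AveHS. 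Your Jensen route is shorter and avoids this matrix AM--GM step; the paper's route is slightly more explicit about where the symmetry of the Hessians enters. Either way the argument goes through, and the paper also economizes by proving the single statement \eqref{eq:tmp-ss-hs} once and specializing to $g_i=f_i-f$ (for AveSS/AveHS) and $n=1$ (for SS/HS), whereas you treat the two equivalences separately.
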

\begin{proof}
    Indeed, we only need to prove the following results for twice differentiability $g$:
    \begin{equation}\label{eq:tmp-ss-hs}
        \frac{1}{n}\sum_{i=1}^n \norm{\nabla g_i(\vx)-\nabla g_i(\vy)}^2 \leq \delta^2 \norm{\vy-\vx}^2 \Leftrightarrow \norm{\frac{1}{n} \sum_{i=1}^n\left(\nabla^2 g_i(\vx)\right)^2} \leq \delta^2, \forall \vx, \vy \in\sR^d.
    \end{equation}
    ``$\Rightarrow$'': Taking $\vy=\vx+t\vv, t \in \sR\backslash \{0\}, \vv \in\sR^d, \norm{\vv}=1$ and letting $t \to 0$, we get  
    \begin{eqnarray*}
        \delta^2 &=& \lim_{t\to 0}\frac{\delta^2 \norm{\vx-\vy}^2}{t^2} \geq \lim_{t\to 0} \frac{1}{n} \sum_{i=1}^n \norm{\frac{\left[\nabla g_i(\vx)- \nabla g_i(\vx + t\vv)\right]}{t}}^2 \\
        &=& \frac{1}{n} \sum_{i=1}^n \norm{\nabla^2 g_i(\vx) \vv}^2 = \vv^\top \left[\frac{1}{n} \sum_{i=1}^n\left(\nabla^2 g_i(\vx)\right)^2\right] \vv.
    \end{eqnarray*}
    The final equality uses the fact that $\nabla^2 g_i(\vx)$ is a symmetric matrix.
    Now by the arbitrary of $\vv \in \sR^d$ with $\norm{\vv}=1$, we get $ \norm{\frac{1}{n} \sum_{i=1}^n\left(\nabla^2 g_i(\vx)\right)^2}\leq \delta^2$.
	
    ``$\Leftarrow$'': We use the integral formulation:
    \begin{eqnarray*}
        &&\frac{1}{n}\sum_{i=1}^n \norm{\nabla g_i(\vx)-\nabla g_i(\vy)}^2 = 
        \frac{1}{n}\sum_{i=1}^n \norm{\int_0^1 \nabla^2 g_i(\vx+t(\vy-\vx))\left(\vy-\vx\right)dt }^2 \\
        &=& \left(\vy-\vx\right) \left[\frac{1}{n}\sum_{i=1}^n \int_0^1 \nabla^2 g_i(\vx+s(\vy-\vx))\nabla^2 g_i(\vx+t(\vy-\vx))dsdt\right]\left(\vy-\vx\right) \\
        &\stackrel{\rom{1}}{\leq}& \left(\vy-\vx\right) \left[\frac{1}{n}\sum_{i=1}^n \int_0^1 \left(\nabla^2 g_i(\vx+t(\vy-\vx)) \right)^2 dt\right]\left(\vy-\vx\right) \\ 
        &\leq& \int_0^1 \norm{\frac{1}{n} \sum_{i=1}^n\left(\nabla^2 g_i(\vx+t(\vy-\vx))\right)^2} \cdot \norm{\vy-\vx}^2 dt \leq \delta^2 \norm{\vy-\vx}^2,
    \end{eqnarray*}
    where $\rom{1}$ uses the inequality $\mA_t^2+\mA_s^2 \succeq \mA_s\mA_t+\mA_t\mA_s$ for symmetric matrices $\mA_s, \forall s \in [0, 1]$ since $(\mA_t-\mA_s)^2 \succeq 0$, and the final inequality uses the assumption.
	
    Hence, Eq.~\eqref{eq:tmp-ss-hs} is proved. Now choosing $g_i=f_i-f, \forall i \in [n]$, we obtain ``AveSS $\Leftrightarrow$ AveHS''. Additionally, letting $n=1$ and noting that $\norm{\left(\nabla^2 g_i(\vx)\right)^2} =\norm{\nabla^2 g_i(\vx)}^2$, we obtain ``SS $\Leftrightarrow$ HS''. The proof is finished.
\end{proof}

\section{Concrete Complexity of Catalyst SVRP}\label{app:Cata-svrp}
Inherited from the computation of \cite[Appendix G.1]{khaled2022faster}, we see that the total iterations of Catalyst SVRP is
\begin{eqnarray*}
    \E T_{\mathrm{iter}}^{\mathrm{total}} &=& 
    8\sqrt{\frac{\mu+\gamma}{\mu}} \max\left\{\frac{\delta^2}{(\gamma+\mu)^2},n\right\} \log \left(\frac{f(\vx_0)-f(\vx_*)}{\varepsilon}\cdot \frac{32(\mu+\gamma)}{\mu}\right) \log \iota, \\
    \iota &:=& A\left(\frac{2}{1-\rho}+\frac{2592\gamma}{\mu(1-\rho)^2(\sqrt{q}-\rho)^2}\right),
\end{eqnarray*}
where $\rho = \sqrt{q}/2 = \frac{\sqrt{\mu/(\mu+\gamma)}}{2} \in (0, \frac{1}{2})$, $A = \frac{L+\gamma}{\mu+\gamma}\left(1+\frac{(\gamma+\mu)^2 n}{\delta^2}\right)$.
Letting $\gamma = \max\left\{\frac{\delta}{\sqrt{n}}-\mu, 0\right\}$, we recover the complexity:
\begin{eqnarray*}
    \E T_{\mathrm{iter}}^{\mathrm{total}} &=&  
    8\max\left\{n,n^{3/4}\sqrt{\frac{\delta}{\mu}}\right\} \log \left(\max\left\{32, \frac{32\delta}{\mu\sqrt{n}}\right\} \cdot \frac{f(\vx_0)-f(\vx_*)}{\varepsilon} \right) \log\iota, \\
    \iota &=& A\left(\frac{2}{1-\rho}+\frac{2592\gamma}{\mu(1-\rho)^2(\sqrt{q}-\rho)^2}\right) = \Theta\left(A\left(1+\frac{\gamma(\mu+\gamma)}{\mu^2}\right) \right) = \Theta\left(\frac{A(\mu+\gamma)^2}{\mu^2}\right).
\end{eqnarray*}
When $\delta/\mu \leq \sqrt{n}$, leading to $\gamma=0$, then we get
\[ \frac{A(\mu+\gamma)^2}{\mu^2} = \frac{L}{\mu}\left(1+\frac{\mu^2 n}{\delta^2}\right) = \Theta\left(\frac{L\mu n}{\delta^2}\right).  \]
Thus, $\E T_{\mathrm{iter}}^{\mathrm{total}}  = \gO\left(\left(n+n^{3/4}\sqrt{\frac{\delta}{\mu}}\right)\log\frac{f(\vx_0)-f(\vx_*)}{\varepsilon}\myred{\log\frac{L\mu n}{\delta^2}}\right)$.

When $\delta/\mu \geq \sqrt{n}$, i.e., $\max\left\{n,n^{3/4}\sqrt{\frac{\delta}{\mu}}\right\}=n^{3/4}\sqrt{\frac{\delta}{\mu}}$, we get 
$\gamma=\frac{\delta}{\sqrt{n}}-\mu \leq L-\mu$ (note that $L\geq \delta \geq \mu, n\geq 1$ by assumption), leading to
\[ \frac{2L}{\mu} \leq \frac{A(\mu+\gamma)^2}{\mu^2} = \frac{2(L+\gamma)(\mu+\gamma)}{\mu^2} \leq \frac{4L^2}{\mu^2}. \] 
Thus, $\E T_{\mathrm{iter}}^{\mathrm{total}}  = \gO\left(\left(n+n^{3/4}\sqrt{\frac{\delta}{\mu}}\right)\log\frac{f(\vx_0)-f(\vx_*)}{\varepsilon}\myred{\log\frac{L}{\mu}}\right)$ (for small enough error $\varepsilon$).

\section{Proofs for Section \ref{sec:algo-theory}}\label{app:proof_upper}
The complete procedure of SVRS is presented in Algorithm~\ref{algo:svrs}.
\begin{algorithm}[t]
    \caption{Stochastic Variance-Reduced Sliding (SVRS)}\label{algo:svrs}
    \begin{algorithmic}[1]
        \STATE \textbf{Input:} $\vw_0  \in\sR^d, p \in (0, 1),  \theta>0, K \in \{1,2,\dots\}$
        \FOR{$ k = 0,1,2,\dots, K-1$}
        \STATE $\vw_{k+1} =$ \svrs$(f, \vw_{k}, \theta, p)$
        \ENDFOR
        \STATE \textbf{Output:} $\vw_{K}$
    \end{algorithmic}
\end{algorithm}
Before giving the omit proofs,
we need the following one-step lemma. 
\begin{lemma}\label{lemma:one-step}
    Suppose Assumption \ref{ass:1} holds. If the step size $\theta \leq 1/(2\sqrt{n}\delta)$ in \svrs (Algorithm \ref{algo:l-svrs}), then the following inequality holds for all $\vx \in \sR^d$ that is independent to the index $i_k$:
    \begin{align}
        \E_t [f(\vx_{t+1}){-}f(\vx)] \leq \ & \E_t D_{h}(\vx,\vx_t) {-} \left(1 {+} \frac{\mu\theta/2}{1+\sqrt{n}\theta\delta}\right) D_{h}(\vx,\vx_{t+1}) {+} \frac{2\theta^2\delta^2}{(1-\sqrt{n}\theta\delta)^2} D_h(\vw_0, \vx_t) \nonumber \\
        &+\frac{2+\mu\theta}{2\mu} \left[\norm{\nabla A_{\theta}^t(\vx_{t+1})}^2-\frac{\mu}{20\theta}\norm{\vx_t-\argmin_{\vx \in\sR^d} A_{\theta}^{t}(\vx)}^2 \right]. \label{eq:lemma-2}
    \end{align}
\end{lemma}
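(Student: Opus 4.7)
The plan is to rewrite the proximal objective's gradient in the Bregman geometry of $h$, chain a three-point identity with $\mu$-strong convexity of $f$, and then separately handle the variance of the VR-gradient estimator and the inexactness of the proximal solve.

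As a starting point, I would note that the definition of $h$ in Eq.~\eqref{eq:func-h} gives $\nabla h(\vu) + \nabla f(\vu) = \nabla f_1(\vu) + \vu/\theta$, so that the gradient of the proximal objective reads
\[ \nabla A_\theta^t(\vx) = [\nabla h(\vx) - \nabla h(\vx_t)] + [\nabla f(\vx) - \nabla f(\vx_t)] + \tilde{\nabla}_t,\]
with $\tilde{\nabla}_t := \nabla f_{i_t}(\vx_t) - \vg_t$ a variance-reduced estimator satisfying $\E_t \tilde{\nabla}_t = \nabla f(\vx_t)$. Setting $\vx \mapsto \vx_{t+1}$, dotting with $\vx - \vx_{t+1}$, applying the three-point identity \eqref{eq:div-eq} to the $\nabla h$-piece and $\mu$-strong convexity of $f$ to the $\nabla f(\vx_{t+1})$-piece, one reaches the core inequality
\[ f(\vx_{t+1}) - f(\vx) \le D_h(\vx,\vx_t) - D_h(\vx,\vx_{t+1}) - D_h(\vx_{t+1},\vx_t) - \tfrac{\mu}{2}\norm{\vx - \vx_{t+1}}^2 + E_{\mathrm{var}} + E_{\mathrm{err}},\]
with $E_{\mathrm{var}} := \langle \vx - \vx_{t+1},\, \tilde{\nabla}_t - \nabla f(\vx_t)\rangle$ and $E_{\mathrm{err}} := -\langle \vx - \vx_{t+1},\, \nabla A_\theta^t(\vx_{t+1})\rangle$.

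Next, I would take $\E_t$ and control $E_{\mathrm{var}}$. Since $\vx$ and $\vx_t$ are $i_t$-independent while $\tilde{\nabla}_t - \nabla f(\vx_t)$ is centered, only the $\vx_{t+1}$-coupling survives: $\E_t E_{\mathrm{var}} = \E_t \langle \vx_t - \vx_{t+1}, \tilde{\nabla}_t - \nabla f(\vx_t)\rangle$. Young's inequality with parameter $\beta = (1-\sqrt{n}\theta\delta)/(2\theta)$, together with the AveSS bound $\E_t\norm{\tilde{\nabla}_t - \nabla f(\vx_t)}^2 \le \delta^2\norm{\vx_t - \vw_0}^2$ and Eq.~\eqref{eq:div-con} to convert the squared distance into $D_h(\vw_0, \vx_t)$, gives $\E_t E_{\mathrm{var}} \le \tfrac12 \E_t D_h(\vx_{t+1},\vx_t) + \tfrac{2\theta^2\delta^2}{(1-\sqrt{n}\theta\delta)^2} D_h(\vw_0,\vx_t)$, and the $\tfrac12 D_h(\vx_{t+1},\vx_t)$ is absorbed by the negative $D_h(\vx_{t+1},\vx_t)$ from the main inequality.

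Finally, I would bound $E_{\mathrm{err}}$ by Young's with coefficient $\mu/2$: $E_{\mathrm{err}} \le \tfrac\mu4\norm{\vx-\vx_{t+1}}^2 + \tfrac1\mu\norm{\nabla A_\theta^t(\vx_{t+1})}^2$. Merging the first piece with $-\tfrac\mu2\norm{\vx-\vx_{t+1}}^2$ and invoking Eq.~\eqref{eq:div-con} produces exactly the advertised $-\tfrac{\mu\theta/2}{1+\sqrt n\theta\delta}D_h(\vx,\vx_{t+1})$, while $\tfrac1\mu\norm{\nabla A_\theta^t(\vx_{t+1})}^2$ supplies the $\tfrac1\mu$-part of the target coefficient $\tfrac{2+\mu\theta}{2\mu} = \tfrac1\mu+\tfrac\theta2$. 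To produce the leftover $\tfrac\theta2\norm{\nabla A_\theta^t(\vx_{t+1})}^2 - \tfrac{2+\mu\theta}{40\theta}\norm{\vx_t - \vx_{t+1}^*}^2$, I would introduce the exact minimizer $\vx_{t+1}^*$ as a bridge inside the residual $D_h(\vx_{t+1},\vx_t)$ term, exploit the $(\tfrac1\theta - \sqrt n\delta + \mu)$-strong convexity of $A_\theta^t$ together with $\nabla A_\theta^t(\vx_{t+1}^*) = 0$ to upgrade $\norm{\vx_{t+1} - \vx_{t+1}^*}$-quantities into $\norm{\nabla A_\theta^t(\vx_{t+1})}^2$, and apply a second Young's split. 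The main obstacle will be calibrating this last split so that (i) any fresh $D_h(\vx_{t+1},\vx_t)$ stays within the slack left over from $E_{\mathrm{var}}$, and (ii) the coefficient of $\norm{\vx_t - \vx_{t+1}^*}^2$ matches $\tfrac{2+\mu\theta}{40\theta}$ precisely, so that the approximation criterion \eqref{eq:cond-app1} will make the bracketed error in \eqref{eq:lemma-2} non-positive in the downstream one-epoch analysis.
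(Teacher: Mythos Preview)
Your proposal is correct and matches the paper's proof essentially step for step: the same gradient identity for $\nabla A_\theta^t$, the same three-point/strong-convexity decomposition, the same pair of Young's inequalities for the variance and inexactness terms, and the same exact-minimizer bridge exploiting the $(\tfrac{1}{\theta}-\sqrt{n}\delta+\mu)$-strong convexity of $A_\theta^t$. For the last step you flag as an obstacle, the paper simply converts the residual $-\tfrac12 D_h(\vx_{t+1},\vx_t)$ to $-\tfrac{1}{8\theta}\|\vx_{t+1}-\vx_t\|^2$ via \eqref{eq:div-con} and $\theta\le 1/(2\sqrt{n}\delta)$, then uses $-\|\vx_{t+1}-\vx_t\|^2\le \|\vx_{t+1}-\vx_{t+1}^*\|^2-\tfrac12\|\vx_t-\vx_{t+1}^*\|^2$ together with $\|\vx_{t+1}-\vx_{t+1}^*\|\le \tfrac{\theta}{1-(\sqrt{n}\delta-\mu)\theta}\|\nabla A_\theta^t(\vx_{t+1})\|$, which delivers exactly the $\tfrac{\theta}{2}$ and $\tfrac{1}{16\theta}\ge\tfrac{2+\mu\theta}{40\theta}$ constants you need without any ``fresh'' Bregman terms appearing.
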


\begin{proof}
    First, note that
    \begin{equation}\label{eq:grad-A1}
    \begin{aligned}
        \nabla A_{\theta}^t(\vx) &= \frac{\vx-\vx_t}{\theta} + \nabla f_1(\vx) - \nabla f_1(\vx_t) + \nabla f_{i_t}(\vx_t)-[\nabla f_{i_t}(\vw_0)-\nabla f(\vw_0)] \\
        &= \nabla f(\vx) + \nabla h(\vx) - \nabla h(\vx_t) + \nabla (f_{i_t}-f)(\vx_t)- \nabla (f_{i_t}-f)(\vw_0).
    \end{aligned}
    \end{equation}
    Now we begin from the strong convexity of function $f$ in Assumption \ref{ass:1},
    \begin{eqnarray}
        && \E_t [f(\vx_{t+1})-f(\vx)] \stackrel{\eqref{eq:sc}}{\leq} \E_t \dotprod{\vx-\vx_{t+1}, -\nabla f(\vx_{t+1})} - \frac{\mu}{2}\norm{\vx_{t+1}-\vx}^2 \nonumber \\
        &\stackrel{\eqref{eq:grad-A1}}{=}& \E_t \dotprod{\vx-\vx_{t+1}, \nabla h(\vx_{t+1})-\nabla h(\vx_{t})} +\dotprod{\vx-\vx_{t+1}, \nabla (f_{i_t}-f)(\vx_{t})-\nabla (f_{i_t}-f)(\vw_0)} \nonumber \\
        &&- \dotprod{\vx-\vx_{t+1}, \nabla A_{\theta}^t(\vx_{t+1})} - \frac{\mu}{2}\norm{\vx-\vx_{t+1}}^2  \nonumber\\
        &\stackrel{\rom{1}}{=}& \E_t D_{h}(\vx,\vx_t){-}D_{h}(\vx,\vx_{t+1}){-}D_{h}(\vx_{t+1}, \vx_t) +\dotprod{\myred{\vx_t}{-}\vx_{t+1}, \nabla (f_{i_t}{-}f)(\vx_{t}){-}\nabla (f_{i_t}{-}f)(\vw_0)} \nonumber \\
        &&-\dotprod{\vx-\vx_{t+1}, \nabla A_{\theta}^t(\vx_{t+1})} - \frac{\mu}{2}\norm{\vx_{t+1}-\vx}^2 \nonumber \\
        &\leq& \E_t D_{h}(\vx,\vx_t)-D_{h}(\vx, \vx_{t+1})-D_{h}(\vx_{t+1},\vx_t) \nonumber \\
        &&+ \frac{1-\sqrt{n}\theta\delta}{4\theta} \norm{\vx_{t+1}-\vx_t}^2 + \frac{\theta}{1-\sqrt{n}\theta\delta}\norm{\nabla (f_{i_t}-f)(\vx_{t})-\nabla (f_{i_t}-f)(\vw_0)}^2 \nonumber \\
        &&+ \left[\frac{\mu}{4}\norm{\vx_{t+1}-\vx}^2 + \frac{1}{\mu}\norm{\nabla A_{\theta}^t(\vx_{t+1})}^2\right] - \frac{\mu}{2}\norm{\vx_{t+1}-\vx}^2 \label{eq:tmp-1}
    \end{eqnarray}
    where $\rom{1}$ uses Eq.~\eqref{eq:div-eq} and $\E_{i_t}\dotprod{\vx_t-\vx, \nabla (f_{i_t}-f)(\vx_t)-\nabla (f_{i_t}-f)(\vw_0)}=0$ since $\vx_t-\vx$ is independent to $i_t$ and the final inequality uses $\dotprod{\va,\vb} \leq t^2\norm{\va}^2+\frac{\norm{\vb}^2}{4t^2}$ twice.
    Next, we continue using Eq.~\eqref{eq:div-con} to convert $\norm{\cdot}$ with $D_h(\cdot,\cdot)$ by assumption $\theta \leq 1/(2\sqrt{n}\delta)$:
    \begin{eqnarray*}
        && \E_t [f(\vx_{t+1})-f(\vx)] \\
        &\stackrel{\eqref{eq:tmp-1}\eqref{eq:div-con}}{\leq}& \E_t D_{h}(\vx,\vx_t)-\left(1+\frac{\mu\theta/2}{1+\sqrt{n}\theta\delta}\right) D_{h}(\vx,\vx_{t+1}) - \frac{1-\sqrt{n}\theta\delta}{4\theta} \norm{\vx_{t+1}-\vx_t}^2 \\
        &&+ \frac{\theta}{1-\sqrt{n}\theta\delta}\norm{\nabla (f_{i_t}-f)(\vx_{t})-\nabla (f_{i_t}-f)(\vw_0)}^2+ \frac{1}{\mu}\norm{\nabla A_{\theta}^t(\vx_{t+1})}^2 \\
        &\stackrel{\eqref{eq:avess}}{\leq}& \E_t D_{h}(\vx,\vx_t)-\left(1+\frac{\mu\theta/2}{1+\sqrt{n}\theta\delta}\right) D_{h}(\vx,\vx_{t+1}) + \frac{\theta\delta^2}{1-\sqrt{n}\theta\delta}\norm{\vx_t-\vw_0}^2 \\
        &&- \frac{1-\sqrt{n}\theta\delta}{4\theta} \norm{\vx_{t+1}-\vx_t}^2 + \frac{1}{\mu}\norm{\nabla A_{\theta}^t(\vx_{t+1})}^2 \\
        &\stackrel{\eqref{eq:div-con}}{\leq}& \E_t D_{h}(\vx,\vx_t)-\left(1+\frac{\mu\theta/2}{1+\sqrt{n}\theta\delta}\right) D_{h}(\vx,\vx_{t+1}) + \frac{2\theta^2\delta^2}{(1-\sqrt{n}\theta\delta)^2} D_h(\vw_0, \vx_t) \\
        &&- \frac{1-\sqrt{n}\theta\delta}{4\theta} \norm{\vx_{t+1}-\vx_t}^2 + \frac{1}{\mu}\norm{\nabla A_{\theta}^t(x_{t+1})}^2.
    \end{eqnarray*}
    Finally, we show the error analysis if an approximate solution, i.e., $\norm{\nabla A_{\theta}^t(\vx_{t+1})} \neq 0$ is allowed. Using Proposition \ref{prop:ss-sc}, we see that $f_1(\vx)+\frac{\sqrt{n}\delta-\mu}{2}\norm{\vx}^2$ is a convex function, leading to $A^t_{\theta}(\vx)$ is $\left(\frac{1}{\theta}-\sqrt{n}\delta+\mu\right)$-strongly convex function.	
    Let $\hat{\vx}_{k+1} \in \argmin_{\vx\in\sR^d} A_{\theta}^t(\vx)$, i.e., $\nabla A_{\theta}^t(\hat{\vx}_{k+1})=0$.
    Since $\theta \leq 1/(2\sqrt{n}\delta)$, we can further bound the last two terms: 
    \begin{eqnarray*}
        &&-\frac{1-\sqrt{n}\theta\delta}{4\theta} \norm{\vx_{t+1}-\vx_t}^2 + \frac{1}{\mu} \norm{\nabla A_{\theta}^t(\vx_{t+1})}^2 \leq \frac{1}{\mu}\norm{\nabla A_{\theta}^t(\vx_{t+1})}^2 -\frac{1}{8\theta} \norm{\vx_{t+1}-\vx_t}^2 \\
        &\leq& \frac{1}{\mu}\norm{\nabla A_{\theta}^t(\vx_{t+1})}^2 + \frac{1}{8\theta} \norm{\vx_{t+1}-\hat{\vx}_{t+1}}^2-\frac{1}{16\theta}\norm{\hat{\vx}_{t+1}-\vx_t}^2 \\
        &\leq& \frac{1}{\mu}\norm{\nabla A_{\theta}^t(\vx_{t+1})}^2 + \frac{\theta}{8(1-(\sqrt{n}\delta-\mu)\theta)^2} \norm{\nabla A_{\theta}^t(\vx_{t+1})-\nabla A_{\theta}^t(\hat{\vx}_{t+1})}^2-\frac{1}{16\theta}\norm{\hat{\vx}_{t+1}-\vx_t}^2 \\
        &\leq& \frac{1}{\mu}\norm{\nabla A_{\theta}^t(\vx_{t+1})}^2 +\frac{\theta}{2} \norm{\nabla A_{\theta}^t(\vx_{t+1})}^2-\frac{1}{16\theta}\norm{\vx_t-\hat{\vx}_{t+1}}^2 \\
        &=& \frac{2+\mu\theta}{2\mu} \left[ \norm{\nabla A_{\theta}^t(\vx_{t+1})}^2-\frac{\mu}{8\theta(2+\mu\theta)}\norm{\vx_t-\argmin_{\vx \in \sR^d} A_{\theta}^{t}(\vx)}^2 \right] \\
        &\leq& \frac{2+\mu\theta}{2\mu} \left[ \norm{\nabla A_{\theta}^t(\vx_{t+1})}^2-\frac{\mu}{20\theta}\norm{\vx_t-\argmin_{\vx \in \sR^d} A_{\theta}^{t}(\vx)}^2\right].
    \end{eqnarray*}
    Therefore, Eq.~\eqref{eq:lemma-2} is proved.
\end{proof}	

\subsection{Proof of Lemma \ref{lemma:one-step-loop}}\label{app:one-step-loop}
\begin{proof}
    Since $\theta = 1/(4\sqrt{n}\delta)$ satisfies the condition required in Lemma \ref{lemma:one-step}, we get
    \[ \E_t [f(\vx_{t+1})-f(\vx)] \stackrel{\eqref{eq:lemma-2}}{\leq} \E_t D_{h}(\vx,\vx_t)-\left(1+\frac{2\mu\theta}{5}\right) D_{h}(\vx,\vx_{t+1}) + \frac{2}{9n} D_h(\vw_0, \vx_t). \]
    Taking $t = T-1$ with $T\sim \geo(p)$ and noting that $\vw^+ = \vx_{T}, \vw_0=\vx_0$, by Lemma \ref{lemma:fact}, we get 
    \begin{eqnarray}
        &&\E [f(\vw^+)-f(\vx)] = \E [f(\vx_{T})-f(\vx)] \nonumber \\
        &\leq& \E D_{h}(\vx,\vx_{T-1})- D_{h}(\vx,\vx_{T}) - \frac{2\mu\theta}{5} D_{h}(\vx,\vx_{T}) + \frac{2}{9n} D_h(\vw_0, \vx_{T-1}) \nonumber \\
        &\stackrel{\eqref{eq:fact-geom}}{=}& \E~ p D_{h}(\vx, \vx_0)- pD_{h}(\vx, \vx_{T})- \frac{2\mu\theta}{5} D_{h}(\vx, \vx_{T}) \nonumber \\
        &&+ \frac{2}{9n} \left[(1-p)D_h(\vw_0, \vx_{T}) + p D_h(\vw_0, \vx_0) \right] \nonumber \\
        &\leq& \E~ p D_{h}(\vx, \vw_0)- pD_{h}(\vx,\vw^+) - \frac{2\mu\theta}{5} D_{h}(\vx, \vw^+) + \frac{2}{9n}D_h(\vw_0, \vw^+) \label{eq:svrs-one-b} \\
        &\stackrel{\eqref{eq:div-eq}}{=}& \E~ p\dotprod{\vx-\vw_0, \nabla h(\vw^+)-\nabla h(\vw_0)}-\frac{9pn-2}{9n} D_h(\vw_0, \vw^+) - \frac{2\mu\theta}{5} D_{h}(\vx, \vw^+). \nonumber 
    \end{eqnarray}
    Thus, Eq.~\eqref{eq:svrs-one} is proved.
\end{proof}

\subsection{Proof of Theorem \ref{thm:svrs-rate}}\label{app:svrs-rate}
\begin{proof}
    Choosing $\vx=\vx_*$ and $\vx=\vw_k$ in Eq.~\eqref{eq:svrs-one-b}, which are all independent to indices $i_1, i_2\dots,i_T$ in \svrs$(f,\vw_k,\theta,p)$, then we get
    \[ \E_k [f(\vw_{k+1})-f(\vx_*)] \leq \E_k p D_{h}(\vx_*, \vw_k)- pD_{h}(\vx_*, \vw_{k+1}) - \frac{2\mu\theta}{5} D_{h}(\vx_*, \vw_{k+1}) + \frac{2}{9n}D_h(\vw_k, \vw_{k+1}). \]
    \[ \E_k [f(\vw_{k+1})-f(\vw_k)] \leq \E_k p D_{h}(\vw_k, \vw_k)- pD_{h}(\vw_k, \vw_{k+1}) - \frac{2\mu\theta}{5} D_{h}(\vw_k, \vw_{k+1}) + \frac{2}{9n}D_h(\vw_k, \vw_{k+1}). \]
    Adding both inequalities together, we could obtain 
    \begin{align*}
        \E \left[2f(\vw_{k+1})-f(\vw_k)-f(\vx_*)\right] \leq & \E p D_{h}(\vx_*, \vw_k)-\left(p+\frac{2\mu\theta}{5}\right)D_{h}(\vx_*,\vw_{k+1}) \\
        &- \left(p+\frac{2\mu\theta}{5}-\frac{4}{9n}\right) D_{h}(\vw_k,\vw_{k+1}).
    \end{align*}
    Noting that $p=1/n$, thus $p+\frac{2\mu\theta}{5}-\frac{4}{9n}>0$. Based on Eq.~\eqref{eq:div-con}, after rearranging the terms, we get
    \[\E [f(\vw_{k+1})-f(\vx_*)] + \frac{1}{2} \left(p+\frac{2\mu\theta}{5}\right)D_{h}(\vx_*,\vw_{k+1}) \leq \E~ \frac{1}{2}[f(\vw_k)-f(\vx_*)]+ \frac{p}{2}D_{h}(\vx_*, \vw_k). \]
    Now we denote the potential function as
    \begin{equation}\label{eq:poten-1}
        \Phi_k = \E [f(\vw_{k})-f(\vx_*)] + \frac{1}{2} \left(p+\frac{2\mu\theta}{5}\right)D_{h}(\vx_*,\vw_{k}).
    \end{equation}
    By $\theta = 1/(4\sqrt{n}\delta)$, we obtain
    \[ \E\Phi_{k+1} \leq \max\left\{1-\frac{1}{2}, \left(1+\frac{2\mu\theta}{5p}\right)^{-1} \right\} \E \Phi_k = \max\left\{1-\frac{1}{2}, \left(1+\frac{2\mu\sqrt{n}}{5\delta}\right)^{-1} \right\} \E \Phi_k. \]
    When $\frac{2\mu\sqrt{n}}{5\delta} \geq 1$, we get $\E\Phi_{k+1} \leq \frac{1}{2} \E \Phi_k$.
    Otherwise, $\frac{2\mu\sqrt{n}}{5\delta} < 1$, by inequality $\frac{1}{1+x}\leq 1-\frac{x}{2}, \forall 0 \leq x \leq 1$, we get $\left(1+\frac{2\mu\sqrt{n}}{5\delta}\right)^{-1} \leq 1-\frac{\mu\sqrt{n}}{5\delta}$. Hence,
    $\E\Phi_{k+1} \leq \left(1-\frac{\mu\sqrt{n}}{5\delta}\right)\E\Phi_{k}$.
    Therefore, we obtain 
    $ \E \Phi_{k+1} \leq \max\left\{1-\frac{1}{2}, 1-\frac{\mu\sqrt{n}}{5\delta} \right\}\E\Phi_k$. 
    Moreover, the initial term
    \begin{eqnarray*}
        \Phi_0 &\stackrel{\eqref{eq:poten-1}}{=}& f(\vw_0)-f(\vx_*) + \frac{1}{2} \left(p+\frac{2\mu\theta}{5}\right)D_{h}(\vx_*,\vw_0) \\
        &\stackrel{\eqref{eq:div-con}}{\leq}& f(\vw_0) - f(\vx_*) + \frac{1}{2} \left(p+\frac{2\mu\theta}{5}\right)\frac{1+\sqrt{n}\theta\delta}{2\theta}\norm{\vx_*-\vw_0}^2 \\
        &\stackrel{\rom{1}}{\leq}& f(\vw_0)-f(\vx_*)+\frac{1}{2}\left(\frac{5\delta}{2\sqrt{n}}+\frac{\mu}{4}\right) \norm{\vw_0-\vx_*}^2 \stackrel{\rom{2}}{\leq} \left[1+\frac{1}{2}\left(\frac{5\delta}{\mu\sqrt{n}}+\frac{1}{2}\right) \right][f(\vw_0) - f(\vx_*)] \\
        &\leq& 3\left(1+\frac{\delta}{\mu\sqrt{n}}\right)[f(\vw_0) - f(\vx_*)],
    \end{eqnarray*}
    where $\rom{1}$ uses $\theta = \sqrt{p}/(4\delta)$ and $\rom{2}$ uses $\frac{\mu}{2}\norm{\vw_0-\vx_*}^2\stackrel{\eqref{eq:sc}}{\leq} f(\vw_0) - f(\vx_*)$.
    Then we finally get
    \[ \E f(\vw_{k})-f(\vx_*) \stackrel{\eqref{eq:poten-1}}{=} \E \Phi_{k} \leq \left(\max\left\{1-\frac{1}{2}, 1-\frac{\mu\sqrt{n}}{5\delta} \right\}\right)^k \cdot 3\left(1+\frac{\delta}{\mu\sqrt{n}}\right)[f(\vw_0)-f(\vx_*)]. \]
    In order to make $\E \Phi_{k} \leq \varepsilon$, we need 
    \[ \exp\left\{- \frac{k}{\max \left\{2, \frac{5\delta}{\mu \sqrt{n}} \right\}} \right\} \cdot 3\left(1+\frac{\delta}{\mu\sqrt{n}}\right)[f(\vw_0)-f(\vx_*)] \le \varepsilon, \] 
    which leads to $k \geq K_1 := \max\left\{2, \frac{5 \delta}{\mu\sqrt{n}}\right\}\log\frac{3\left(1+\frac{\delta}{\mu\sqrt{n}}\right)[f(\vw_0)-f(\vx_*)]}{\varepsilon}$.
	
    Noting that one-epoch communication complexity in \svrs \ is $\Theta(n)$ in expectation when $p=1/n$ (shown in Section \ref{sec:com-com}), we get total communication complexity is $\tilde{\gO}(n+\sqrt{n} \delta/\mu)$.
\end{proof}

\subsection{Proof of Lemma \ref{lemma:one-loop}}\label{app:one-loop}
\begin{proof}
    Based on Lemma \ref{lemma:one-step-loop} and noting that $\vy_{k+1} = $ \svrs$(f,\vx_{k+1}, \theta, p)$, we get
    \begin{equation}\label{eq:svrs-one-1}
    \begin{aligned}
        \E_k [f(\vy_{k+1})-f(\vx)] \stackrel{\eqref{eq:svrs-one}}{\leq} & \E_k p\dotprod{\vx - \vx_{k+1}, \nabla h(\vy_{k+1})-\nabla h(\vx_{k+1})} \\
        &-\frac{7p}{9} D_h(\vx_{k+1}, \vy_{k+1})-\frac{2\mu\theta}{5} D_{h}(\vx, \vy_{k+1}).
    \end{aligned}
    \end{equation}
    Here $\vx \in \sR^d$ should be independent to random indices $i_1^{(k)},i_2^{(k)},\dots,i_T^{(k)}$ in \svrs$(f,\vx_{k+1},\theta,p)$.
    Then we can apply interpolation $\vz_k$ to derive
    \begin{eqnarray}
        \E_k [f(\vy_{k+1})-f(\vx)] &\stackrel{\eqref{eq:svrs-one-1}}{\leq}& \E_k ~ p \dotprod{\vz_k-\vx_{k+1}, \nabla h(\vy_{k+1})-\nabla h(\vx_{k+1})} - \frac{7p}{9} D_h(\vx_{k+1}, \vy_{k+1}) \nonumber \\
        &&+p \dotprod{\vx-\vz_k, \nabla h(\vy_{k+1}) - \nabla h(\vx_{k+1})} - \frac{2\mu\theta}{5} D_{h}(\vx, \vy_{k+1}) \nonumber \\
        &=& \E_k ~\frac{1-\tau}{\tau} \cdot p \dotprod{\vx_{k+1}-\vy_k, \nabla h(\vy_{k+1})-\nabla h(\vx_{k+1})} - \frac{7p}{9} D_h(\vx_{k+1}, \vy_{k+1}) \nonumber \\
        &&+p \dotprod{\vx-\vz_k, \nabla h(\vy_{k+1})-\nabla h(\vx_{k+1})} - \frac{2\mu\theta}{5} D_{h}(\vx, \vy_{k+1}) \nonumber \\
        &\stackrel{\rom{1}}{\leq}& \E_k~\frac{1-\tau}{\tau} \left[f(\vy_k)-f(\vy_{k+1}) - \frac{7p}{9} D_h(\vx_{k+1}, \vy_{k+1})\right] - \frac{7p}{9} D_h(\vx_{k+1}, \vy_{k+1}) \nonumber \\
        &&+p \dotprod{\vz_k - \vx, \nabla h(\vx_{k+1})-\nabla h(\vy_{k+1})} - \frac{2\mu\theta}{5} D_{h}(\vx, \vy_{k+1}) \label{eq:tmp-666}
    \end{eqnarray}
     where $\rom{1}$ uses Eq.~\eqref{eq:svrs-one-1} with $\vx=\vy_k$, which is independent to indices in \svrs$(f,\vx_{k+1},\theta,p)$. We continue obtaining
    \begin{eqnarray*}
        \E_k [f(\vy_{k+1})-f(\vx)] &\stackrel{\eqref{eq:tmp-666}\eqref{eq:div-con}}{\leq}& \E_k ~ \frac{1-\tau}{\tau} \left[f(\vy_k)-f(\vy_{k+1})\right] - \frac{7p}{9\tau} D_h(\vx_{k+1}, \vy_{k+1}) \\
        &&+p \dotprod{\vz_k - \vx, \nabla h(\vx_{k+1})-\nabla h(\vy_{k+1})} - \frac{\mu(1-\sqrt{n}\theta\delta)}{5} \norm{x-y_{k+1}}^2 \\
        &\stackrel{\rom{1}}{\leq}& \E_k ~ \frac{1-\tau}{\tau} \left[f(\vy_k)-f(\vy_{k+1})\right] - \frac{7p}{9\tau} D_h(\vx_{k+1}, \vy_{k+1}) \\
        &&+\E_k \left[\E_{j_k}\dotprod{\vz_k - \vx, \bm{\gG}_{k+1}} - \frac{3\mu}{20} \norm{\vx - \vy_{k+1}}^2 \right] \\
        &\stackrel{\eqref{eq:aux}}{\leq}& \E_k ~ \frac{1-\tau}{\tau} \left[f(\vy_k)-f(\vy_{k+1})\right] - \frac{7p}{9\tau} D_h(\vx_{k+1}, \vy_{k+1}) \\
        &&+\E_k \left[\E_{j_k}\frac{\alpha}{2}\norm{\bm{\gG}_{k+1}}^2 +\frac{\norm{\vx-\vz_k}^2}{2\alpha}-\frac{1+0.3\mu\alpha}{2\alpha}\norm{\vx-\vz_{k+1}}^2\right],
    \end{eqnarray*}
    where $\rom{1}$ uses $\E_{j_k}\bm{\gG}_{k+1} = p \left[\nabla h(\vx_{k+1})-\nabla h(\vy_{k+1})\right]$ and $\sqrt{n}\theta\delta = 1/4$.
	
    Furthermore, we can estimate 
    \begin{eqnarray*}
        \E_{j_k}\norm{\bm{\gG}_{k+1}}^2 &=& p^2 \E_{j_k} \norm{\nabla h(\vx_{k+1})-\nabla h(\vy_{k+1})+\nabla [f-f_{j_k}](\vx_{k+1})-\nabla [f-f_{j_k}](\vy_{k+1})}^2 \\
        &\stackrel{\rom{1}}{=}& p^2 \E_{j_k} \norm{\nabla h(\vx_{k+1}) - \nabla h(\vy_{k+1})}^2+\norm{\nabla [f-f_{j_k}](\vx_{k+1})-\nabla [f-f_{j_k}](\vy_{k+1})}^2 \\
        &\stackrel{\eqref{eq:avess}}{\leq}& p^2 \E_{j_k} \norm{\nabla h(\vx_{k+1}) - \nabla h(\vy_{k+1})}^2 + p^2\delta^2 \norm{\vx_{k+1}-\vy_{k+1}}^2 \\
        &\stackrel{\rom{2}}{\leq}& \frac{2(1+\sqrt{n}\theta\delta)p^2}{\theta}D_h(\vx_{k+1}, \vy_{k+1}) + \frac{2\theta p^2\delta^2}{1-\sqrt{n}\theta\delta} D_h(\vx_{k+1}, \vy_{k+1}) \\
        &= & \frac{5p^2}{2\theta}D_h(\vx_{k+1}, \vy_{k+1}) + \frac{p^2}{6n\theta} D_h(\vx_{k+1}, \vy_{k+1}) \leq \frac{8p^2}{3\theta}D_h(\vx_{k+1}, \vy_{k+1})
    \end{eqnarray*}
    where $\rom{1}$ uses $\E_{j_k}\nabla [f-f_{j_k}](\vx_{k+1})-\nabla [f-f_{j_k}](\vy_{k+1})=\bm{0}$, $\rom{2}$
    uses the convexity and smoothness of $h$ (e.g., \cite[Theorem A.1 \rom{3}]{d2021acceleration}) and Eq.~\eqref{eq:div-con}.
    After rearrangement, we get
    \begin{align*}
        \E_k~ \frac{\alpha}{\tau} [f(\vy_{k+1})-f(\vx)] \leq& \E_k~(1-\tau) \cdot \frac{\alpha}{\tau} [f(\vy_{k})-f(\vx)]+ \frac{\norm{\vx-\vz_k}^2}{2}-\frac{1+0.3\mu\alpha}{2}\norm{\vx-\vz_{k+1}}^2 \\
        & + \alpha\left(\frac{4\alpha p^2}{3\theta}-\frac{7p}{9\tau}\right)D_{h}(\vx_{k+1}, \vy_{k+1}).
    \end{align*}
    Hence, we see that once $2\tau \alpha p\leq \theta$, Eq.~\eqref{eq:acc-1} holds.
\end{proof}

\subsection{Proof of Theorem \ref{thm:accsvrs-rate}}\label{app:accsvrs-rate}
\begin{proof}
    Taking $\vx=\vx_*$ in Eq.~\eqref{eq:acc-1}, which is independent of any index during the process, we get
    \[ \E~ \frac{\alpha}{\tau} [f(\vy_{k+1})-f(\vx_*)] +\frac{(1+0.3\mu\alpha)\norm{\vx_*-\vz_{k+1}}^2}{2} \leq \E~(1-\tau)\cdot \frac{\alpha}{\tau} [f(\vy_{k})-f(\vx_*)] + \frac{\norm{\vx_*-\vz_k}^2}{2}. \]
    Denote the potential function as
    \[ \Phi_k = [f(\vy_{k})-f(\vx_*)] + \frac{\tau(1+0.3\mu\alpha)}{2\alpha}\norm{\vx_*-\vz_k}^2. \]
    We obtain
    \[ \E\Phi_{k+1} \leq \max\left\{1-\tau, \left(1+\frac{\mu\sqrt{n}}{\delta}\cdot \frac{3}{80\tau} \right)^{-1}\right\}\E\Phi_k. \]
    When $\tau= \frac{1}{4} \leq \frac{1}{8}n^{1/4}\sqrt{\frac{\mu}{\delta}}$, then we have that
    \[ \left(1-\tau\right) \left(1+\frac{\mu\sqrt{n}}{\delta}\cdot \frac{3}{80\tau} \right) \geq \left(1-\tau\right) \left(1+ \frac{3}{20\tau} \right) \geq 1 \Rightarrow \E\Phi_{k+1} \leq \left(1-\frac{1}{4}\right)\E\Phi_k. \]
    When $\tau=\frac{n^{1/4}}{8}\sqrt{\frac{\mu}{\delta}} \leq \frac{1}{4}$, we get
    \[ t:= \frac{\mu\sqrt{n}}{\delta}\cdot \frac{3}{80\tau} = \frac{3n^{1/4}}{10} \sqrt{\frac{\mu}{\delta}} \leq \frac{3}{5} \Rightarrow \frac{1}{1+t} \leq 1-\frac{5t}{8} \Rightarrow \E\Phi_{k+1} \leq \left(1-\frac{n^{1/4}}{8} \sqrt{\frac{\mu}{\delta}} \right)\E\Phi_k. \]
    Therefore, we finally obtain
    \[ \E\Phi_{k+1} \leq \max\left\{1-\frac{1}{4}, 1-\frac{n^{1/4}}{8} \sqrt{\frac{\mu}{\delta}} \right\}\E\Phi_k. \]
    By the strong convexity of $f$ in Assumption \ref{ass:1} and the choice of $\tau$ and $\alpha$, the initial term 
    \begin{eqnarray*}
        \Phi_0 &=& [f(\vy_0)-f(\vx_*)] + \frac{\tau(1+0.3\mu\alpha)}{2\alpha}\norm{\vx_*-\vy_0}^2 \\
        &=& [f(\vy_0)-f(\vx_*)] + \left(\frac{8\delta\tau^2}{\sqrt{n}\mu}+0.3\tau\right)\frac{\mu}{2}\norm{\vx_*-\vy_0}^2 \\
        &\leq& \left(1+\frac{1}{8}+\frac{0.3}{4}\right)[f(\vy_0)-f(\vx_*)] \leq 2[f(\vy_0)-f(\vx_*)].
    \end{eqnarray*}
    To obtain $\varepsilon$-error solution, we need 
    \[ k \geq K_2 = \max\left\{4,8 n^{-1/4} \sqrt{\frac{\delta}{\mu}} \right\}\log\frac{2[f(\vy_0)-f(\vx_*)]}{\varepsilon}. \]
    Note that every call of Algorithm \svrs \ requires $4n$ communication in expectation (shown in Section \ref{sec:com-com}).
    The remaining communication in one iteration of AccSVRS need $4$ communication (the master sends $\vx_{k+1}$ and $\vy_{k+1}$ to the client $j_k$, and then receives $\nabla f_{j_k}(\vx_{k+1})$ and $\nabla f_{j_k}(\vy_{k+1})$). Thus one iteration of AccSVRS is $\Theta(n)$ in expectation, leading to the total communication complexity for $\varepsilon$-error solution is $\tilde{\gO}\left(n+n^{3/4}\sqrt{\frac{\delta}{\mu}}\right)$.
\end{proof}

\subsection{Loopless SVRS}\label{app:loopless-svrs}
In this section, we describe the loopless SVRS (Algorithm \ref{algo:ll-svrs}).
By simple facts shown in Proposition~\ref{app:ll-l}, \svrs$(f,\vw_k,\theta,p)$ can be viewed as the inter iteration until $\vw_k$ in loopless SVRS is updated. Thus, the one-step variation in Lemma \ref{lemma:one-step} still holds.
Hence, we can derive a similar convergence rate and communication complexity for loopless SVRS.

\begin{algorithm}[t]
    \caption{Loopless Stochastic Variance-Reduced Sliding (SVRS)}
    \begin{algorithmic}[1]\label{algo:ll-svrs}
        \STATE \textbf{Input:} $\vw_0 \in\sR^d, p \in (0, 1), \theta>0, K \in \{1,2,\dots\}$
        \STATE Initialize $\vx_0 = \vw_0$ and compute $\nabla f(\vw_0)$
        \FOR{$ k = 0,1,2,\dots, K-1$}
        \STATE Sample $i_k \sim \text{Unif}([n])$ and compute $\vg_k = \nabla f_{i_k}(\vw_k)-\nabla f(\vw_k)$
        \STATE Approximately solve the local proximal point problem:
        \[ \vx_{k+1} \approx \argmin_{\vx \in \sR^d} A_{\theta}^k(\vx):=\dotprod{\nabla f_{i_k}(\vx_k)-\vg_k-\nabla f_{1}(\vx_k), \vx-\vx_k} + \frac{1}{2\theta}\norm{\vx-\vx_k}^2+f_1(\vx) \]
        \STATE $\vw_{k+1} = \begin{cases}
        \vx_{k+1} & \textrm{with probability} \quad p \\
        \vw_k & \textrm{with probability} \quad 1-p 
        \end{cases}$
        \ENDFOR
        \STATE \textbf{Output:} $\vw_K$
    \end{algorithmic}
\end{algorithm}

\begin{thm}\label{thm:ll-svrs-rate}
    Suppose Assumption \ref{ass:1} holds. If in loopless SVRS (Algorithm \ref{algo:ll-svrs}), the hyperparameters are set as $\theta = 1/(4 \sqrt{n} \delta), p=1/n$, and the approximate solution in each proximal step satisfies Eq.~\eqref{eq:cond-app1}, then for any error $\varepsilon>0$, when 
    \[ k \geq K_1 := \max\left\{2n, \frac{11\sqrt{n}\delta}{\mu}\right\}\log\frac{3\left(1+\frac{\delta}{\mu\sqrt{n}}\right)[f(\vx_0)-f(\vx_*)]}{\varepsilon}, \]
    i.e., after $\tilde{\gO}(n+\sqrt{n} \delta / \mu)$ communications, we can guarantee that $\E f(\vw_k)-f(\vx_*) \leq \varepsilon$.
\end{thm}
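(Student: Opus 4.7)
The plan is to run a per-iteration Lyapunov analysis that mirrors the epoch-based proof of Theorem \ref{thm:svrs-rate}, but accounts for the geometric distribution of snapshot refreshes being ``unrolled'' into the main loop. The starting point is the observation that Lemma \ref{lemma:one-step} applies verbatim to the loopless iteration $k \to k{+}1$: the role of the snapshot $\vw_0$ is played by the current $\vw_k$, and of the inner iterate $\vx_t$ by $\vx_k$, because the algorithm only uses $\vw_k$ through the control variate $\vg_k = \nabla f_{i_k}(\vw_k)-\nabla f(\vw_k)$, which is exactly the quantity Lemma \ref{lemma:one-step} was proved for. With $\theta = 1/(4\sqrt n \delta)$ (so that $\sqrt n \theta \delta = 1/4$ and $2\theta^2\delta^2/(1-\sqrt n\theta\delta)^2 = 2/(9n)$) and the approximation criterion \eqref{eq:cond-app1} killing the error term, this gives for any $\vx$ independent of $i_k$
\[
\mathbb{E}_k[f(\vx_{k+1}) - f(\vx)] \leq D_h(\vx, \vx_k) - \bigl(1 + \tfrac{2\mu\theta}{5}\bigr) \mathbb{E}_k D_h(\vx, \vx_{k+1}) + \tfrac{2}{9n} D_h(\vw_k, \vx_k).
\]

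Next, I would introduce a three-term Lyapunov potential of the form
\[
\Phi_k = [f(\vw_k)-f(\vx_*)] + \lambda\, D_h(\vx_*, \vx_k) + \kappa\, D_h(\vw_k, \vx_k),
\]
with positive constants $\lambda,\kappa$ to be tuned. Using the snapshot update law $\mathbb{E}_k f(\vw_{k+1}) = (1-p) f(\vw_k) + p\, \mathbb{E}_k f(\vx_{k+1})$ (and the analogue for $D_h(\vw_{k+1},\cdot)$), together with the one-step inequality applied at $\vx = \vx_*$ and at $\vx = \vw_k$, I would establish
\[
\mathbb{E}_k \Phi_{k+1} \leq (1-\rho)\,\Phi_k \quad \text{with} \quad \rho = \Omega\!\bigl(\min\{p,\,\mu\theta\}\bigr).
\]
With the stated choices $p = 1/n$ and $\theta = 1/(4\sqrt n\delta)$, this gives $\rho = \Omega(\min\{1/n,\,\mu/(\sqrt n\delta)\})$, i.e.~$1/\rho = O(\max\{n,\sqrt n\delta/\mu\})$. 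Iterating the contraction and bounding $\Phi_0 = O((1+\delta/(\mu\sqrt n))[f(\vw_0)-f(\vx_*)])$ via strong convexity (exactly as at the end of the proof of Theorem \ref{thm:svrs-rate}) yields the stated iteration count $K_1 = \max\{2n,\,11\sqrt n\delta/\mu\}\log(\cdots/\varepsilon)$. Finally, since each loopless iteration performs $\Theta(1)$ client communications on average---the expensive snapshot-refresh, which costs $\Theta(n)$, is triggered with probability $p = 1/n$, contributing an amortized $\Theta(1)$---the total communication complexity is the same $\tilde{\gO}(n + \sqrt n \delta/\mu)$ as in the epoch-based case.

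The main obstacle is the tuning of $\lambda$ and $\kappa$. The key tension is that the $\tfrac{2}{9n} D_h(\vw_k, \vx_k)$ term on the right of Lemma \ref{lemma:one-step} (the residual variance from partial participation) must be absorbed into $\kappa\, D_h(\vw_k, \vx_k)$, while the $\kappa$-term itself must contract. The latter follows from the snapshot update---with probability $p$, $\vw_{k+1} = \vx_{k+1}$ and the divergence becomes $0$, giving a $(1-p)$ factor---combined with the fact that $D_h(\vw_k,\vx_{k+1})$ can be compared to $D_h(\vw_k,\vx_k)$ plus $D_h(\vx_k,\vx_{k+1})$ using the two-sided bound \eqref{eq:div-con} for $D_h$. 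Matching $p = 1/n$ to the $1/(9n)$ noise coefficient is what makes the loopless rate match the epoch rate. With these constants balanced, the $D_h(\vx_*,\vx_k)$ and function-gap terms contract at the slower rate $\Omega(\mu\theta) = \Omega(\mu/(\sqrt n\delta))$, closing the argument.
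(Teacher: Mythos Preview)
Your proposal is correct and follows essentially the same route as the paper: apply Lemma~\ref{lemma:one-step} with $\vw_0$ replaced by $\vw_k$, instantiate it at $\vx=\vx_*$ and $\vx=\vw_k$, add, and combine with the snapshot law $\E_k D_h(\vw_{k+1},\vx_{k+1})=(1-p)\,\E_k D_h(\vw_k,\vx_{k+1})$ and $\E_k f(\vw_{k+1})=(1-p)f(\vw_k)+p\,\E_k f(\vx_{k+1})$ to obtain a contraction for the three-term potential; the paper's explicit choices are $\lambda=\tfrac{p}{2}\bigl(1+\tfrac{2\mu\theta}{5}\bigr)$ and $\kappa=\tfrac{p}{2(1-p)}$. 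One small simplification: the ``comparison of $D_h(\vw_k,\vx_{k+1})$ to $D_h(\vw_k,\vx_k)+D_h(\vx_k,\vx_{k+1})$'' you mention at the end is unnecessary---once you apply the one-step lemma at $\vx=\vw_k$, the term $-\bigl(1+\tfrac{2\mu\theta}{5}\bigr)D_h(\vw_k,\vx_{k+1})$ appears directly, and the snapshot law converts it to $-\tfrac{1}{1-p}\,\E_k D_h(\vw_{k+1},\vx_{k+1})$ without any triangle-type estimate.
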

\begin{proof}
    Noting that in each step of loopless SVRS, the anchor point is $\vw_k$ instead of $\vw_0$, thus Eq.~\eqref{eq:lemma-2} holds after replacing $\vw_0$ to $\vw_k$.
    Now choosing $\vx=\vx_*$ and $\vx=\vw_k$ in Eq.~\eqref{eq:lemma-2}, which are all independent to index $i_k$, we get
    \[ \E_k [f(\vx_{k+1})-f(\vx_*)] \leq \E_k D_{h}(\vx_*, \vx_k)-\left(1+\frac{\mu\theta/2}{1+\sqrt{n}\theta\delta}\right) D_{h}(\vx_*, \vx_{k+1}) + \frac{2\theta^2\delta^2}{(1-\sqrt{n}\theta\delta)^2} D_h(\vw_k, \vx_k). \]
    \[ \E_k [f(\vx_{k+1})-f(\vw_k)] \leq \E_k D_{h}(\vw_k,\vx_k)- \left(1+\frac{\mu\theta/2}{1+\sqrt{n}\theta\delta}\right) D_{h}(\vw_k,\vx_{k+1}) + \frac{2\theta^2\delta^2}{(1-\sqrt{n}\theta\delta)^2} D_h(\vw_k, \vx_k). \]
    Adding both inequalities together and noting that 
    \[ \E_k D_h(\vw_{k+1}, \vx_{k+1}) = \E_k (1-p) D_h(\vw_{k}, \vx_{k+1}) + p D_h(\vx_{k+1}, \vx_{k+1}) = (1-p) \E_k D_h(\vw_{k}, \vx_{k+1}), \]
    as well as 
    \[ \E_k f(\vw_{k+1}) = \E_k (1-p)f(\vw_k)+pf(\vx_{k+1}), \]
    we could obtain 
    \begin{align*}
        &\E~\frac{2}{p}\left[f(\vw_{k+1})-(1-p)f(\vw_k)\right] - f(\vw_k)-f(\vx_*) \\
        \leq & \E D_{h}(\vx_*, \vx_k)-\left(1+\frac{\mu\theta/2}{1+\sqrt{n}\theta\delta}\right)D_{h}(\vx_*,\vx_{k+1}) + \left(1+\frac{4\theta^2\delta^2}{(1-\sqrt{n}\theta\delta)^2}\right) D_{h}(\vw_k, \vx_k)-D_{h}(\vw_k,\vx_{k+1}) \\
        = &  \E D_{h}(\vx_*, \vx_k)-\left(1+\frac{\mu\theta/2}{1+\sqrt{n}\theta\delta}\right) D_{h}(\vx_*,\vx_{k+1})+ \left(1+\frac{4\theta^2\delta^2}{(1-\sqrt{n}\theta\delta)^2}\right) D_{h}(\vw_k, \vx_k)-\frac{D_{h}(\vw_{k+1},\vx_{k+1})}{1-p}.
    \end{align*}
    Rearranging the terms, we get
    \begin{eqnarray*}
        &&\E [f(\vw_{k+1})-f(\vx_*)] + \frac{p}{2} \left(1+\frac{\mu\theta/2}{1+\sqrt{n}\theta\delta}\right)D_{h}(\vx_*,\vx_{k+1}) + \frac{p}{2(1-p)}D_{h}(\vw_{k+1}, \vx_{k+1}) \\
        &\leq& \E (1-\frac{p}{2})[f(\vw_k)-f(\vx_*)]+ \frac{p}{2}D_{h}(\vx_*, \vx_k) + \frac{p}{2}\left(1+\frac{4\theta^2\delta^2}{(1-\sqrt{n}\theta\delta)^2}\right)D_{h}(\vw_k, \vx_k).
    \end{eqnarray*}
    Now we denote the potential function as
    \[ \Phi_k = \E [f(\vw_{k})-f(\vx_*)] + \frac{p}{2} \left(1+\frac{\mu\theta/2}{1+\sqrt{n}\theta\delta}\right)D_{h}(\vx_*,\vx_{k}) + \frac{p}{2(1-p)}D_{h}(\vw_{k}, \vx_{k}). \]
    Then we obtain
    \[ \E\Phi_{k+1} \leq \max\left\{1-\frac{p}{2}, \left(1+\frac{\mu\theta/2}{1+\sqrt{n}\theta\delta}\right)^{-1}, \left(1+\frac{4\theta^2\delta^2}{(1-\sqrt{n}\theta\delta)^2}\right)(1-p) \right\}\E\Phi_k. \]
    Since we choose $\theta = 1/(4\sqrt{n}\delta)$, we get $\theta\mu \leq \sqrt{n}\theta\delta \leq 1/4$ by Assumption \ref{ass:1}, which shows that
    \[ \left(1+\frac{\mu\theta/2}{1+\sqrt{n}\theta\delta}\right)^{-1} = 1-\frac{\mu\theta/2}{1+\sqrt{n}\theta\delta+\mu\theta/2} = 1-\frac{4\mu\theta}{11} = 1-\frac{\mu}{11\delta\sqrt{n}}. \]
    Additionally, by $p=1/n$ and $\theta = 1/(4\delta\sqrt{n})$, we also have that
    \[ \left(1+\frac{4\theta^2\delta^2}{(1-\sqrt{n}\theta\delta)^2}\right)(1-p) = \left(1+\left(\frac{\frac{1}{2\sqrt{n}}}{1-\frac{1}{4}}\right)^2\right)(1-p) = \left(1+\frac{4p}{9}\right)\left(1-p\right)\leq 1-\frac{5p}{9}. \]
    Therefore, we obtain the ratio between $\E \Phi_{k+1}$ and $\E \Phi_k$:
    \[ \E \Phi_{k+1} \leq \max\left\{1-\frac{p}{2}, 1-\frac{\mu}{11\delta\sqrt{n}}, 1-\frac{5p}{9} \right\}\E\Phi_k \leq \max\left\{1-\frac{p}{2}, 1-\frac{\mu}{11\delta\sqrt{n}} \right\} \E \Phi_k. \]
    Moreover, the initial term
    \begin{align*}
        \Phi_0 &= f(\vw_0)-f(\vx_*) + \frac{p}{2} \left(1+\frac{\mu\theta/2}{1+\sqrt{n}\theta\delta}\right)D_{h}(\vx_*,\vx_0) \\
        &\stackrel{\eqref{eq:div-con}}{\leq} f(\vx_0) - f(\vx_*) + \frac{p}{2} \left(1+\frac{\mu\theta/2}{1+\sqrt{n}\theta\delta}\right)\frac{1+\sqrt{n}\theta\delta}{2\theta}\norm{\vx_*-\vx_0}^2 \\
        &\stackrel{\rom{1}}{\leq} f(\vx_0)-f(\vx_*)+\frac{p}{2}\left(\frac{2.5\delta}{\sqrt{p}}+\frac{\mu}{4}\right) \norm{\vx_0-\vx_*}^2 \stackrel{\rom{2}}{\leq} \left[1+\frac{p}{2}\left(\frac{5\delta}{\mu\sqrt{p}}+\frac{1}{2}\right) \right][f(\vx_0) - f(\vx_*)] \\
        &\leq 3\left(1+\frac{\delta}{\mu\sqrt{n}}\right)[f(\vx_0) - f(\vx_*)],
    \end{align*}
    where $\rom{1}$ uses $\theta = \sqrt{p}/(4\delta)$ and $\rom{2}$ uses $f(\vx_0) - f(\vx_*) \stackrel{\eqref{eq:sc}}{\geq} \frac{\mu}{2}\norm{\vx_0-\vx_*}^2$.
    Then we finally get
    \[ \E f(\vw_{k})-f(\vx_*) \leq \E \Phi_{k} \leq \left(\max\left\{1-\frac{1}{2n}, 1-\frac{\mu}{11\delta\sqrt{n}} \right\}\right)^k \cdot 3\left(1+\frac{\delta}{\mu\sqrt{n}}\right)[f(\vx_0)-f(\vx_*)]. \]
    In order to make $\E \Phi_{k} \leq \varepsilon$, we need 
    \[ \exp\left\{-\frac{k}{\max \left\{2n, \frac{11\sqrt{n} \delta}{\mu} \right\}} \right\} \cdot 3\left(1+\frac{\delta}{\mu\sqrt{n}}\right)[f(\vx_0)-f(\vx_*)] \le \varepsilon, \]
    which leads to
    \[ k \geq K_1 := \max\left\{2n, \frac{11\sqrt{n} \delta}{\mu}\right\}\log\frac{3\left(1+\frac{\delta}{\mu\sqrt{n}}\right)[f(\vx_0)-f(\vx_*)]}{\varepsilon}, \]
    Noting that communication complexity in each iteration is $2p(n-1)+2$ in expectation (by similar analysis in Section \ref{sec:com-com}), we get communication complexity is $4$ in each iteration in expectation. 
    Therefore, the total communication complexity is $\tilde{\gO}(n+\frac{\sqrt{n} \delta}{\mu})$ in expectation.
\end{proof}

\section{Computation of Gradient Complexity}\label{app:grad-com}
We show the detail omitted in Section \ref{sec:grad-com}.
Let $\vx_{t, *} = \argmin_{\vx \in\sR^d} A_{\theta}^t(\vx)$. 
Noting that by \cite[Theorem 2.2.2]{nesterov2018lectures}, we could obtain
\begin{equation*}
    \norm{\nabla A_{\theta}^t(\vx_{t, s})}^2 \leq 2L' \left(A_{\theta}^t(\vx_{t, s}) - A_{\theta}^t(\vx_{t, *})\right) \leq 
    \frac{20 \mu' L' \norm{\vx_t-\vx_{t,*}}^2}{3} \left[e^{(s+1)/\sqrt{\kappa'}}-1\right]^{-1}
\end{equation*}
if we start from $\vx_{t}$ in the proximal step for optimizing $A_{\theta}^t(\vx)$, where $\kappa' = L'/\mu', L'=L+1/\theta, \mu'=-\sqrt{n}\delta+1/\theta$ based on assumptions.
Then Eq.~\eqref{eq:cond-app1} could be satisfied after $T_{\mathrm{app}}$ iterations when 
\[ \frac{20 \mu' L' \norm{\vx_t-\vx_{t,*}}^2}{3} \left[e^{(T_{\mathrm{app}}+1)/\sqrt{\kappa'}}-1\right]^{-1} \leq \frac{\mu}{20\theta} \norm{\vx_t-\vx_{t,*}}^2. \]
Note that $\theta=1/(4\sqrt{n}\delta)$, which leads to  
\[  T_{\mathrm{app}} = \gO\left(\sqrt{\frac{1+\theta L}{1-\sqrt{n}\theta\delta}} \cdot \log \left(\frac{(1+\theta L)(1-\sqrt{n}\theta\delta)}{\mu\theta}\right) \right) = \gO\left(\left(1+n^{-1/4}\sqrt{\delta/\mu}\right)\log\frac{\sqrt{n}\delta+L}{\mu}\right). \]
Hence, the total number of gradient calls in expectation is
\[ \gO(n T_{\mathrm{app}} \cdot K_{2}) = \tilde{\gO}\left[ \left(n {+} n^{3/4}\sqrt{\frac{\delta}{\mu}}\right)\left(1 {+} \frac{1}{n^{1/4}}\sqrt{\frac{L}{\delta}}\right)\right] = \tilde{\gO} \left(n {+} n^{3/4}\left(\sqrt{\frac{\delta}{\mu}} {+} \sqrt{\frac{L}{\delta}}\right) {+} \sqrt{\frac{nL}{\mu}}\right). \]
Since $\delta \in [\mu, L]$, we obtain 
$\sqrt{\frac{\delta}{\mu}} + \sqrt{\frac{L}{\delta}} \leq \sqrt{\frac{L}{\mu}}+1$, leading to
\[ n+n^{3/4}\left(\sqrt{\frac{\delta}{\mu}} + \sqrt{\frac{L}{\delta}}\right) + \sqrt{\frac{nL}{\mu}} \leq 2\left(n+ n^{3/4} \sqrt{\frac{L}{\mu}}\right).\]
Thus, the gradient complexity is $\tilde{\gO}\left(n+n^{3/4}\sqrt{L/\mu}\right)$.
Moreover, when $\delta=\Theta(\sqrt{\mu L})$, we obtain
\[ n+n^{3/4}\left(\sqrt{\frac{\delta}{\mu}} + \sqrt{\frac{L}{\delta}}\right) =  n + \Theta\left(n^{3/4}\left(\frac{L}{\mu}\right)^{1/4}\right) + \sqrt{\frac{nL}{\mu}} = \Theta\left(n+\sqrt{\frac{nL}{\mu}}\right). \]
Thus, the gradient complexity is $\tilde{\gO}\left(n + \sqrt{nL/\mu}\right)$ in this time.

Note that Assumption \ref{ass:1} and smoothness of $f_1$ could only guarantee
\begin{align*}
    &\frac{1}{n}\sum_{i=1}^n \norm{\nabla f_i(\vx) -\nabla f_i(\vy)}^2 \stackrel{\eqref{eq:avess}}{\leq} \delta^2+\norm{\nabla f(\vx) -\nabla f(\vy)}^2 \\
    \leq& \delta^2 {+} 2\norm{\nabla [f {-} f_1](\vx) {-} \nabla [f {-} f_1](\vy)}^2 {+} 2 \norm{\nabla f_1(\vx) {-} \nabla f_1(\vy)}^2 {\leq} \left[(2n {+} 1)\delta^2 {+} 2L^2\right]\norm{\vx {-} \vy}^2,
\end{align*}
that is, $f_i$'s are $(2L+2\sqrt{n}\delta)$-average smooth. Hence, the tightness of our gradient complexity holds for the average smooth setting only when $\sqrt{n}\delta=\gO(L)$.
Moreover, we can also compute
\begin{align*}
    &\norm{\nabla f_i(\vx) -\nabla f_i(\vy)}^2 \leq 2\norm{\nabla [f-f_i](\vx) -\nabla [f-f_i](\vy)}^2+2\norm{\nabla f(\vx) -\nabla f(\vy)}^2 \\
    \stackrel{\eqref{eq:avess}}{\leq}& 2n\delta^2 {+} 4\norm{\nabla [f {-} f_1](\vx) {-} \nabla [f {-} f_1](\vy)}^2 {+} 4 \norm{\nabla f_1(\vx) {-} \nabla f_1(\vy)}^2 \leq \left[6n\delta^2 {+} 4L^2\right] \norm{\vx {-} \vy}^2,
\end{align*}
that is, $f_i$'s are $(2L+3\sqrt{n}\delta)$-smooth. Hence, the tightness of our gradient complexity holds for the component smooth setting only when $\delta=\Theta(\sqrt{\mu L})$ and $n\mu =\gO(L)$.

\section{Omitted Details of Section~\ref{sec:lower}}
In this section, we give the omitted details of Section~\ref{sec:lower} as well as their proofs.

\subsection{
Formal Statement of Definition~\ref{defn:lower_alg_informal}
and Discussion
}\label{app:lower:defn}
In this subsection, we give the formal statement of Definition~\ref{defn:lower_alg_informal} and show that Algorithm~\ref{algo:acc-svrs} satisfies our definition.

We first introduce the two oracles: the incremental first-order oracle (IFO) \citep{agarwal2015lower, zhou2019lower} and the Proximal Incremental First-order
Oracle (PIFO)\footnote{Although we have defined PIFO in Section~\ref{sec:lower:defn}, we restate it here for completement.} \citep{woodworth2016tight, han2021lower}, which are defined as 
    $\ifo_{f_i} (\vx)
    = [ f_i(\vx), \nabla f_i(\vx) ]
    $ and $
    \pifo_{f_i} (\vx, \gamma)
    = [ f_i(\vx), \nabla f_i(\vx), \prox_{f_i}^\gamma (\vx) ]$ with $\gamma > 0$ 
respectively.
Here the proximal operator is 
\begin{align*}
    \prox_{f_i}^\gamma (\vx) := \argmin_{\vu} \left\{ f_i (\vu) + \frac{1}{2\gamma} \norm{\vx - \vu}^2 \right\}
    = \argmin_{\vu} \left\{ \gamma f_i (\vu) + \frac{1}{2} \norm{\vx - \vu}^2 \right\}.
\end{align*}
The IFO 
$\ifo_{f_i} (\vx) $
takes a point $\vx$ and a component $f_i$ as input and returns the zero-order and first-order information of the component at $\vx$.
The PIFO 
$\pifo_{f_i} (\vx, \gamma)$
has
an additional input $\gamma > 0$, which can be viewed as the step size of the proximal operator.
Besides the local zero-order and first-order information returned by 
$\ifo_{f_i} (\vx) $,
$\pifo_{f_i} (\vx, \gamma)$
also provides some global information of $f_i$ by means of the proximal operator.
To see this, if we let $\gamma \rightarrow +\infty,$ $\prox_{f_i}^\gamma (\vx)$ converges to the exact minimizer of $f_i$, irrelevant to the choice of $\vx$.
In practice, it could be hard to compute $\prox_{f_i}^\gamma (\vx)$ precisely. Nevertheless, since we only focus on communication complexity, it makes no difference to distinguish between the IFO and the PIFO\footnote{See Lemma~\ref{lem:hard_expand}}.
Thus we assume the algorithm has access to the PIFO and the definition is as follows.


\begin{defn}[Formal version of Definition~\ref{defn:lower_alg_informal}]\label{defn:lower_alg}
    Consider a randomized algorithm $\gA$ to solve problem~\eqref{eq:obj}.
    Suppose the number of communication rounds is $T$. Define information sets $\gI_{t+1}$, $\gI_{t+1}^0$ and $\gI_{t+1}^1$.
    Here $\gI_{t+1}$ denotes all the information $\gA$ obtains after round $t$,
    while $\gI_{t+1}^0$ and $\gI_{t+1}^1$ denote the information 
    before and after (possible) anchor point updating during round $t$, respectively.
    The algorithm updates the information set
    by the following procedure.
    \begin{enumerate}
    
    \item  
    Choose a distribution $\gD$ over $[n]$ with $q_i = \sP_{Z \sim \gD}(Z=i) > 0$,
    a positive number $p \le c_0 / n$\footnote{To include catalyst accelerated algorithms, we also need $p \ge c_1 / n$ for some $c_1>0$ (see footnote~\ref{foot:catalyst}). To analyze Algorithm~\ref{algo:acc-svrs}, $p \le c_0 / n$ is enough.} and the initial points $\vx_0$. 
    Specify a master note~$1$ and assume $\max_{2 \le i \le n} q_i \le q_0 / n $. Node~$1$ sends 
    $\vx_0 $ to all the other nodes and other nodes send 
    $ \pifo_{f_i}(\vx_0, \gamma_0) $ back to node~$1$.
    Initialize the information set as $ \gI_0
    := \spn \big\{ \vx_0, \nabla f_i (\vx_0), \prox_{f_i}^{\gamma_0} (\vx_0) ~\big|~
    1 \le i \le n \big\}$ and set $t=0$ 
    and $\tilde{\vx}_0 = \vx_0$.
    \label{item:lower_alg_init}
    
    \item \label{item:lower_alg_sample_one}
    Sample $i_t \sim \gD$. Node~$1$ sends 
    $\tilde{\vx}_t$ to node~$i_t$ and node~$i_t$ sends 
    $ \pifo_{f_{i_t}} (\tilde{\vx}_t, \gamma_t) $ back to node~$1$. 
    Update the information set 
    \begin{align}\label{eq:lower_alg_infor0}
        \gI_{t+1}^0 := \spn \big\{ \vy, \nabla {f_{i_t}} (\tilde{\vx}_t), \prox_{f_{i_t}}^{\gamma_t} (\tilde{\vx}_t) ~\big|~ \vy \in \gI_t \big \}
    \end{align}
    
    \item \label{item:lower_alg_span}
    Update the information set $\gI_{t+1}^1$ and
    choose $ {\vx}_{t+1} \in \gI_{t+1}^1 $ following the linear-span protocol
    \begin{align}\label{eq:lower_alg_subprob}
        \vx_{t+1} \in \gI_{t+1}^1 := \spn
        \big\{\vy, \nabla f_1(\vz), \prox_{f_1}^{\gamma'_t} (\vw) ~\big|~ \vy, \vz, \vw \in \gI_{t+1}^{0} \big\}.
    \end{align}    

    \item Sample a Bernoulli random variable $a_t$ with expectation equal to $p$. If $a_t=1$, go to step~\ref{item:lower_alg_sample_all} (update the anchor point); otherwise, set $\tilde{\vx}_{t+1} = \vx_{t+1}$, $\gI_{t+1} = \gI_{t+1}^{1}$ and go to step~\ref{item:lower_alg_final} (do not update the anchor point). \label{item:lower_alg_bernoulli}
    
    \item \label{item:lower_alg_sample_all} 
    Sample $j_t \sim \gD$. Node~$1$ sends 
    some $\vy_{t+1} \in \gI_{t+1}^1$ to node~$j_t$ and node~$j_t$ sends 
    $ \pifo_{f_{j_t}} (\vy_{t+1}, \gamma''_{t+1}) $ back to node~$1$. 
    Obtain the anchor point $\tilde{\vy}_{t+1}$ by
    \begin{align}\label{eq:lower_alg_before_fullgrad}
        \tilde{\vy}_{t+1}  \in  \spn
        \big\{ \vy, \nabla f_1(\vz), \prox_{f_1}^{\gamma'_t} (\vw),
        \nabla f_{j_t} (\vy_{t+1}), \prox_{f_{j_t}}^{\gamma''_{t+1}} (\vy_{t+1}) ~\big|~ \vy, \vz, \vw  \in  \gI_{t+1}^{1}  \big\}.
    \end{align}   
    Then node~$1$ sends the anchor point 
    $ \tilde{\vy}_{t+1}$ to all the other nodes and other nodes send 
    $ \pifo_{f_{i}}(\tilde{\vy}_{t+1}, {\gamma}_{t+1}) $ 
    back to node~$1$.
    Update the information set and obtain $\tilde{\vx}_{t+1}$ by
    \begin{align}\label{eq:lower_alg_infor2}
        \tilde{\vx}_{t+1} \in \gI_{t+1} := \spn \big\{ \vy, \tilde{\vy}_{t+1},
        \nabla f_i (\tilde{\vy}_{t+1}), \prox_{f_i}^{\gamma_{t+1}} (\tilde{\vy}_{t+1}) ~\big|~ \vy \in \gI_{t+1}^1, 1 \le i \le n \big\}.
    \end{align}

    \item \label{item:lower_alg_final}
    If $t = T-1$, output some point in $ \gI_T$;
    otherwise, set $t \leftarrow t+1$
    and go back to step~\ref{item:lower_alg_sample_one}.
    
    \end{enumerate}
Here all the random variables $i_t, j_t$ and $a_t$ with $0 \le t \le T-1$ are mutually independent, and the step sizes of the proximal operator $\gamma_t, \gamma'_t$ and $\gamma''_t$ are positive numbers. 
\end{defn}
Now we explain this definition and show that Algorithm~\ref{algo:acc-svrs} (with Algorithm~\ref{algo:l-svrs} as a part) satisfies our definition. 

\textbf{Initialization.}
In our definition, 
step~\ref{item:lower_alg_init} is the initialization step. Without loss of generality, we can assume $\vx_0 = \vzero$ and node~$1$ is the master node. Otherwise, it suffices to consider $\{ \tilde{f}_i (\vx) = f_i(\vx + \vx_0) \}_{i=1}^n$ and exchange the indices between node~$1$ and the master node.
In Algorithm~\ref{algo:acc-svrs}, the distribution $\gD$ is $\mathrm{Unif}([n])$\footnote{When analyzing computational complexity, this distribution can also depend on the smoothness of each component function \citep{xiao2014proximal,allen2018katyusha}}, and $p = 1 / n$.
In the initialization stage, the algorithm needs to calculate the full gradient of the initial point $\vx_0$, whose communication cost is $2(n-1)$.

We note that Definition~\ref{defn:lower_alg} enjoys a loopless structure while Algorithm~\ref{algo:acc-svrs} has two loops. In fact, when $p$ is fixed, a loopless algorithm is equivalent to a two-loop one with the inner loop size obeying $\geo(p)$\footnote{See Proposition~\ref{app:ll-l}.}. 

\textbf{Analysis of one communication round.}
In each communication round, whether to calculate the full gradient depends on a coin toss with success probability $p$, as shown in step~\ref{item:lower_alg_bernoulli}.

\textbf{The case $a_t=0$.}
We first focus on the case where the full gradient need not be calculated.
Such a scenario corresponds to an iteration of Algorithm~\ref{algo:l-svrs}.
Each communication round start with step~\ref{item:lower_alg_sample_one}.
In this step,
the algorithm samples a local node, with which the master node communicates. And the communication cost is $2$.
$\tilde{\vx}_t$ in this step corresponds to $\vx_t$ in Algorithm~\ref{algo:l-svrs}.
In step~\ref{item:lower_alg_span}, the master node calculates the next point based on the current information set $\gI_{t+1}^0$ as well as the PIFO $\pifo_{f_1}$.
This corresponds to line~7 in Algorithm~\ref{algo:l-svrs}.
Indeed, the subproblem~\eqref{eq:alg_prox} can be rewritten as finding
\begin{align*}
    \argmin_{\vx \in \sR^d} A_\theta^t (\vx)
    & = \argmin_{\vx \in \sR^d} \left\{ \frac{1}{2\theta} \norm{ \vx - \vx_t + \theta [\nabla f_{i_t} (\vx_t) - \vg_t - \nabla f_1 (\vx_t)] }^2 + f_1(\vx) \right\} \\
    & = \prox_{f_1}^\theta \left( \vx_t - \theta [\nabla f_{i_t} (\vx_t) - \vg_t - \nabla f_1 (\vx_t)] \right).
\end{align*}
If the algorithm has access to the PIFO $\pifo_{f_1}$, then the subproblem~\eqref{eq:alg_prox}  can be exactly solved by one step of \eqref{eq:lower_alg_subprob}.
Otherwise, one can apply \eqref{eq:lower_alg_subprob} recursively without the proximal information (i.e., only using the IFO $\ifo_{f_1}$), e.g., (accelerated) gradient methods, to find an approximate solution of \eqref{eq:alg_prox}\footnote{Such a modification makes no difference to subsequent analysis. See Remark~\ref{rema:recursive_apply}.} 

\textbf{The case $a_t=1$.}
When $a_t = 1$ in step~\ref{item:lower_alg_bernoulli}, the algorithm needs to perform step~\ref{item:lower_alg_sample_all}, which corresponds to an outer iteration of Algorithm~\ref{algo:acc-svrs}.
Before calculating the full gradient, 
the algorithm first samples a local node~$j_t$ again and the master node communicates the information about $\vy_{t+1}$ with this node.
Here $\vy_{t+1}$ corresponds to $\vy_{k+1}$ in Algorithm~\ref{algo:acc-svrs},
and the communication cost is $2$.
Then the master node calculates $\tilde{\vy}_{t+1}$ by \eqref{eq:lower_alg_before_fullgrad}, which corresponds to $\vx_{k+1}$ (of the next iteration) in Algorithm~\ref{algo:acc-svrs}.
That is to say, lines~7, 8 and 4 (of the next iteration) in Algorithm~\ref{algo:acc-svrs} can be summarized as \eqref{eq:lower_alg_before_fullgrad}.
Then the master node communicates with all the other nodes the information about $\tilde{\vx}_{t+1}$, and the communication cost is $2(n-1)$.
In \eqref{eq:lower_alg_infor2}, the algorithm picks up $\tilde{\vx}_{t+1}$ as the starting point of the next round.
In Algorithm~\ref{algo:acc-svrs}, $\tilde{\vx}_{t+1}$ is the same to $\tilde{\vy}_{t+1}$.

\textbf{Communication cost.}
From the above analysis, the  communication cost in step~\ref{item:lower_alg_sample_all} is $2(n-1)$.
Since we assume $q \le c_0/n$,
step~\ref{item:lower_alg_sample_all} is performed infrequently and the expected communication cost is (at most) $2(n-1) \cdot p \approx 2 c_0 $ for a sufficiently large $n$.
As a result,  the total communication cost of a round is roughly $2 + 2c_0$ in expectation.
After $T$ rounds, the expected communication cost is roughly $ 2 (n-1) + 2 (1+c_0) T$.
As a result, we can use the number of rounds to measure communication complexity.

\textbf{The linear-span protocol and information set.}
In Definition~\ref{defn:lower_alg}, we focus on loopless algorithms based on the linear-span protocol.
One can check that many methods, e.g., KatyushaX~\citep{allen2018katyusha}, L-SVRG and L-Katyusha~\citep{kovalev2020don}, Loopless SARAH~\citep{li2020convergence} and SVRP\footnote{\label{foot:catalyst}For Catalyzed SVRP in their paper, we can slightly modify it without affecting 
the gradient or communication
complexity. Specifically, we remove the full gradient step at the beginning of the inner loop and do not update the current point until the full gradient is calculated. The number of additional communication rounds is $1/p = \Theta(n)$ in expectation, as long as $p = \Theta(1/n)$.
Since in each inner loop,
the algorithm must calculate the full gradient, whose gradient or communication complexity is also $\Theta(n)$, such a modification would not affect the total complexity.}~\citep{khaled2022faster}, satisfy our definition. 
And this class of algorithms is sufficiently large in that the upper and lower bounds have matched for most cases \citep{han2021lower}.
Built on the linear-span protocol, the information set $\gI_{t+1}$, a linear subspace of the whole space, gathers all the gradient and proximal information obtained by $t$ rounds of communication and includes all the possible points generated by the algorithm after round $t$.
Clearly, the sequence $\{ \gI_t \}_{t=0}^T$
is nondecreasing in the sense that $\gI_t \subseteq \gI_{t'}$ for any $t' > t$.


\subsection{Details of Section~\ref{sec:lower:construct}}\label{app:lower:construct}
Recall that in Section~\ref{sec:lower:construct}, we consider the following class of matrices
\begin{align*}
    \mB(m, \hardzeta ) = 
    \begin{bmatrix}
    1 & -1 & & & \\
     & 1 & -1 & & \\
     & & \ddots & \ddots & \\
     & & & 1 & -1 \\
     & & & & \hardzeta \\
    \end{bmatrix}
    \in \sR^{m \times m}.
\end{align*}
And one can check the matrix $\mA(\harddim, \hardzeta)$ is a tridiagonal matrix, i.e.,
\begin{align*}
    \mA(m, \hardzeta) := \mB(m, \hardzeta)^\top \mB(m, \hardzeta)
    := \begin{bmatrix}
    1 & -1 & & & \\
    -1 & 2 & -1 & & \\
     & \ddots & \ddots & \ddots & \\
     & & -1 & 2 & -1 \\
     & & & -1 & \hardzeta^2 + 1 
    \end{bmatrix} 
    \in \sR^{m \times m}.
\end{align*}


With the hard instance constructed in \eqref{eq:obj_hard_before_scale}, we have the following lemma, which is a modification of Lemma~6.1 in \citet{han2021lower} in that the partitions of the index sets are slightly different. 

\begin{lemma}
\label{lem:hard_expand}
    Suppose that $n \ge 3$, $m \ge 3$, $\gamma$ is an arbitrarily positive number and $\vx \in \subspace_k$ for some $0 \le k < \harddim$.
    If $k = 0$, we have
    \begin{align*}
        \nabla \hardr_i(\vx),\, \prox_{\hardr_{i}}^{\gamma} (\vx)\in 
        \begin{cases}
        \subspace_{1}, & \text{ if } 
        i=1, \\
        \subspace_{0}, & \text{ otherwise}.
        \end{cases}
    \end{align*}
    If $k > 0$, we have
    \begin{align*}
        \nabla \hardr_i(\vx),\, \prox_{\hardr_{i}}^{\gamma} (\vx)\in 
        \begin{cases}
        \subspace_{k+1}, & \text{ if } 
        k \in \gL_i, \\
        \subspace_{k}, & \text{ otherwise}.
        \end{cases}
    \end{align*}
Here $\{ \subspace_k \}_{k=0}^\harddim$ are defined as $\subspace_0 = \{ \vzero \}$ and $\subspace_k = \spn\{ \ve_1, \ve_{2}, \dots, \ve_{k}\} $ for $1 \le k \le \harddim$, and we omit the parameters of ${r}_i$ to simplify the notation.
\end{lemma}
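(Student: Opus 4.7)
The plan is to compute $\nabla \hardr_i(\vx)$ and $\prox_{\hardr_i}^\gamma(\vx)$ explicitly and then track how supports propagate through these operations. Using $\vb_l = \ve_l - \ve_{l+1}$ for $l < \harddim$ and $\vb_\harddim = \hardzeta \ve_\harddim$, one has $\vb_l \vb_l^\top \vx = (x_l - x_{l+1})(\ve_l - \ve_{l+1})$ and $\vb_\harddim \vb_\harddim^\top \vx = \hardzeta^2 x_\harddim \ve_\harddim$, so each rank-one summand appearing in $\hardr_i$ has support confined to $\{l, l+1\}$.

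The gradient case is then direct. For $i = 1$, $\nabla \hardr_1(\vx) = \hardc \vx - n \ve_1$, which lies in $\subspace_1$ when $k=0$ and in $\subspace_k$ when $k \geq 1$, consistent with $\gL_1 = \varnothing$. For $i \neq 1$, the summand indexed by $l \in \gL_i$ with $l < \harddim$ vanishes for $l > k$, lies in $\subspace_k$ for $l < k$, and lies in $\subspace_{k+1}$ (generically reaching the new coordinate $\ve_{k+1}$) precisely when $l = k$; the boundary term $l = \harddim$ vanishes because $k < \harddim$ forces $x_\harddim = 0$. Combined with the diagonal $\hardc \vx$ term this gives the stated gradient inclusions.

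For the proximal case, set $a := \hardc + 1/\gamma$. The first-order optimality condition for $\vu := \prox_{\hardr_i}^\gamma(\vx)$ reads $M \vu = \vx/\gamma$, with an added $n \ve_1$ on the right-hand side when $i = 1$, where $M = a\, \mI + n \sum_{l \in \gL_i} \vb_l \vb_l^\top$. The crucial observation is that $n \geq 3$ forces consecutive elements of $\gL_i$ to differ by at least $n - 1 \geq 2$, so the pairs $\{l, l+1\}$ with $l \in \gL_i$, $l < \harddim$, are pairwise disjoint. Hence $M$, and therefore $M^{-1}$, is block-diagonal: there is a $2\times 2$ block $\bigl(\begin{smallmatrix} a+n & -n \\ -n & a+n \end{smallmatrix}\bigr)$ on each such pair, a $1\times 1$ entry $a + n \hardzeta^2$ at index $\harddim$ whenever $\harddim \in \gL_i$, and $1 \times 1$ entries $a$ on all remaining singletons.

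Finally I would conclude by propagating supports through $M^{-1}$: because each row of $M^{-1}$ is supported in the block containing its index, the coordinate $u_j$ can be nonzero only if $j$ shares a block with some $j' \leq k$ (the support of the right-hand side). For $k \geq 1$, the block at index $k$ equals $\{k, k+1\}$ exactly when $k \in \gL_i$, which is the only way to reach $\subspace_{k+1}$; otherwise every relevant block sits inside $\{1, \ldots, k\}$. The $k = 0$ case and the $i = 1$ case (where the extra $n \ve_1$ contributes a component in $\subspace_1$) follow by the same argument applied to the modified right-hand side. The main technical point---and the sole place where $n \geq 3$ is actually used---is the disjointness of the $2 \times 2$ blocks; once that is established, inverting a block-diagonal matrix and reading off coordinate supports is routine.
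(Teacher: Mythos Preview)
Your argument is correct. The gradient portion is identical to the paper's proof; the proximal portion reaches the same conclusion by a different but equally valid route.

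For $\prox_{\hardr_i}^\gamma$ with $i\neq 1$, the paper does not invert $M$ block by block. Instead it writes $M = (c\gamma+1)\mI + n\gamma\,\mB_i^\top\mB_i$ with $\mB_i=\sum_{l\in\gL_i}\ve_l\vb_l^\top$, applies the Sherman--Morrison--Woodbury identity, and uses the orthogonality $\vb_l^\top\vb_{l'}=0$ for distinct $l,l'\in\gL_i$ (this is where $n\ge 3$ enters, exactly as in your disjoint-pairs observation) to conclude that $\mB_i\mB_i^\top$ is diagonal. This yields the closed form $\vu_i=\frac{1}{c\gamma+1}\bigl(\vx-\sum_{l\in\gL_i}d_{i,l}\,\vb_l\vb_l^\top\vx\bigr)$, which reduces the support analysis directly to the rank-one computation \eqref{eq:proof_expand_bbl} already done for the gradient. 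Your approach instead observes the block-diagonal structure of $M$ itself and reads off supports from $M^{-1}$; this is more elementary (no matrix identity needed) and makes the role of $n\ge 3$ very explicit, while the paper's version has the advantage of literally recycling the gradient calculation rather than redoing a coordinate-by-coordinate case analysis. Both arguments hinge on the same combinatorial fact that distinct indices in $\gL_i$ are at least $n-1\ge 2$ apart.
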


Lemma~\ref{lem:hard_expand} 
tells us that if the current point $\vx$ lies in some subspace of $\sR^\harddim$, only one component can provide the information of the next dimension by gradient or proximal information.
In this sense, PIFO cannot provide more information than IFO.
Thus, when we focus on the communication complexity of an algorithm, it makes no difference to distinguish between IFO and PIFO.
And we can assume the algorithm has access to PIFO without loss of generality.
Moreover, when $k > 0$, the oracle of $r_1$ can never provide any information on the next dimension.
The proof of Lemma~\ref{lem:hard_expand} is deferred to Appendix~\ref{app:lower:pf_expand}.

With Lemma~\ref{lem:hard_expand}, Lemma~\ref{lem:hard_expand_infor} is a natural corollary and the proof is deferred to Appendix~\ref{app:lower:pf_expand_infor}.

\begin{remark}\label{rema:recursive_apply}
 Recall that in steps~\ref{item:lower_alg_sample_one} and \ref{item:lower_alg_span},
 the difference between $\gI_{t+1}^1$ and $\gI_{t+1}^0$ only resides in $\pifo_{f_1}(\vx, \gamma)$ (or $\pifo_{\hardr_1}(\vx, \gamma)$ when we consider problem~\eqref{eq:obj_hard_before_scale}) for $\vx \in \gI_{t}^0$, while
Lemma~\ref{lem:hard_expand} implies that $\pifo_{\hardr_1}(\vx, \gamma)$ would not expand the information set as long as $\vx \neq \vzero$\footnote{In the proof of Lemma~\ref{lem:hard_expand_infor} in Appendix~\ref{app:lower:pf_expand_infor}, we show that $\gI_{t+1}^1 = \gI_{t+1}^0$}.
This demonstrates that applying \eqref{eq:lower_alg_subprob} recursively would not affect the analysis of communication complexity.
\end{remark}

The next result is a corollary of Lemma~\ref{lem:hard_expand_infor} and the proof is deferred to Appendix~\ref{app:lower:pf_stopping_time}.
\begin{corollary}\label{coro:lower_stopping_time}
Define the random variables $ T_0 = -1$,
\begin{align}
\label{eq:lower_stopping_time}
    T_k := \min_t \{t: t > T_{k-1}, 
    3k{-}2 \in \gL_{i_t}
    \mbox{ or } a_t = 1 \}~\text{ for } 1 \le k \le (\harddim - 1) / 3,
\end{align}
and $Y_k :=
T_k - T_{k-1}$.
Then  
we have
(i) $\gI_{t+1} \subseteq \subspace_{3k-2}$ for any $t < T_k$;
(ii) the $Y_k$ are mutually independent; 
(iii) $Y_k \sim \geo( q_{k'} + p - p q_{k'} )$ 
with $k' \equiv 3k{-}1~(\bmod~(n{-}1))$, $2 \le k' \le n$.
\end{corollary}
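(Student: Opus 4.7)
\medskip

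\noindent\textbf{Proof Proposal for Corollary \ref{coro:lower_stopping_time}.}

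The plan is to prove part (i) by induction on $k$ using Lemma~\ref{lem:hard_expand_infor}, and then to deduce parts (ii) and (iii) by observing that each $Y_k$ is the waiting time, under the i.i.d.\ sequence $\{(i_t,a_t)\}_{t\ge 0}$, for an independent Bernoulli trial whose success probability depends only on the residue class $3k-1 \pmod{n-1}$.

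For part (i), the base case $k=1$ is immediate: Lemma~\ref{lem:hard_expand_infor}(i) gives $\gI_0 = \subspace_1 = \subspace_{3\cdot 1 - 2}$, and by the definition of $T_1$, for every $t < T_1$ neither $1 \in \gL_{i_t}$ nor $a_t=1$ holds, so Lemma~\ref{lem:hard_expand_infor}(ii) yields $\gI_{t+1} \subseteq \subspace_1$ inductively in $t$. For the inductive step on $k$, assume $\gI_{T_{k-1}+1} \subseteq \subspace_{3(k-1)+1} = \subspace_{3k-2}$ (this follows from applying Lemma~\ref{lem:hard_expand_infor}(ii) with $k\mapsto 3k-5$ to the round $t=T_{k-1}$, where by definition the expansion condition does hold, jumping from $\subspace_{3k-5}$ to $\subspace_{3k-2}$). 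Then for $T_{k-1} < t < T_k$ the expansion condition for the index $3k-2$ fails, so Lemma~\ref{lem:hard_expand_infor}(ii) preserves the containment $\gI_{t+1}\subseteq \subspace_{3k-2}$, completing the induction.

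For parts (ii) and (iii), the key observation is that in Definition~\ref{defn:lower_alg}, the variables $\{(i_t,a_t)\}_{t\ge 0}$ are mutually independent, with $i_t \sim \gD$ and $a_t \sim \mathrm{Bern}(p)$. Set $B_t^{(k)} := \indicator{3k-2 \in \gL_{i_t}} \vee \indicator{a_t=1}$. Since $3k-2 \in \gL_i \iff 3k-2 \equiv i-1 \pmod{n-1}$, the unique index $k'\in\{2,\ldots,n\}$ with $3k-2 \in \gL_{k'}$ is determined by $k' \equiv 3k-1 \pmod{n-1}$. By independence of $i_t$ and $a_t$ and inclusion--exclusion,
\[
\sP(B_t^{(k)}=1) = q_{k'} + p - p\, q_{k'} =: \pi_k.
\]
By definition, $Y_k = T_k - T_{k-1}$ is the first $s \ge 1$ with $B_{T_{k-1}+s}^{(k)}=1$. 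Since $T_{k-1}$ is a stopping time with respect to the filtration $\gG_t := \sigma(\{(i_s,a_s)\}_{s\le t})$ and the $(i_t,a_t)$ are i.i.d., the strong Markov property gives $Y_k \sim \geo(\pi_k)$ conditional on $\gG_{T_{k-1}}$, yielding part (iii).

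Independence of the $Y_k$ then follows by induction: conditional on $\gG_{T_{k-1}}$ (which determines $Y_1,\ldots,Y_{k-1}$), the distribution of $Y_k$ is $\geo(\pi_k)$, which does not depend on the conditioning; hence $Y_k$ is independent of $(Y_1,\ldots,Y_{k-1})$. The main subtlety I anticipate is making this strong Markov/stopping-time argument precise, in particular verifying that the stopping time $T_{k-1}$ is almost surely finite (which follows from $\pi_k \ge p > 0$, so that $Y_k$ is a.s.\ finite) and that each $Y_k$ truly depends only on the ``fresh'' i.i.d.\ trials $\{(i_{T_{k-1}+s}, a_{T_{k-1}+s})\}_{s\ge 1}$.
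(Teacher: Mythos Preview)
Your proposal is correct and follows essentially the same approach as the paper. For part (i) you induct on $k$ while the paper inducts on $t$ (defining $k(t)$ with $T_{k(t)-1}\le t<T_{k(t)}$), but both reduce to the same repeated application of Lemma~\ref{lem:hard_expand_infor}; for parts (ii)--(iii) your stopping-time/strong-Markov argument is a slightly more careful version of the paper's direct computation of $\sP(Y_k=s)=[(1-q_{k'})(1-p)]^{s-1}[1-(1-q_{k'})(1-p)]$ from the independence of $\{(i_t,a_t)\}$.
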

Corollary~\ref{coro:lower_stopping_time} claims that $T_k$ is the smallest index of the communication round after which the information set can be expanded to $\subspace_{3k+2}$.
Moreover, $T_k$ can be decomposed into the sum of independent geometric random variables.
With Lemma~\ref{lem:geo}, which gives a concentration result for the sum of geometric random variables, we have the following proposition, whose proof is deferred to Appendix~\ref{app:lower:pf_complexity}.

\begin{prop}\label{prop:complexity}
Let $0 \le M \le (\harddim - 2 )/3 $ and $N = \frac{n(M+1)}{4 
(q_0 + c_0) } + 1$ with $q_0$ and $c_0$ defined in Definition~\ref{defn:lower_alg_informal}.
Suppose we use an algorithm $\gA$ satisfying Definition~\ref{defn:lower_alg} to solve problem~\eqref{eq:obj_hard_before_scale}.
After $N$ round of communication, the algorithm obtains the information set $\gI_N$. 
Then we have $\gI_N \subseteq \subspace_{3 M + 1} \subset \subspace_{m - 1}$.
Moreoever,
if
\begin{align}
\label{eq:complexity_condition}
\min_{\vx \in 
\subspace_{3M+1}
}
\hardr (\vx) - \min_{\vx \in \sR^\harddim} \hardr(\vx) \ge 9\eps,
\end{align}
we have
\begin{align*}
    \E \min_{\vx \in \gI_N}  \hardr(\vx)
    - \min_{\vx \in \sR^\harddim} \hardr(\vx) \ge \eps.
\end{align*}
\end{prop}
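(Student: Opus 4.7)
The plan is to track how fast the information set $\gI_t$ can expand through the subspaces $\subspace_k$, and to show that $N$ rounds of communication are not enough to escape from $\subspace_{3M+1}$ with constant probability. By Corollary~\ref{coro:lower_stopping_time}(i), if we can lower-bound the stopping time $T_{M+1}$ by $N-1$, then automatically $\gI_N \subseteq \subspace_{3M+1}$, and the condition \eqref{eq:complexity_condition} guarantees a lower bound on $\min_{\vx \in \gI_N} \hardr(\vx) - \min_{\vx} \hardr(\vx)$ on that event.

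First, I would decompose $T_{M+1} = \sum_{k=1}^{M+1} Y_k$ using Corollary~\ref{coro:lower_stopping_time}(ii)--(iii), where the $Y_k$ are independent geometric random variables with parameter $p_k := q_{k'} + p - p q_{k'}$ and $k' \equiv 3k-1 \pmod{n-1}$, $2 \le k' \le n$. Since $q_{k'} \le q_0/n$ for $k' \ge 2$ and $p \le c_0/n$ by Definition~\ref{defn:lower_alg_informal}, each parameter satisfies $p_k \le (q_0 + c_0)/n$, so $\sum_{k=1}^{M+1} p_k \le (M+1)(q_0+c_0)/n$. Applying Lemma~\ref{lem:geo} with this bound yields
\begin{equation*}
    \sP\!\left( T_{M+1} > \frac{n(M+1)}{4(q_0+c_0)} \right) \ge \frac{1}{9}.
\end{equation*}
Since $N - 1 = \frac{n(M+1)}{4(q_0+c_0)}$, this is exactly the event $\{T_{M+1} \ge N\}$; on this event, Corollary~\ref{coro:lower_stopping_time}(i) gives $\gI_N \subseteq \subspace_{3(M+1)-2} = \subspace_{3M+1}$, and since $M \le (m-2)/3$ we also have $3M+1 \le m-1$.

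Finally, I would combine this containment with the gap assumption \eqref{eq:complexity_condition}. On the event $\{T_{M+1} \ge N\}$, the inclusion $\gI_N \subseteq \subspace_{3M+1}$ gives $\min_{\vx \in \gI_N} \hardr(\vx) \ge \min_{\vx \in \subspace_{3M+1}} \hardr(\vx) \ge \min_{\vx \in \sR^\harddim} \hardr(\vx) + 9\eps$; off this event we use only the trivial bound $\min_{\vx \in \gI_N} \hardr(\vx) \ge \min_{\vx \in \sR^\harddim} \hardr(\vx)$. Taking expectations and using $\sP(T_{M+1} \ge N) \ge 1/9$ yields
\begin{equation*}
    \E \min_{\vx \in \gI_N} \hardr(\vx) - \min_{\vx \in \sR^\harddim} \hardr(\vx) \ge \tfrac{1}{9} \cdot 9 \eps = \eps.
\end{equation*}

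The main obstacle is bookkeeping: correctly identifying that each step of the algorithm can expand the information set by at most one more coordinate (via either the component sampling hitting the right index $k \in \gL_{i_t}$ or the Bernoulli toss $a_t = 1$), and verifying that the parameter bound $p_k \le (q_0+c_0)/n$ is uniform over $k$ so that Lemma~\ref{lem:geo} applies cleanly. The edge case $M = 0$ may need separate handling since Lemma~\ref{lem:geo} requires at least two geometric variables, but the argument should carry through by inspecting $T_1$ directly via its geometric tail.
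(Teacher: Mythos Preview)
Your proposal is correct and follows essentially the same route as the paper: decompose $T_{M+1}$ into independent geometric summands via Corollary~\ref{coro:lower_stopping_time}, bound each success probability by $(q_0+c_0)/n$, invoke Lemma~\ref{lem:geo} to get $\sP(T_{M+1}>N-1)\ge 1/9$, and then condition on this event together with \eqref{eq:complexity_condition}. Your observation about the edge case $M=0$ (where Lemma~\ref{lem:geo} formally needs at least two summands) is a nice catch that the paper's proof glosses over; handling it via the single geometric tail, as you suggest, is straightforward.
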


Proposition~\ref{prop:complexity} specifies the number of communication rounds needed to find an $\eps$-suboptimal solution under the condition \eqref{eq:complexity_condition}.
Roughly speaking, the condition requires that the exact solution of problem \eqref{eq:obj_hard_before_scale}
does not lie in some subspace of $\subspace_{\harddim-1}$

Now we come back to the hard instance \eqref{eq:obj_hard_before_scale}.
Recall that $\hardr$ is $\hardc$-strongly convex and $\hardr_i$'s satisfy
$\sqrt{8n+4}$-aveSS.
We need to
properly scale the function class $\{ r_i \}_{i=1}^n$ such that it satisfies Assumption~\ref{ass:1}.
Note that rescaling does not influence Lemma~\ref{lem:hard_expand}. Thus Proposition~\ref{prop:complexity} still holds for any rescaled version of problem~\eqref{eq:obj_hard_before_scale}.
Specially, we consider the following problem
\begin{equation}\label{eq:hard_scale}
\begin{aligned}
    & \qquad \min_{\vx \in \sR^\harddim } \hardf (\vx) :=  \frac{1}{n} \sum_{i=1}^n \hardf_i(\vx) \quad  \text{where} \quad \hardf_i(\vx):= \lambda \, \hardr( \vx / \beta; \harddim, \hardzeta, \hardc), \\
    & \lambda = \frac{4 \Delta}{\rho {-} 1},\, 
    \beta = \frac{4}{\rho {-} 1 } \sqrt{ \frac{\Delta}{\mu (\rho {+} 1) } }, \,
    \hardzeta = \sqrt{ \frac{2}{1{+}\rho} } \,
    \text{ and }\, \hardc = \frac{4}{ \rho^2 {-} 1}
    \ \text{with} \ \rho = \sqrt{ \frac{2 \delta / \mu }{ \sqrt{2n{+}1} } {+} 1 }.
\end{aligned}
\end{equation}
Here $n, \delta, \mu$ and $\Delta$ are given parameters.
As shown in the next Proposition, $\delta$ is the AveSS parameter, $\mu$ is the strong convexity parameter and $\Delta$ is the function value gap between the initial point and the solution.

\begin{prop}\label{prop:hard_scale_property}
The problem defined in \eqref{eq:hard_scale} with $n \ge 3$ and $m \ge 3$ has the following properties.
\begin{enumerate}
    \item $\hardf$ is $\mu$-strongly convex and $\hardf_i$'s satisfy
    $\delta$-AveSS. \label{prop:hard_scale_property1}
    \item Let $q {=} \frac{\rho - 1}{\rho + 1}$. The minimizer of $\hardf$ is $\vx_* {=} \frac{\beta(\rho+1)}{2} (q, q^{2}, \dots, q^m)^{\top} $ and $\hardf(\vzero) {-} \hardf(\vx_*) {=} \Delta$. \label{prop:hard_scale_property2}
    \item For $0 \le k \le m-1$, we have \label{prop:hard_scale_property3}
    \begin{align}\label{eq:hard_scale_property3}
        \min_{\vx \in \subspace_{k}} \hardf(\vx) -
        \hardf(\vx_*)
        \ge \Delta q^{2k} \text{ and }
        \min_{\vx \in \subspace_k} \norm{ \vx - \vx_* }^2 \ge \frac{ 4 \Delta }{ \mu (\rho + 1) } q^{2k}.
    \end{align}
\end{enumerate}
\end{prop}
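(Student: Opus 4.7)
The plan is to reduce all three claims to facts about the unscaled quadratic $\hardr$ by exploiting the scaling $\hardf_i(\vx) = \lambda \hardr_i(\vx/\beta)$. For Part~\ref{prop:hard_scale_property1}, the chain rule gives $\nabla^2 \hardf_i(\vx) = (\lambda/\beta^2)\nabla^2 \hardr_i(\vx/\beta)$, so the strong-convexity parameter of $\hardf$ and the AveSS parameter of $\hardf_i$'s equal the corresponding parameters for $\hardr$ and $\hardr_i$'s -- namely $\hardc$ and $\sqrt{8n+4}$ from Proposition~\ref{prop:hard_aveSS_except1} -- scaled by $\lambda/\beta^2$. Substituting the definitions in \eqref{eq:hard_scale} and using $\rho^2 - 1 = 2\delta/(\mu\sqrt{2n+1})$, a short algebraic check closes $\hardc\lambda/\beta^2 = \mu$ and $\sqrt{8n+4}\,\lambda/\beta^2 = \delta$.

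For Part~\ref{prop:hard_scale_property2}, the minimizer of $\hardf$ is $\beta$ times the minimizer of $\hardr$, and since $\hardr$ is quadratic with $\nabla \hardr(\vzero) = -\ve_1$, this unscaled minimizer $\vy_*$ is the unique solution of $(\mA(\harddim,\hardzeta) + \hardc \mI_\harddim)\vy_* = \ve_1$. I would verify directly that $\vy_* = \tfrac{\rho+1}{2}(q, q^2, \ldots, q^\harddim)^\top$ solves this system: the bulk rows reduce to the characteristic identity $q + q^{-1} = 2 + \hardc$ (easily checked by substitution); the first row pins down the leading coefficient via $1 - q = 2/(\rho+1)$; and the last row forces $\hardzeta^2 = 1 - q = 2/(1+\rho)$, matching the definition of $\hardzeta$. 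The value gap then follows from the standard quadratic identity $\hardr(\vzero) - \hardr(\vy_*) = \tfrac{1}{2}\dotprod{\ve_1, \vy_*} = (\rho-1)/4$, which together with $\lambda = 4\Delta/(\rho-1)$ gives $\hardf(\vzero) - \hardf(\vx_*) = \Delta$.

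For Part~\ref{prop:hard_scale_property3}, the squared-distance bound is immediate from $\min_{\vx\in\subspace_k}\norm{\vx-\vx_*}^2 = \sum_{j>k}(\vx_*)_j^2 \ge (\vx_*)_{k+1}^2$ followed by the substitution $\beta^2(\rho+1)^2 q^2/4 = 4\Delta/[\mu(\rho+1)]$. The function-value bound is the main technical piece. I would write $\hardr(\vy) - \hardr(\vy_*) = \tfrac{1}{2}(\vy - \vy_*)^\top M (\vy - \vy_*)$ with $M := \mA(\harddim,\hardzeta) + \hardc \mI_\harddim$, split coordinates as $A = \{1,\ldots,k\}$ and $B = \{k+1,\ldots,\harddim\}$, and compute the minimum over $\vy \in \subspace_k$ by a Schur-complement calculation, obtaining $\tfrac{1}{2}\bigl[(\vy_*)_B^\top M_{BB} (\vy_*)_B - (\vy_*)_{k+1}^2 [M_{AA}^{-1}]_{k,k}\bigr]$. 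Two ingredients then close it: the identity $(M\vy_*)_i = 0$ for $i \ge k+1$ collapses $(\vy_*)_B^\top M_{BB} (\vy_*)_B$ to $(\vy_*)_k (\vy_*)_{k+1}$, because only the cross-coupling through $M_{k+1,k} = -1$ survives; and solving the tridiagonal system $M_{AA}\vv = \ve_k$ with the ansatz $v_i = A(q^i + q^{1-i})$ yields the closed form $[M_{AA}^{-1}]_{k,k} = (q^{2k} + q)/(q^{2k+1} + 1)$. Combining these, the minimum equals $\tfrac{\lambda(\rho+1)^2 q^{2k+1}(1-q^2)}{8(q^{2k+1}+1)}$, and after using $1-q = 2/(\rho+1)$ and $1+q = 2\rho/(\rho+1)$ the required bound reduces to the trivial inequality $q^{2k+1} + 1 \le 1 + q$.

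The main obstacle is the function-value bound in Part~\ref{prop:hard_scale_property3}: carefully executing the Schur-complement collapse and deriving the closed form of $[M_{AA}^{-1}]_{k,k}$ without sign or indexing errors. It is worth noting that the naive bound from $\mu$-strong convexity combined with the distance bound only yields $\tfrac{2\Delta}{\rho+1} q^{2k}$, weaker than the claim by a factor of $(\rho+1)/2$, so the finer tridiagonal analysis is genuinely necessary.
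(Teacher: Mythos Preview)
Your proposal is correct. Parts~\ref{prop:hard_scale_property1}, \ref{prop:hard_scale_property2}, and the distance bound in Part~\ref{prop:hard_scale_property3} match the paper's proof essentially line for line. For the function-value bound in Part~\ref{prop:hard_scale_property3}, you take a different route: the paper restricts $\hardf$ to $\subspace_k$, solves the resulting $k\times k$ tridiagonal system $(\mA_k + \tfrac{\mu}{\xi}\mI)\vy = \beta\hat\ve_1$ explicitly for the constrained minimizer (entries proportional to $q^{-j}-q^{j}$), evaluates $\hat\hardf$ there, and obtains the exact gap $\tfrac{(1+q)q^{2k}\Delta}{1+q^{2k+1}}$. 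You instead use a Schur-complement reduction that exploits $M\vy_* = \ve_1$ to collapse the gap to an expression in $(\vy_*)_k,(\vy_*)_{k+1}$ and $[M_{AA}^{-1}]_{k,k}$, and then compute that single matrix entry by solving $M_{AA}\vv = \ve_k$. Both approaches require solving one size-$k$ tridiagonal system via a geometric ansatz and land on the identical closed form; your version is a bit more structural (reusing the global minimizer rather than computing a new one), while the paper's is more self-contained. Either way, the final inequality $q^{2k+1}+1 \le 1+q$ is the same trivial closing step.
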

Property~\ref{prop:hard_scale_property2} shows that the minimizer of problem~\eqref{eq:hard_scale} has all elements nonzero.
Thus, it does not lie in any subspace $\subspace_k$ for $k < m$.
As a result, we cannot obtain an approximate solution up to an arbitrarily small accuracy, unless we get an iterate with the last element nonzero, as claimed by Property~\ref{prop:hard_scale_property3}.
This implies problem~\eqref{eq:hard_scale} satisfies the condition~\ref{eq:complexity_condition}.

Combining Propositions~\ref{prop:complexity} and \ref{prop:hard_scale_property}, we can establish the lower bound of the communication complexity.
\begin{thm}[Formal version of Theorem~\ref{thm:lower}]\label{thm:lower_formal}
    Suppose we use any algorithm $\gA$ satisfying Definition~\ref{defn:lower_alg} to solve the minimization problem~\eqref{eq:hard_scale} and the following conditions hold
    \begin{align*}
        n \ge 3\; \text{ and } \;
        \eps \le \frac{\Delta}{9} \cdot q^3, \text{ with } q = \frac{\rho - 1}{\rho + 1}, \rho = \sqrt{\frac{2 \delta/\mu}{ \sqrt{2n+1} } + 1}.
    \end{align*}
    Set $\harddim = \left\lfloor \frac{\log (\Delta/(9 \eps)) }{2 \log (1/q)} + 2 \right\rfloor$.
    In order to find $\hat{\vx} $ such that $ \E \hardf (\hat{\vx}) - \min_{\vx \in \sR^\harddim} \hardf (\vx) < \eps$, the communication complexity in expectation is 
    \begin{align*}
        \begin{cases}
        \Omega \left( n + n^{3/4} \sqrt{\delta / \mu} \log (1 / \eps) \right), & \text{ for } \frac{\delta}{\mu} = \Omega(\sqrt{n}), \\
        \Omega \left( n + \frac{n \log (1 / \eps)  }{ 1 + ( \log (\mu \sqrt{n} / \delta) )_+ } \right), & \text{ for } \frac{\delta}{\mu} = \gO (\sqrt{n}).
        \end{cases}
    \end{align*}
\end{thm}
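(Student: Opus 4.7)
The plan is to reduce Theorem~\ref{thm:lower_formal} to Proposition~\ref{prop:complexity} applied to the rescaled hard instance~\eqref{eq:hard_scale}. Since Lemma~\ref{lem:hard_expand} depends only on which subspace $\subspace_k$ each iterate lies in, the affine rescaling by $\beta$ and the scalar factor $\lambda$ leave the subspace-expansion dynamics of the oracles intact, so Proposition~\ref{prop:complexity} transfers to $\hardf$ verbatim. By Property~1 of Proposition~\ref{prop:hard_scale_property}, $\hardf$ satisfies Assumption~\ref{ass:1} with the prescribed $\mu$ and $\delta$, establishing statement~(i) of the theorem.

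The next step is to choose $M$ as the largest nonnegative integer with $3M+1 \le \harddim-1$ and $\Delta q^{2(3M+1)} \ge 9\eps$. By Property~3 of Proposition~\ref{prop:hard_scale_property}, this yields
\[ \min_{\vx\in\subspace_{3M+1}} \hardf(\vx) - \hardf(\vx_*) \ge \Delta q^{2(3M+1)} \ge 9\eps, \]
which is precisely condition~\eqref{eq:complexity_condition}. The assumption $\eps \le \Delta q^3/9$ guarantees $M=0$ is feasible, and the prescribed $\harddim = \lfloor \log(\Delta/(9\eps))/(2\log(1/q))+2\rfloor$ is chosen so that $3M+1 \le \harddim-1$ is slack; a direct calculation then gives $M+1 = \Theta(\log(\Delta/\eps)/\log(1/q))$. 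Applying Proposition~\ref{prop:complexity}, any algorithm satisfying Definition~\ref{defn:lower_alg} and achieving $\E\hardf(\hat\vx)-\hardf(\vx_*) < \eps$ must use strictly more than $N = n(M+1)/(4(q_0+c_0))+1$ rounds in expectation, producing a lower bound of $\Omega(n(M+1))$ on the expected communication complexity.

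It remains to evaluate $\log(1/q)$ in the two regimes. Writing $\alpha := 2(\delta/\mu)/\sqrt{2n+1}$, we have $\rho = \sqrt{1+\alpha}$ and $\log(1/q) = \log((\rho+1)/(\rho-1))$. When $\delta/\mu = \Omega(\sqrt{n})$, so $\alpha = \Omega(1)$ and $\rho \ge \sqrt{1+\alpha}$ can be large, a Taylor expansion yields $\log(1/q) = \Theta(1/\rho) = \Theta(\sqrt{\mu\sqrt{n}/\delta})$, hence $n(M+1) = \Theta(n^{3/4}\sqrt{\delta/\mu}\log(1/\eps))$. When $\delta/\mu = \gO(\sqrt{n})$, so $\alpha = \gO(1)$ and $\rho-1 = \Theta(\alpha) = \Theta((\delta/\mu)/\sqrt{n})$, we get $\log(1/q) = \Theta(1 + (\log(\mu\sqrt{n}/\delta))_+)$, giving $n(M+1) = \Theta(n\log(1/\eps)/(1+(\log(\mu\sqrt{n}/\delta))_+))$. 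Combined with the $\Omega(n)$ initialization cost baked into Definition~\ref{defn:lower_alg}, both cases of the claimed bound follow.

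The main obstacle is primarily bookkeeping: verifying that the floor in the definition of $\harddim$ is compatible with the choice of $M$ for marginal $\eps$ (so that $3M+1 \le \harddim-1$ is genuinely the non-binding constraint), and handling the two cases uniformly through the transition regime $\delta/\mu \asymp \sqrt{n}$ where $\rho$ is bounded. All the substantive work---the subspace-expansion dynamics, the geometric-sum concentration underlying Proposition~\ref{prop:complexity} via Lemma~\ref{lem:geo} (which only gives a constant-probability bound, but is nonetheless sufficient since the target is an expected-error lower bound), and the scaling of~\eqref{eq:hard_scale}---is already in place, so no new technical machinery is required.
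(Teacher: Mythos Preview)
Your proposal is correct and follows essentially the same route as the paper: choose $M=\lfloor \log(\Delta/(9\eps))/(6\log(1/q))-1/3\rfloor$ (equivalently, the largest $M$ with $\Delta q^{2(3M+1)}\ge 9\eps$), verify via Property~3 of Proposition~\ref{prop:hard_scale_property} that condition~\eqref{eq:complexity_condition} holds, apply Proposition~\ref{prop:complexity}, and then estimate $\log(1/q)$ in the two regimes $\rho\ge\sqrt{2}$ and $\rho<\sqrt{2}$. The paper carries out the regime analysis with explicit elementary inequalities (e.g., $\log(1+x)\le x$ and $\sqrt{t+1}-1\ge t/3$ for $0<t\le 1$) rather than asymptotic $\Theta$-notation, but the argument is otherwise identical.
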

Recall that in \eqref{eq:hard_scale_property3}, $\min_{\vx \in \subspace_{k}} \hardf(\vx) -  \hardf(\vx_*)$ and
$\min_{\vx \in \subspace_k} \norm{ \vx - \vx_* }^2$ are both lower bounded by $q^{2k}$ multiplied with some constants.
If we want to find $\hat{\vx}$ such that $\E \norm{ \hat{\vx} - \vx_* }^2 < \eps$, the communication complexity is the same.
The proof of Theorem~\ref{thm:lower_formal} is deferred to Appendix~\ref{app:lower:pf_thm_lower}.

\subsection{Proof of Proposition~\ref{prop:hard_aveSS_except1}}\label{app:lower:pf_avess}

\begin{proof}
For convenience of notation, we omit the dependence of $\hardr_i$, $\hardr$, $\mB$ and $\vb_l$ on the parameters $\harddim$, $\hardzeta$ and $\hardc$.
With the definition of $\{ \hardr_i \}_{i=1}^n$ and $\hardr$, we have
\begin{equation}\label{eq:grad-r}
    \nabla (\hardr_i - \hardr) (\vx) = \begin{cases}
    - \sum_{l=1}^\harddim \vb_l \vb_l^\top \vx - (n-1) \ve_1, & i=1, \\
    n \sum_{l \in \gL_i} \vb_l \vb_l^\top \vx - \sum_{l=1}^\harddim \vb_l \vb_l^\top \vx + \ve_1, & i \neq 1.
    \end{cases}
\end{equation}
From the definition of $\vb_l$, we have 
\begin{equation}\label{eq:pf_avess_inner_b}
    \vb_l^\top \vb_l = \begin{cases}
        2,  & 1 \le l \le \harddim - 1, \\ \hardzeta^2, & l=m,
    \end{cases}
    \qquad
    \vb_l^\top \vb_{l+1} = 
    \begin{cases}
        -1, & 1 \le l \le \harddim-2, \\
        -\hardzeta, & l=\harddim-1,
    \end{cases}
\end{equation}
and $\vb_l^\top \vb_{l'} = 0$ for any $ |l - l'| \ge 2 $.
Since $n \ge 3$, this implies $ \vb_l^\top \vb_{l'} = 0$ for any $l ,l' \in \gL_i$ and $l \neq l'$.
Define $\vb_0 = \vb_{\harddim+1} = \vzero$ for ease of notation.
Let $\mA_i = \sum_{l \in \gL_i}\vb_l \vb_l^\top, \mA = \sum_{i=2}^n\mA_i = \sum_{l=1}^m \vb_l \vb_l^\top$.
Then for any $\vx, \vy \in \sR^\harddim$ and $\vu = \vx - \vy$, we have
\begin{eqnarray*}
    && \sum_{i=1}^n \norm{\nabla (\hardr_i - \hardr) (\vx) - \nabla ( \hardr_i - \hardr ) (\vy) }^2 \stackrel{\eqref{eq:grad-r}}{=}
    \norm{\mA\vu}^2 + \sum_{i=2}^n \norm{n\mA_i\vu-\mA\vu}^2 \\
    &=& \norm{\mA\vu}^2 + \sum_{i=2}^n n^2\norm{\mA_i\vu}^2-2n \sum_{i=2}^n(\mA_i\vu)^\top\mA\vu +(n-1)\norm{\mA\vu}^2 = n^2\sum_{i=2}^n \norm{\mA_i\vu}^2 -n\norm{\mA\vu}^2. 
\end{eqnarray*}
Note that by Eq.~\eqref{eq:pf_avess_inner_b}, $\norm{\mA_i\vu}^2 = \norm{\sum_{l \in \gL_i} \vb_l \vb_l^\top\vu}^2 = \sum_{l \in \gL_i} 
(\vb_l^\top \vu)^2\vb_l^\top\vb_l$ and 
\begin{align*}
    \norm{\mA\vu}^2 &= \norm{\sum_{l=1}^m \vb_l \vb_l^\top\vu}^2 = \sum_{l=1}^m
    (\vb_l^\top \vu)^2\vb_l^\top\vb_l + (\vb_l^\top \vu)(\vb_{l+1}^\top \vu)\vb_{l+1}^\top\vb_l + (\vb_l^\top \vu)(\vb_{l-1}^\top \vu)\vb_{l-1}^\top\vb_l \\
    &= \sum_{l=1}^m (\vb_l^\top \vu)^2\vb_l^\top\vb_l + 2(\vb_l^\top \vu)(\vb_{l+1}^\top \vu)\vb_{l+1}^\top\vb_l,
\end{align*}
where the final equality uses $\vb_0 = \vb_{\harddim+1} = \vzero$.
Hence, we get
\begin{eqnarray}
    && \frac{1}{n}\sum_{i=1}^n \norm{\nabla (\hardr_i - \hardr) (\vx) - \nabla ( \hardr_i - \hardr) (\vy)}^2 \nonumber \\
    &=& n\sum_{l =1}^m (\vb_l^\top \vu)^2\vb_l^\top\vb_l -
    \left[\sum_{l=1}^m (\vb_l^\top \vu)^2\vb_l^\top\vb_l + 2(\vb_l^\top \vu)(\vb_{l+1}^\top \vu)\vb_{l+1}^\top\vb_l \right] \nonumber \\
    &=& (n-1
    ) \sum_{l =1}^m (\vb_l^\top \vu)^2\vb_l^\top\vb_l - 2
    \sum_{l=1}^m (\vb_l^\top \vu)(\vb_{l+1}^\top \vu)\vb_{l+1}^\top\vb_l. \label{eq:pf_avess_tmp1}
\end{eqnarray}

Recall that $0 < \hardzeta \le \sqrt{2}$. Then \eqref{eq:pf_avess_inner_b} implies $ \vb_l^\top \vb_l \le 2$ and $|\vb_l^\top \vb_{l+1} | \le \sqrt{2}$ for any $l$.
Substituting these into \eqref{eq:pf_avess_tmp1} and
using Cauchy's inequality, we have
\begin{eqnarray*}
    \frac{1}{n} \sum_{i=1}^n \norm{ \nabla (\hardr_i {-} \hardr) (\vx) {-} \nabla ( \hardr_i {-} \hardr) (\vy) }^2 &\le& 2 (n-1) \sum_{l=1}^\harddim  (\vb_l^\top \vu)^2  
    + \sqrt{2} \sum_{l=1}^\harddim  \left[ (\vb_l^\top \vu)^2 + (\vb_{l+1}^\top \vu)^2 \right] \\
    &\le& \left(2n + 2 \sqrt{2} - 2\right) \sum_{l=1}^\harddim (\vb_l^\top \vu)^2 \leq (2n+1)\sum_{l=1}^\harddim (\vb_l^\top \vu)^2.
\end{eqnarray*}
Notice that 
$(\vb_l^\top \vu)^2 = (u_l - u_{l+1})^2 \le 2 (u_l^2 + u_{l+1}^2) $ for $1 \le l \le \harddim - 1$ and $(\vb_\harddim^\top \vu)^2 = (\hardzeta u_\harddim)^2 \le 2 u_\harddim^2$.
This implies
\begin{align*}
    \frac{1}{n} \sum_{i=1}^n \norm{ \nabla (\hardr_i - \hardr) (\vx) - \nabla ( \hardr_i - \hardr ) (\vy) }^2 \le 4(2n+1) \norm{\vu}^2 = (8 n + 4) \norm{\vx - \vy }^2.
\end{align*}
As a result, 
$\hardr_i$'s satisfy
$\sqrt{8n + 4}$-AveSS.
\end{proof}

\subsection{Proof of Lemma~\ref{lem:hard_expand}}
\label{app:lower:pf_expand}

\begin{proof}
For convenience of notation, we omit the dependence of $\hardr_i$, $\hardr$, $\mB$ and $\vb_l$ on the parameters $\harddim$, $\hardzeta$ and $\hardc$.

1) First, we focus on the gradient of the $r_i$.
Recall that
\begin{align*}
     & \hardr_i(\vx) {=}  
    \begin{cases}
        \frac{\hardc}{2} \norm{\vx}^2
        - n \dotprod{\ve_1, \vx},
        & \text{for } i = 1, \\
        \frac{n}{2} \sum\limits_{l \in \gL_i}  \vx^\top \vb_l \vb_{l} ^{\top} \vx 
        + \frac{\hardc}{2} \norm{\vx}^2,
        & \text{for } i \neq 1.
    \end{cases}
\Rightarrow
     \nabla \hardr_i(\vx) {=}  
    \begin{cases}
        \hardc \vx
        - n \ve_1,
        & \text{for } i = 1, \\
        n \sum\limits_{l \in \gL_i}  \vb_l \vb_{l} ^{\top} \vx 
        + \hardc \vx,
        & \text{for } i \neq 1.
    \end{cases}
\end{align*}
\rom{1}: If $\vx \in \subspace_0$, i.e., $\vx = \vzero$, 
we have $\nabla \hardr_1 (\vzero) = -n \ve_1 \in \subspace_1$ and $ \nabla \hardr_i (\vzero) = \vzero$ for $i \neq 1$.
\rom{2}: If $\vx = (x_1,\dots,x_{m}) \in \subspace_k$ for $1 \le k < \harddim$, we have $\nabla \hardr_1 (\vx) = \hardc \vx - n \ve_1 \in \subspace_k$.
As for $i \neq 1$, we need to examine $\vb_l \vb_l^\top \vx$.
One can check
\begin{align*}
    \vb_l \vb_l^{\top} \vx = 
    \begin{cases}
    (x_{l} - x_{l+1}) (\ve_{l} - \ve_{l+1}), & 1 \le l \le \harddim-1, \\
    \hardzeta^2 x_\harddim \ve_\harddim, &l = \harddim.
    \end{cases}
\end{align*}
For $\vx \in \subspace_k$, we have 
\begin{align}\label{eq:proof_expand_bbl}
    \vb_l \vb_l^{\top} \vx \in
    \begin{cases}
    \subspace_{k}, &l \neq k, \\
    \subspace_{k+1}, &l = k.
    \end{cases}
\end{align}
As a result, if $k \in \gL_i$, then $\nabla \hardr_i (\vx) \in \subspace_{k+1}$; otherwise, $\nabla \hardr_i (\vx) \in \subspace_k$.

2) Now we turn to the proximal operator.
\rom{1} For $i = 1$, it is easy to verify $\prox_{\hardr_1}^\gamma (\vx) = (1 / \gamma + \hardc)^{-1} (\vx / \gamma + n \ve_1)$.
Thus, if $\vx \in \subspace_0$, $\prox_{\hardr_1}^\gamma (\vx) \in \subspace_1$; if $\vx \in \subspace_k$ for $k \ge 1$, $\prox_{\hardr_1}^\gamma (\vx) \in \subspace_k$.
\rom{2} For $i \neq 1$, we define $\vu_i := \prox_{\hardr_i}^\gamma (\vx)$ for simplicity. 
Then $\vu_i$ satisfies the following equation
\begin{align*}
    \left[n\gamma \mB_i^{\top} \mB_i + \left( \hardc \gamma + 1\right)\mI  \right] \vu_i = \vx, \ \mB_i := \sum_{l \in \gL_i} \ve_l \vb_l^\top.
\end{align*}
Note that $\mB_i^\top \mB_i = \sum_{l \in \gL_i} \vb_l \vb_l^\top$. 
By the Sherman-Morrison-Woodbury formula, we get
\[ \left(\mI + \tilde{c} \mB_i^{\top} \mB_i \right)^{-1} = \mI - \mB_i^{\top} \left( \frac{1}{\tilde{c}}\mI + \mB_i \mB_i^{\top} \right)^{-1} \mB_i, \forall \tilde{c} \neq 0. \]
In the proof of Proposition~\ref{prop:hard_aveSS_except1}, we have shown that $\vb_{l}^{\top} \vb_{l'} = 0$ for any $|l - l'| \ge 2$ and consequently $\vb_l^\top \vb_{l'} = 0$ for any $l , l' \in \gL_i$ and $l \neq l'$.
Thus, $\mB_i \mB_i^{\top} = \sum_{l \in \gL_i} \vb_l^\top \vb_l \ve_l \ve_l^\top$ is a diagonal matrix. Then we can denote ${\mD}_i = \left( \frac{\hardc \gamma + 1}{n\gamma}\mI + \mB_i \mB_i^\top \right)^{-1} = \sum_{l=1}^\harddim d_{i,l} \ve_l \ve_l^\top$ and obtain
\begin{align*}
    \vu_i & = \left[n\gamma \mB_i^{\top} \mB_i + \left( \hardc \gamma + 1\right)\mI \right]^{-1} \vx    
    = \frac{1}{ \hardc \gamma + 1} \left( \mI + \frac{n\gamma}{\hardc \gamma + 1} \mB_i^{\top} \mB_i \right)^{-1} \vx = \frac{\vx-\mB_i^\top {\mD}_i \mB_i \vx}{\hardc \gamma + 1}.
\end{align*}
Then we have $\mB_i^\top \mD_i \mB_i \vx = \sum_{l \in \gL_i} d_{i,l} \vb_l \vb_l^\top \vx$.
For $\vx \in \subspace_k$, by \eqref{eq:proof_expand_bbl}, if $k \in \gL_i$, then $\vu_i \in \subspace_{k+1}$; otherwise, $\vu_i \in \subspace_k$. This completes the proof.
\end{proof}

\subsection{Proof of Lemma~\ref{lem:hard_expand_infor}}\label{app:lower:pf_expand_infor}

\begin{proof}
Since we can assume $\vx_0 = \vzero$,
Lemma~\ref{lem:hard_expand} implies $\nabla r_1 \in \subspace_1$.
Then from step~\ref{item:lower_alg_init}, we have $\gI_0 = \subspace_1$\footnote{In the definition of the information set, each $f_i$ is replaced by $r_i$ here.}.

Then we focus on the second claim and
examine how many dimensions of the information set we can increase after a round of communication.

Since we choose node~$1$ as the master node, we have access to $\nabla r_1$ and $\prox_{r_1}^\gamma$ in each communication round.
Recall that we set $\gL_1$ as the empty set.
Lemma~\ref{lem:hard_expand} guarantees that the information provided by $r_1$ can never expand the information set unless the information set only contains $\vzero$.
Thus, \eqref{eq:lower_alg_subprob} does not affect the information set, i.e., $\gI_{t+1}^1 = \gI_{t+1}^0$ for any $t \ge 0$.

When $a_t = 0$, only \eqref{eq:lower_alg_infor0} can expand the information set. By Lemma~\ref{lem:hard_expand}, if $i_t$ satisfies $k \in \gL_{i_t}$, we have $ \gI_{t+1}^1 = \gI_{t+1}^0 \subseteq \subspace_{k+1}$. Otherwise, we still have $\gI_{t+1}^1 = \gI_{t+1}^0  \subseteq \subspace_k$.
Then step~\ref{item:lower_alg_bernoulli} in Definition~\ref{defn:lower_alg} implies $\gI_{t+1} = \gI_{t+1}^1$.

When $a_t = 1$, from the above analysis, we always have $\gI_{t+1} \subseteq \subspace_{k+1}$.
By Lemma~\ref{lem:hard_expand}, \eqref{eq:lower_alg_before_fullgrad} could expand the information set by at most one dimension. It follows that $\tilde{\vy}_{t+1} \in \subspace_{k+2}$.
Using Lemma~\ref{lem:hard_expand} again yields $\gI_{t+1} \subseteq \subspace_{k+3}$.
\end{proof}

\subsection{Proofs of Corollary~\ref{coro:lower_stopping_time}}\label{app:lower:pf_stopping_time}

\begin{proof}
We prove the first claim by induction on $t$.
That is to say, we prove that for any integer $t \ge -1$, $\gI_{t+1} \subseteq \subspace_{3k - 2}$ for any $k$ satisfying $t < T_k$.
Define $k(t)$ as the positive integer such that $T_{k(t)-1} \le t < T_{k(t)}$.
From the monotonicity of $\subspace_{\cdot}$, it suffices to prove $\gI_{t+1} \subseteq \subspace_{3k(t)-2} $.

By Lemma~\ref{lem:hard_expand}, we have $\gI_0 = \subspace_1$ and $k(-1)=1$. The claim holds for $t=-1$.
Suppose that $\gI_{t+1} \subseteq \subspace_{3k(t)-2} $.
If $3k(t) - 2 \in \gL_{i_{t+1}}$ or $a_{t+1} = 1$, Lemma~\ref{lem:hard_expand_infor}  together with \eqref{eq:lower_stopping_time} implies
$k(t+1) = k(t) + 1$ and
$\gI_{t+2} \subseteq \subspace_{3 k(t) + 1} = \subspace_{ 3 k(t+1) - 1 }$.
Otherwise, we still have $k(t+1) = k(t)$ and $\gI_{t+2} \subseteq \subspace_{3 k(t) - 1} = \subspace_{ 3 k(t+1) - 1 }$.

For the second claim, the independence of $\{Y_k\}_{k \ge 1}$ is natural consequence of the independence of $\{(i_t, a_t)\}_{t\ge 1}$. 

For the last one, note that $3k - 2 \in \gL_{i_t}$ is equivalent to $i_t \equiv 3k {-} 1 (\bmod~(n{-}1) ) $ for $2 \le i_t \le n$. Then we have for $k' \equiv 3k{-}1 (\bmod ~(n{-}1)), 2 \le k' \le n$,
\begin{eqnarray*}
    \sP ({T_{k} - T_{k-1} = s} ) &=& \sP( i_{T_{k-1} + 1} \neq k', \dots, i_{T_{k-1} + s - 1} \neq k', a_{T_{k-1} + 1} = \cdots = a_{T_{k-1} + s - 1} = 0, \\
    && \quad i_{T_{k-1} + s} = k' \mbox{ or } a_{T_{k-1} + s} = 1) \\
    &\stackrel{\rom{1}}{=}& [(1 - q_{k'})(1-p)]^{s-1} \left[1-(1-q_{k'})(1-p)\right],
\end{eqnarray*}
where $\rom{1}$ is due to the independence of $\{ (i_t, a_t) \}_{t \ge 1}$. So $Y_k = T_k - T_{k-1}$ is a geometric random variable with success probability $1-(1-q_{k'})(1-p) = q_{k'} + p - q_{k'}p$. 
\end{proof}

\subsection{Proof of Proposition~\ref{prop:complexity}}\label{app:lower:pf_complexity}

\begin{proof}
By Corollary~\ref{coro:lower_stopping_time},
if $N-1 < T_{M+1}$, then $\gI_N \subseteq \subspace_{3M+1} \subseteq \subspace_{\harddim - 1}$.
Thus we have
\begin{align*}
    &\E \min_{\vx \in \gI_N} \hardr(\vx) - \min_{\vx \in \sR^\harddim} \hardr(\vx) \ge \E \left[ \min_{\vx \in \gI_N} \hardr(\vx) - \min_{\vx \in \sR^\harddim} \hardr(\vx) \bigg\vert N-1 < T_{M+1}\right] \sP( N-1 < T_{M+1}) \\
    \geq& \E \left[ \min_{\vx \in \subspace_{3M+1}} \hardr(\vx) - \min_{\vx \in \sR^\harddim} \hardr(\vx) \bigg\vert N-1 < T_{M+1}\right] \sP( N-1 < T_{M+1}) \geq 9 \eps\, \sP(N-1 < T_{M+1}).
\end{align*}
By Corollary~\ref{coro:lower_stopping_time} again, $T_{M+1}$ can be written as $T_{M+1} = \sum_{l=1}^{M+1} Y_l$, where $\{Y_l\}_{1 \le l \le M+1}$ are independent random variables, and $Y_l \sim \geo (\tilde{q}_l )$ with $\tilde{q}_l = q_{l'} + p - p q_{l'}$, $l' \equiv 3l{-}1 (\bmod ~(n{-}1))$ and $2 \le l' \le n$.
Moreover, Definition~\ref{defn:lower_alg} guarantees
$\max_{2 \le l' \le n} q_l' \le q_0 / n $ and $p \le c_0 / n$.
Then we have $\sum_{l=1}^{M+1} \tilde{q}_l \le (q_0 + c_0) (M+1) / n$.
Therefore, by Lemma~\ref{lem:geo}, we have
$$\sP ( T_{M+1} > N-1 ) = \sP \left(\sum_{l=1}^{M+1} Y_l > \frac{n(M+1)}{4 (q_0 + c_0) } \right)
\ge \sP \left( \sum_{l=1}^{M+1} Y_l > \frac{ (M+1)^2 }{ 4 \sum_{l=1}^{M+1} \tilde{q}_l } \right)
\ge \frac{1}{9},$$
which implies our desired result.
\end{proof}

\subsection{Proof of Proposition~\ref{prop:hard_scale_property}}\label{app:lower:pf_hard_property}

\begin{proof}
\textbf{Property~\ref{prop:hard_scale_property1}.}
By Proposition~\ref{prop:hard_aveSS_except1}, we have $\hardf$ is $ \lambda \hardc / \beta^2$-strongly convex and $\hardf_i$'s satisfy
$\lambda \sqrt{8n + 4} / \beta^2$-AveSS. One can check $\lambda \hardc / \beta^2 = \mu$ and $\lambda \sqrt{8n + 4} / \beta^2 = \delta $.

\textbf{Property~\ref{prop:hard_scale_property2}.}
Let $\xi := \lambda / \beta^2 = \mu (\rho^2 - 1) / 4$. We have
\begin{align*}
    \hardf(\vx) 
    = \frac{\xi}{2} \vx^\top \mA(\harddim, \hardzeta)\, \vx + \frac{ \mu}{2} \norm{\vx}^2 - \xi \beta \dotprod{\ve_1,\vx}.
\end{align*}
Letting $ \nabla \hardf (\vx) = \vzero $ yields $\left( \xi \mA \left(\harddim, \hardzeta \right) + \mu \mI \right)\vx = \xi \beta \ve_1$, or equivalently.
\begin{align}\label{proof:strongly:minimizer0}
    \begin{bmatrix}
        1 + \frac{\mu}{\xi} & -1 & & & \\
        -1 & 2 + \frac{\mu}{\xi} & -1 & & \\
        & \ddots & \ddots & \ddots & \\
        & & -1 & 2 + \frac{\mu}{\xi} & -1 \\
        & & & -1 & \hardzeta^2 + 1 + \frac{\mu}{\xi}
    \end{bmatrix}
    \vx = 
    \begin{bmatrix}
        \beta \\
        0 \\
        \vdots \\
        0 \\
        0
    \end{bmatrix}.
\end{align}
Since $q = \frac{\rho - 1}{\rho + 1}$, we get $2 + \frac{\mu}{\xi} = \frac{2\rho^2+2}{\rho^2 - 1}=q+\frac{1}{q}$ and $\hardzeta^2 + 1 + \frac{\mu}{\xi} = \frac{\rho + 1}{\rho - 1} = \frac{1}{q}$. 
We solve \eqref{proof:strongly:minimizer0} by
\[  x_{m-1} - \frac{x_m}{q} = 0, \ x_{k} - \left( q+\frac{1}{q} \right) x_{k+1} + x_{k+2} = 0, k \in [m-2], \ \left( q+\frac{1}{q} -1\right) x_1-x_2=\beta. \]
Thus, $\vx_{*} = \frac{\beta}{1-q} (q, q^{2}, \dots, q^m)^{\top} $ and 
$ \hardf(\vx_*) = -\frac{\xi \beta \dotprod{\ve_1, \vx_*}}{2} 
= - \frac{ \xi \beta^2 q }{2(1-q)}
= - \frac{\lambda (\rho - 1)}{4} \stackrel{\eqref{eq:hard_scale}}{=} - \Delta$.

\textbf{Property~\ref{prop:hard_scale_property3}.}
If $\vx \in \subspace_k$, $1 \le k < m$, then $x_{k+1} = x_{k+2} = \cdots = x_{m} = 0$. 
Let $\vy$ denote the first $k$ coordinates of $\vx$ and $\mA_k$ denote the first $k$ rows and columns of $\mA(m,\hardzeta)$.
Then for any $\vx \in \subspace_k $, we can rewrite $\hardf (\vx)$ as
\begin{align*}
    \hat{\hardf} (\vy) := \hardf (\vx) = \frac{\xi}{2} \vy^{\top} \mA_k \vy + \frac{\mu}{2} \norm{\vx}^2 - 
    \xi \beta \dotprod{\hat{\ve}_1, \vy},
\end{align*}
where $\hat{\ve}_1$ is the first $k$ coordinates of $\ve_1$.
Let $\nabla f_k (\vy) = \vzero$,
that is
\begin{align*}
    \begin{bmatrix}
        1 + \frac{\mu}{\xi} & -1 & & & \\
        -1 & 2 + \frac{\mu}{\xi} & -1 & & \\
        & \ddots & \ddots & \ddots & \\
        & & -1 & 2 + \frac{\mu}{\xi} & -1 \\
        & & & -1 & 2 + \frac{\mu}{\xi}
    \end{bmatrix}
    \vy = \begin{bmatrix}
        \beta \\ 0 \\ \vdots \\ 0 \\ 0
    \end{bmatrix}.
\end{align*}
Similarly, we need to solve 
\[ x_{m-1} = \left(q + \frac{1}{q}\right)x_m, \ x_{k} - \left( q+\frac{1}{q} \right) x_{k+1} + x_{k+2} = 0, k \in [m-2], \ \left( q+\frac{1}{q} -1\right) x_1-x_2=\beta. \]
By some computation, one can check the solution is 
\begin{align*}
    {\vy}_* = \frac{\beta q^{k+1}}{2(q-1)(1+q^{2k+1}) } \left( q^{-k} - q^k, q^{-(k-1)} - q^{k-1}, \dots, q^{-1} - q^{1} \right)^{\top}.
\end{align*}
Thus, we have
\[ \hardf (\vy_*) = -\frac{\xi \beta \dotprod{\hat{\ve}_1, \vy_*}}{2} =
\frac{\xi \beta^2 q}{2(1-q)} \cdot \frac{1 - q^{2k}}{1 + q^{2k+1}} 
=\frac{ \lambda ( \rho - 1) }{4} \cdot \frac{1 - q^{2k}}{1 + q^{2k+1}} 
= \frac{(1 - q^{2k})\Delta }{1 + q^{2k+1}}, \]
and by $q<1$, we further have that
\begin{align*}
    \min_{\vx \in  \subspace_k} \hardf (\vx) {-} \min_{\vx \in \sR^\harddim} \hardf (\vx) 
    = \hardf (\vy_*) {-} \hardf (\vx_*) = \Delta \left(1 - \frac{1-q^{2k}}{1+q^{2k+1}} \right) 
    = \frac{(1+q)q^{2k}\Delta}{1+q^{2k+1}} \ge \Delta q^{2k}.
\end{align*}
Moreover, recall that $\vx_* = \frac{\beta (\rho + 1) }{2} (q, q^{2}, \dots, q^m)^{\top}$. Then we have
\begin{align*}
    \min_{\vx \in \subspace_k} \norm{\vx - \vx^\star}^2
    = \frac{ \beta^2 (\rho + 1)^2 }{4} \sum_{i = k+1}^m q^{2i}
    \ge \frac{\beta^2 (\rho + 1)^2 q^2 }{4} q^{2k}
    = \frac{4 \Delta}{ \mu (\rho + 1) } q^{2k}.
\end{align*}
This completes the proof.
\end{proof}

\subsection{Proof of Theorem \ref{thm:lower_formal}}\label{app:lower:pf_thm_lower}

\begin{proof}
Let $q = \frac{\rho - 1}{ \rho + 1}$ and $M = \left\lfloor  \frac{\log ( \Delta / 9\eps )}{6\log 1/ q} - \frac{1}{3} \right\rfloor$.
From the condition on $\eps$ and the definition of $\harddim$, one can check $0 \le M \le (m-2) / 3$ and $m \ge 3$.
Moreover, we have $3M + 1 \le \frac{\log (9\eps/\Delta)}{2 \log q}$.
Then by Proposition~\ref{prop:complexity}, after $N = \frac{n (M+1)}{4 (q_0 + c_0) } + 1$ rounds of communication, the information set satisfies $\gI_N \subseteq \subspace_{3M+1} \subseteq \subspace_{\harddim - 1}$.
The third property of Propostion~\ref{prop:hard_scale_property} implies
\begin{align*}
    \min_{\vx \in \subspace_{3M+1}} \hardf (\vx) - \min_{\vx \in \sR^\harddim} f(\vx) \ge \Delta q^{6M+2} \ge 9 \eps.
\end{align*}
Then by Proposition~\ref{prop:complexity} again, in order to find $\hat\vx$ such that $\E \hardf(\hat\vx) - \min_{\vx \in \sR^\harddim} \hardf(\vx) < \eps$, the algorithm $\gA$ needs at least $N$ communication rounds. 

Now we give a lower bound $N$.
According to whether $ 2\delta / \mu $ is larger than $\sqrt{2n+1}$, we divide the analysis into two cases.

\textbf{Case~1: $ 2 \delta / \mu \ge \sqrt{2n+1}$.}
Then $\rho \geq \sqrt{2}$. By inequality $0 < \log(1+x) \leq x, \forall x>0$, we get
\begin{align*}
    \frac{1}{\log \frac{1}{q}} =  \frac{1}{ \log \left(1+\frac{2}{\rho-1}\right) } \ge \frac{1}{\frac{2}{\rho-1}} = \frac{\rho-1}{2} = \frac{1}{2}\left(\sqrt{\frac{2 \delta/\mu}{\sqrt{2n+1}}+1}-1\right) \geq \frac{\sqrt{2\delta/\mu}}{6\sqrt[4]{2n+1}},
\end{align*}
where the final inequality uses $\sqrt{t+1} -1 \geq \sqrt{t}/3, \forall t \geq 1$.
Moreover, the condition on $\eps$ implies 
$\log \frac{\Delta }{9 \eps} \ge 3 \log \frac{1}{q}$.
Then we have 
\begin{align*}
    M+1 \ge \frac{ \log \frac{\Delta}{9 \eps} }{6 \log \frac{1}{q} } - \frac{1}{3}
    \ge \frac{ \log \frac{\Delta}{9 \eps} }{18 \log \frac{1}{q} } 
    \ge \frac{ \sqrt{2\delta / \mu} }{ 108\sqrt[4]{2n+1} } \log \frac{\Delta}{9 \eps} = \Omega \left( \frac{\sqrt{\delta / \mu} }{ n^{1/4} } \log \frac{1}{\eps} \right).
\end{align*}
It follows that $N = \frac{n (M+1)}{4 (q_0 + c_0)} + 1 = \Omega \left( n^{3/4} \sqrt{\delta / \mu} \log(1 / \eps) \right)$.
The total communication cost in expectation is of the order $\Theta(n + N) = \Omega \left(n + n^{3/4} \sqrt{\delta / \mu} \log(1 / \eps) \right)$.

\textbf{Case~2: $2 \delta / \mu < \sqrt{2n+1}$.}
In this case, we have $1 \leq \rho < \sqrt{2}$ and consequently
\[ \log \frac{1}{q} = \log \left(1+\frac{2}{\rho-1}\right) \stackrel{\rom{1}}{\leq} \log\left(1+\frac{3\sqrt{2n+1}}{\delta/\mu}\right) \leq \log \left(\frac{7\mu\sqrt{2n+1}}{2\delta}\right) \leq 2 + \log \left(\frac{\mu\sqrt{2n+1}}{2\delta}\right), \]
where $\rom{1}$ uses $\sqrt{t+1} -1 \geq t/3, \forall 0 < t \leq 1$ and $\rho = \sqrt{\frac{2\delta/\mu}{\sqrt{2n+1}}+1}$,
Moreover, the condition on $\eps$ implies 
$\log \frac{\Delta }{9 \eps} \ge 3 \log \frac{1}{q}$.
Then we have
\begin{align*}
    M+1 \ge \frac{ \log \frac{\Delta}{9 \eps} }{6 \log \frac{1}{q} } - \frac{1}{3}
    \ge \frac{ \log \frac{\Delta}{9 \eps} }{18 \log \frac{1}{q} }
    = \Omega \left( \frac{\log (1 / \eps) }{ 1 + ( \log(\mu \sqrt{n} / \delta) )_+ } \right).
\end{align*}\
where $(a)_+$ denote $\max \{ a, 0 \}$.
It follows that $N = \frac{n (M+1)}{4 (q_0 + c_0)} + 1 = \Omega \left(  \frac{n \log(1 / \eps) }{1 + ( \log(\mu \sqrt{n} / \delta) )_+} \right)$.
The total communication cost in expectation is of the order $\Theta(n + N) = \Omega \left( n +  \frac{n \log(1 / \eps) }{1 + ( \log(\mu \sqrt{n} / \delta) )_+} \right)$.
\end{proof}

\section{Experiment Details}\label{app:exp}
We show some detail of our numerical experiments in this section.
The computation of problem-dependent parameters is defined as follows.
Since the objective is 
\begin{equation*}
    f(\vx) = \frac{1}{n}\sum_{i=1}^n \left[f_i(\vx):= \frac{1}{m} \sum_{j=1}^{m}\left(\vz_{i,j}^\top\vx-y_{i,j}\right)^2+\frac{\mu}{2}\norm{\vx}^2\right],
\end{equation*}
Let $\mZ_i = \left(\vz_{i,1}, \cdots, \vz_{i,m} \right)/\sqrt{m/2}, \vy_i = (y_{i,1}, \dots, y_{i,m})^\top/\sqrt{m/2}$. We reformulate $f_i, i \in [n]$ into
\[ f_i(\vx) = \frac{1}{2}\norm{\mZ_i^\top\vx-\vy_{i}}^2+\frac{\mu}{2}\norm{\vx}^2, \nabla^2 f_i(\vx) = \mZ_i\mZ_i^\top+\mu\mI_d. \]
Thus, we obtain the smoothness of each $f_i$ is
$L_i = \norm{\mZ_i}^2 + \mu$, and $ L := \max_{i \in [n]} L_i$.
Obviously, $f$ is $\mu$-strongly convex.

For the synthetic data, we first generate a random symmetric matrix $\mZ_0 \in \sR^{d\times d}$ with $d=100$ and $\norm{\mZ_0}=3000$, then we add a perturbed symmetric matrix $\mN_i, \forall i \in [n]$ with $n=400, \norm{\mN_i} \approx 30$ to obtain $\mZ_i=\mZ_0+\mN_i$.
We also add a correction $\lambda_{\min}(\mZ_i)\mI_d$ to $\mZ_i$ to further make $\mZ_i \succeq 0$.
Finally, we recompute the center matrix $\mZ = \sum_{i=1}^n \mZ_i/n$ and $\delta$-average similarity coefficient following AveHS in Eq.~\eqref{eq:hes-ss} as 
\[ \delta = \sqrt{\frac{1}{n}\sum_{i=1}^n\norm{\mZ_i-\mZ}^2}. \]

We use the analytic solution obtained by the proximal step since
\[ \prox_{f_1}^{\theta}(\vx_0) := \argmin_{\vx \in\sR^d} f_1(\vx) + \frac{1}{2\theta} \norm{\vx - \vx_0}^2 = \left[\mZ_1\mZ_1^\top+\left(\mu+\frac{1}{\theta}\right)\mI_d\right]^{-1}\left(\mZ_1\vy +\frac{\vx_0}{\theta}\right).
\]
For Katyusha X \cite[Fact 4.2]{allen2018katyusha}, and AccSVRS (Thm \ref{thm:accsvrs-rate}), we scale the interpolation coefficient $\tau = s\tau_0$ with $s \in \{0.5, 1,2,5,10\}$ and $\tau_0$ is the theoretical value. The finally used scaling $s$ is shown in Table~\ref{table:s}.
The initial points of all methods are the same, which are sampled from $\mathrm{Unif}(\gS^{d-1})$.

We also run the real data `a9a' from LIBSVM library \cite{chang2011libsvm}, where we split it into $n=50$ datasets with the data size $m=600$. 
The results are shown in Figure \ref{fig:exp2}, and we can observe similar behavior of our methods. 

\begin{table}[t]
\centering
\caption{The choices of interpolation $\tau=s\tau_0$ in experiments.}
    \begin{tabular}{|c|c|c|c|c|c|c|c|}
        \hline
        & & \multicolumn{3}{c|}{Katyusha X} & \multicolumn{3}{c|}{AccSVRS} \\
        \hline
        \multirow{2}{*}{Synthetic data} & $\mu$ & 1 & 0.1 & 0.01 & 1 & 0.1 & 0.01 \\
        \cline{2-8}
        & s & 1 & 2 & 5 & 2 & 5 & 10 \\
        \hline 
        \multirow{2}{*}{Real data} & $\mu$ & 0.1 & 0.01 & 0.001 & 0.1 & 0.01 & 0.001 \\
        \cline{2-8}
        & s & 1 & 1 & 2 & 2 & 0.5 & 0.5 \\
        \hline
    \end{tabular}   
    \label{table:s}
\end{table}

\begin{figure}[t]
    \centering
    \begin{subfigure}[b]{0.33\textwidth}
        \includegraphics[width=\linewidth]{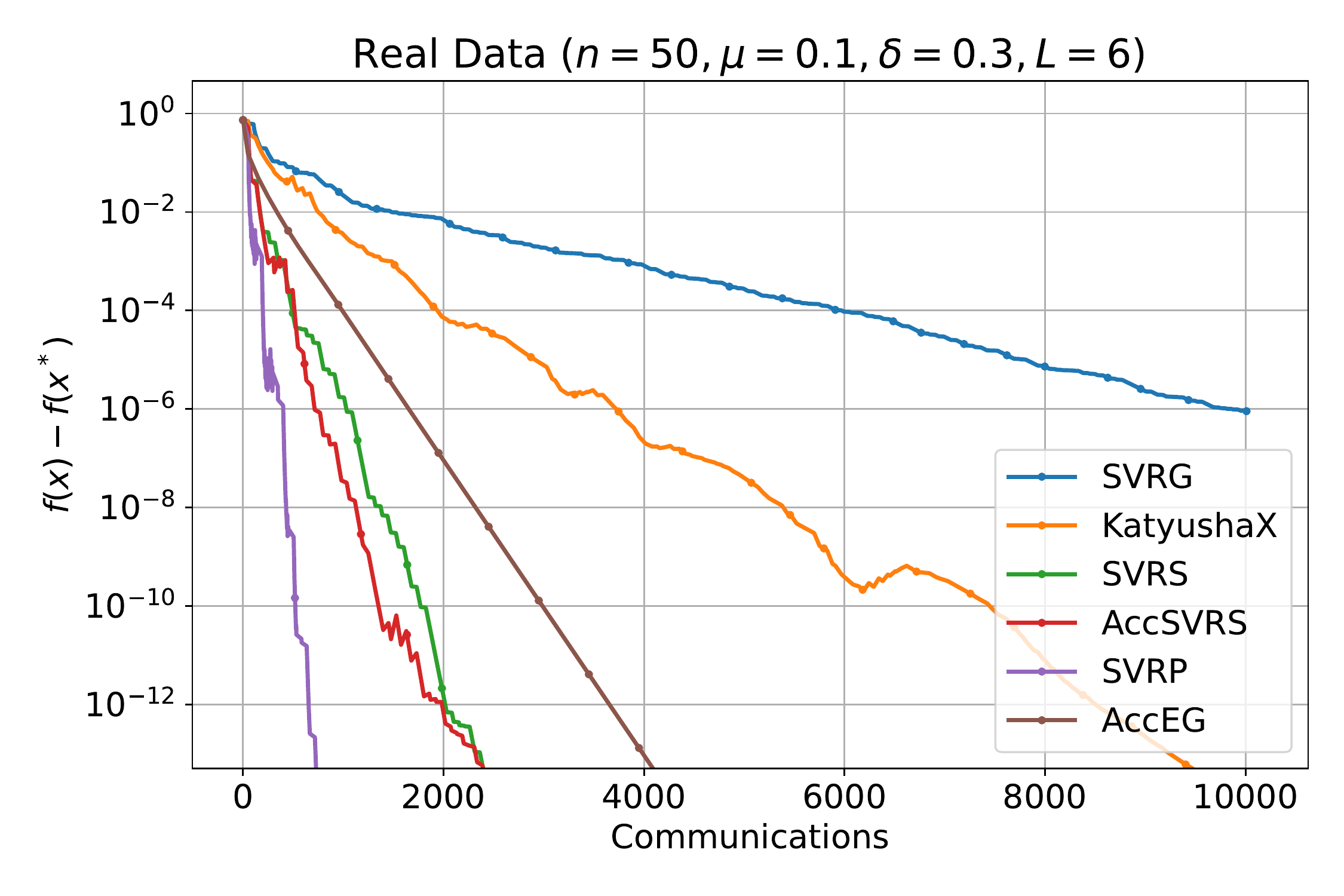}
    \end{subfigure}
    \hspace{-5pt}
    \begin{subfigure}[b]{0.33\textwidth}
        \includegraphics[width=\linewidth]{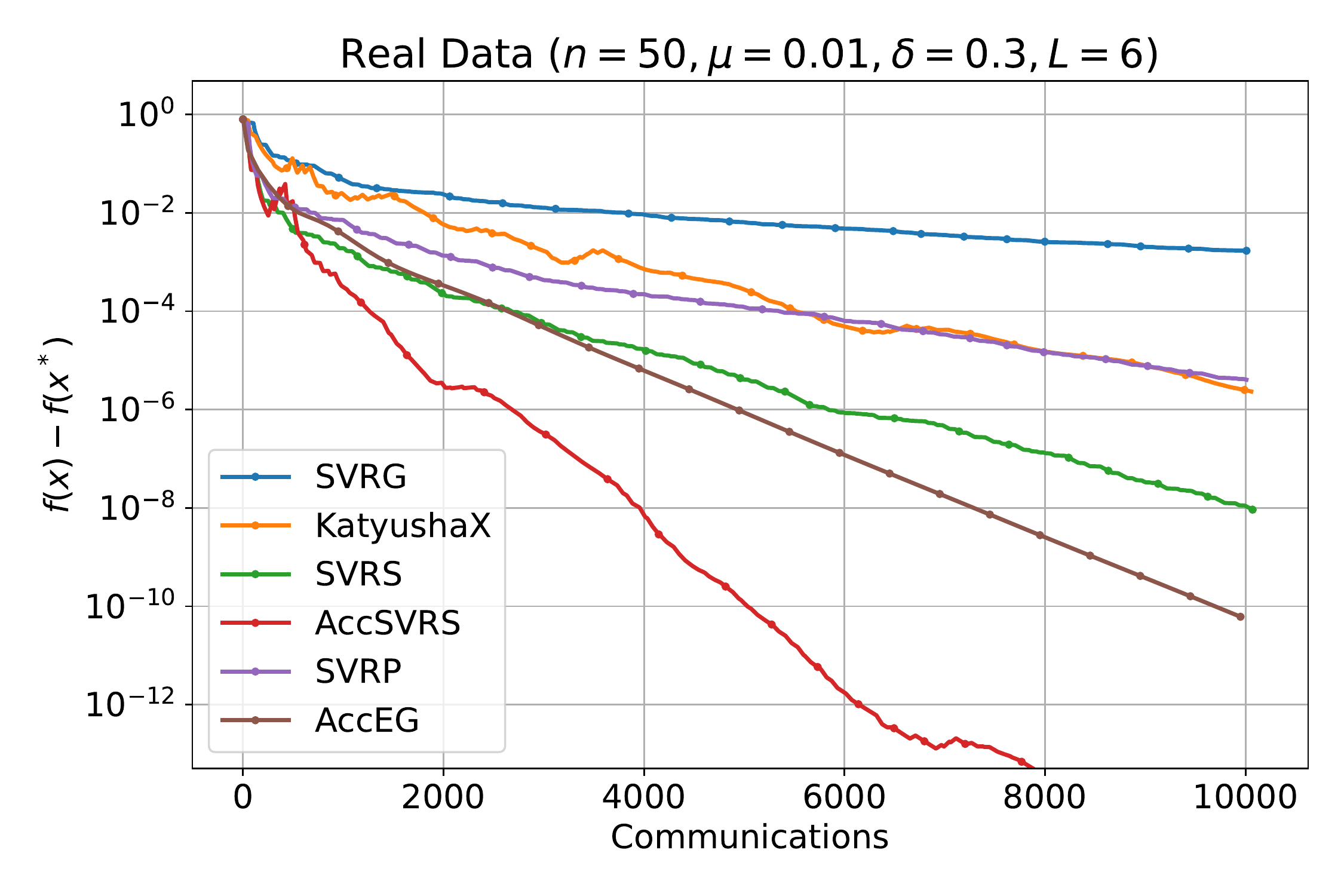}
    \end{subfigure}	
    \hspace{-5pt}
    \begin{subfigure}[b]{0.33\textwidth}
        \includegraphics[width=\linewidth]{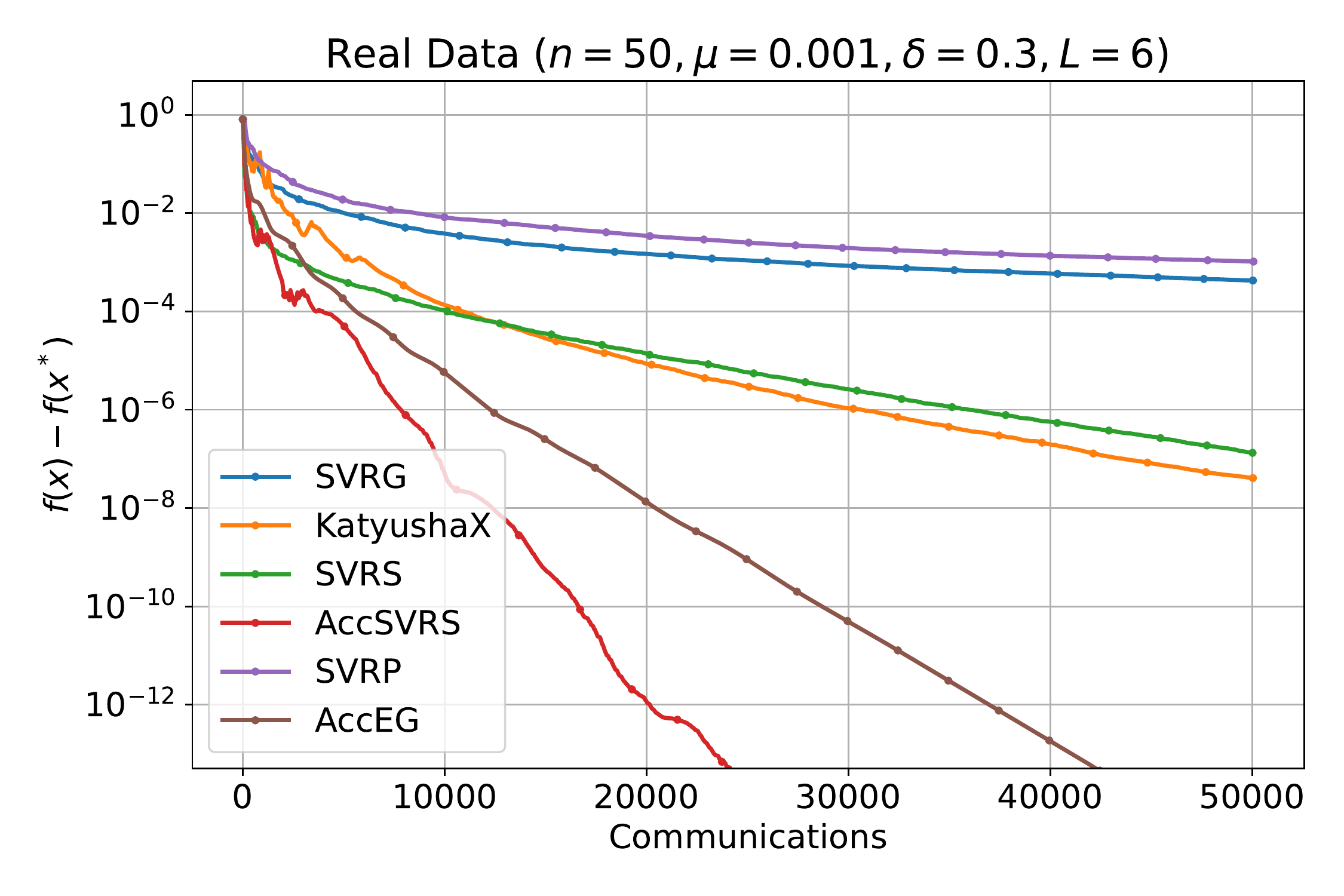}
    \end{subfigure}
    \caption{Numerical experiments on real data. The corresponding coefficients are shown in the title of each graph. We plot the function gap on a log scale versus the number of communication steps, where one exchange of vectors counts as a communication step.}
    \label{fig:exp2}
\end{figure}

\end{document}